\documentclass{article}

\usepackage{microtype}
\usepackage{graphicx}
\usepackage{subcaption}
\usepackage{booktabs} 
\makeatletter
\newcommand*{\rom}[1]{\expandafter\@slowromancap\romannumeral #1@}
\makeatother
\usepackage{hyperref}

\usepackage[accepted]{icml2026}

\usepackage{amsmath}
\usepackage{amssymb}
\usepackage{mathtools}
\usepackage{amsthm}
\DeclareMathOperator\argmin{argmin}

\DeclareMathOperator\prox{Prox}

\DeclareMathOperator\sign{sign}

\DeclareMathOperator{\cappedL}{Cap\ell_1}

\DeclareMathOperator{\poly}{poly}
\DeclareMathOperator{\icnn}{NN}

\newcommand{\appCTwidth}{0.24\textwidth}
\usepackage[capitalize,noabbrev]{cleveref}

\theoremstyle{plain}
\newtheorem{theorem}{Theorem}[section]

\newtheorem{lemma}[theorem]{Lemma}

\theoremstyle{definition}

\newtheorem{assumption}{Assumption}
\theoremstyle{remark}
\newtheorem{remark}[theorem]{Remark}
\newtheorem*{assumption*}{Assumption}

\usepackage[textsize=tiny]{todonotes}

\icmltitlerunning{DC-LA: Difference-of-Convex Langevin Algorithm}

\begin{document}

\twocolumn[
  \icmltitle{DC-LA: Difference-of-Convex Langevin Algorithm}



  \icmlsetsymbol{equal}{*}

  \begin{icmlauthorlist}
    \icmlauthor{Hoang Phuc Hau Luu}{yyy}
    \icmlauthor{Zhongjian Wang}{yyy}
    
  \end{icmlauthorlist}

  \icmlaffiliation{yyy}{Division of Mathematical Sciences, School of Physical and Mathematical Sciences, Nanyang Technological University, Singapore}
  \icmlcorrespondingauthor{Zhongjian Wang}{zhongjian.wang@ntu.edu.sg}

  \icmlkeywords{non-log-concave sampling, Langevin sampling, distant dissipative, forward-backward sampler}

  \vskip 0.3in
]



\printAffiliationsAndNotice{}  

\begin{abstract}
We study a sampling problem whose target distribution is $\pi \propto \exp(-f-r)$ where the data fidelity term $f$ is Lipschitz smooth while the regularizer term $r=r_1-r_2$ is a non-smooth difference-of-convex (DC) function, i.e., $r_1,r_2$ are convex. By leveraging the DC structure of $r$, we can smooth out $r$ by applying Moreau envelopes to $r_1$ and $r_2$ separately. In line with DC programming, we then redistribute the concave part of the regularizer to the data fidelity and study its corresponding proximal Langevin algorithm (termed DC-LA). We establish convergence of DC-LA to the target distribution $\pi$, up to discretization and smoothing errors, in the $q$-Wasserstein distance for all $q \in \mathbb{N}^*$, under the assumption that $V$ is distant dissipative.
Our results improve previous work on non-log-concave sampling in terms of a more general framework and assumptions. Numerical experiments show that DC-LA produces accurate distributions in synthetic settings and provides qualitatively reasonable uncertainty quantification in a real-world Computed Tomography application.
\end{abstract}

\section{Introduction}
\label{sec:intro}
The sampling problem from a Gibbs distribution in $\mathbb{R}^d$ of the form $\pi \propto e^{-V}$ is fundamental in machine learning and Bayesian inference \citep{Sanz-Alonso_Stuart_Taeb_2023}. A conventional approach is via  the Langevin dynamics, under the assumption that $V$ is differentiable:
\begin{align}
\label{eq:Landiff}
    dX_t = - \nabla V(X_t)dt + \sqrt{2}dB_t
\end{align}
where $B_t$ denotes the standard Brownian motion in $\mathbb{R}^d$. Under suitable conditions on $V$, this stochastic differential equation admits $\pi$ as a unique invariant distribution, and the law of $X_t$ converges to $\pi$ as $t \to \infty$ \cite{roberts1996exponential}.
In practice, the stochastic dynamics \eqref{eq:Landiff} must be discretized for implementation. A simple Euler-Maruyama scheme results in the so-called Unadjusted Langevin Algorithm (ULA)
\begin{align*}
    X_{k+1} = X_k - \gamma \nabla V(X_k) + \sqrt{2\gamma}Z_{k+1}
\end{align*}
where $\gamma>0$ and $\{Z_{k}\}_k$ follows i.i.d. normal distribution.

When $V$ is (strongly) convex, the sampling problem falls within the class of log-concave sampling, and the properties of these dynamics are well studied; see, e.g., the recent note \cite{chewi2023log}. In contrast, in many practical applications \citep{PhysRevB.66.052301,cui2023scalable}, the target distribution may exhibit multi-peak, non-log-concave, non-log-differentiable behaviors. Beyond convexity, distant dissipativity (also known as \emph{strong convexity at infinity}) provides a key structural condition under which convergence of these dynamics can still be established \cite{eberle2016reflection,de2019convergence,johnston2025performance}. While the ULA is simple and easy to implement, it is not universally well-suited to all problems, particularly when the potential $V$ is non-differentiable. For composite potentials with a nonsmooth component, Moreau (or Moreau-Yosida) smoothing leads to MYULA-type algorithms, which replace the nonsmooth term by its Moreau envelope \citep{durmus2018efficient}; more recently, \citet{habring2026diffusion} proposed a successive Moreau-envelope Langevin scheme that samples from a sequence of smoothed distributions approaching the target, inspired by diffusion models. However, when the nonsmooth component is non-weakly-convex, directly applying a Moreau envelope to the whole term can be difficult to analyze or ill-behaved. In this work, we consider nonconvex and nonsmooth potentials $V$ that satisfy a distant dissipativity condition and admit the decomposition 
\begin{align}
\label{eq:V}
V = f + r = f+(r_1 - r_2)    
\end{align}
where $f$ is $L$-smooth and $r_1,r_2$ are real-valued convex functions. The function $r$ is called a difference-of-convex (DC) function, with $r_1$ and $r_2$ as its DC components.

The class of DC functions forms a broad and expressive subclass of nonconvex, nonsmooth functions \citep{tao1997convex,le2022stochastic,an2005dc,nouiehed2019pervasiveness,cui2018composite,ahn2017difference}. In the context of Bayesian inference, $f$ typically arises from the likelihood and is often $L$-smooth, such as $\Vert Ax - b\Vert^2$ under Gaussian noise, and the regularizer $r$, which can be nondifferentiable, comes from the prior\footnote{The prior $p_{\mathrm{prior}}(x) \propto e^{-r(x)}$ may be improper if the regularizer $r(x)$ does not grow sufficiently fast at infinity, which is typically the case for DC regularizers. Throughout the theoretical analysis, we assume that its product with the likelihood defines a proper posterior (with augmentation when needed).
}. There are two main classes of priors: hand-crafted priors (e.g., LASSO, total variation), which are designed using domain knowledge to promote desired structural properties in the solution, and data-driven priors learned from training data using deep neural networks (e.g., U-Net \citep{ronneberger2015u}), which are capable of capturing more complex data characteristics. In the former paradigm, nonconvex DC priors offer greater flexibility for encoding expert knowledge and have been shown to outperform convex priors in compressed sensing tasks \citep{yin2015minimization,liu2017further,yao2018efficient}. Hand-crafted DC priors (with \emph{explicit} decompositions) include Laplace, log-sum penalty, Smoothly Clipped Absolute Deviation, Minimax Concave Penalty, Capped-$\ell_1$, PiL, $\ell_p^+$ ($0<p<1$), $\ell_p^-$ ($p<0$), $\ell_1-\ell_2$, and $\ell_1-\ell_{\sigma_p}$ \citep{le2015dc,yin2015minimization,luo2015new,yao2018efficient}. In the latter paradigm, data-driven DC regularizers based on the difference of two input-convex neural networks (ICNNs), named DICNNs, have been shown to generalize better than input–weakly convex neural networks \citep{zhang2025learning}, while remaining more analyzable than general neural priors.

In this work, we introduce and study a {practical} sampler named Difference-of-convex Langevin algorithm (DC-LA) that aims to sample from $\pi \propto e^{-V}$ where $V$ is given in \eqref{eq:V}. DC-LA is essentially a forward-backward sampler whose forward-backward splitting is based on the principle of DC programming and DC algorithm \citep{tao1997convex}. We also incorporate Moreau smoothing techniques tailored to DC regularizers \citep{sun2023algorithms,hu2024single} to facilitate the analysis (detailed in Section \ref{sec:mainsecdcla}). Let $\lambda>0$ be a smoothing parameter, since $r_1$ and $r_2$ are convex, their Moreau envelopes, denoted $r_1^{\lambda}, r_2^{\lambda}$, are well-defined and smooth (see Subsection \ref{subsec:moreau}). DC-LA reads as
\begin{align}
X_{k+1} &= \prox_{\gamma r_{1}^{\lambda}} \circ\nonumber \\ &\left(X_k -\gamma\nabla f(X_k)+\gamma \nabla r_{2}^{\lambda}(X_k) + \sqrt{2\gamma}Z_{k+1}\right), \label{eq:00}
\end{align}
where $\gamma>0$ is the step size and $\{Z_k\}_k$ follows i.i.d. normal distribution. As detailed in Section \ref{sec:mainsecdcla}, DC-LA can be decomposed into \emph{elementary operators}: gradient of $f$ and proximal operators of $r_1$ and $r_2$. Our main theoretical results are as follows: if $V$ is distant dissipative (a.k.a. strongly convex at infinity), $r_1$ is Lipschitz continuous, and $r_2$ is \emph{either}
\begin{itemize}
    \item[(a)] Lipschitz continuous;
    \item[(b)] differentiable whose gradient is $\kappa$-H\"{o}lder continuous for $0<\kappa < 1$ (class $C^{1,\kappa}$);
    \item[(c)] smooth (class $C^{1,1}$) -- in this case, we directly use $\nabla r_2$ in \eqref{eq:00};
\end{itemize}
then for any $q \in \mathbb{N}^*$, the $q$-Wasserstein distance between $p_{X_{k+1}}$ (law of $X_{k+1}$) and $\pi$ is upper bounded by $O(\rho^{k\gamma}) + O(\gamma^{\frac{1}{2q}}) + O(\lambda^{\frac{1}{q}})$ where $\rho \in (0,1)$. Here, the hidden constants in the first and second Big O’s and $\rho$ may depend on $\lambda$, and all constants depend on $q$. In the above result, the assumptions on $r_2$ form a continuum of regularity of $r_2$: case~(b) interpolates between the nonsmooth and smooth regimes as $\kappa$ increases from $0$ to $1$, with case~(a) corresponding to $\kappa = 0$ and case~(c) corresponding to $\kappa = 1$. We also note that case~(c) recovers the weakly convex setting, whereas cases~(a) and~(b) allow DC regularizers that need not be weakly convex. Table \ref{tab:dc-regularizers} shows the regularity conditions of some DC regularizers. Our theoretical results leverage recent advances in forward–backward sampling algorithms \citep{renaud2025stability}, which we tailor and revise for the DC setting. See the first paragraph in the \textbf{Related Work} section for a detailed discussion. To our knowledge, DC-LA handles non-weakly convex DC regularizers (e.g., $\ell_1-\ell_2$ and DICNNs), for which no prior \emph{Langevin-type} convergence guarantees existed.

\begin{table}[t]
\centering
\begin{tabular}{llll}
\hline
Regularizer $r=r_1-r_2$ 
& $r_1$ 
& $r_2$
& Weakly convex? \\
\hline
$\ell_1-\ell_2$
& L 
& L 
& No \\

$\ell_1-\ell_{\sigma_q}$ 
& L
& L 
& No \\

Capped-$\ell_1$ 
& L 
& L 
& No \\

PiL 
& L 
& L 
& No\\

$\ell_1-\ell_2^p$, $1<p<2$ 
& L 
& H
& No \\

Geman penalty 
& L 
& S 
& Yes \\

Log-sum penalty 
& L
& S 
& Yes \\

Laplace penalty 
& L
& S 
& Yes \\

MCP 
& L 
& S
& Yes \\

SCAD 
& L 
& S 
& Yes \\

DICNNs leaky ReLU 
& L*
& L*
& No, in general \\
\hline
\end{tabular}
\caption{Examples of DC regularizers and their regularities: L = Lipschitz continuous, H = $\kappa$-H\"{o}lder continuous gradient with $\kappa \in (0,1)$, S = smooth. (*) Lipschitzness is encouraged by the training framework. See Appendices~\ref{app:G1G2} and \ref{app:nonweak} for further details.}
\label{tab:dc-regularizers}
\vspace{-2em}
\end{table}

\paragraph{Contributions}
We study the sampling problem with the nonsmooth DC potential $V$, see \eqref{eq:V}. We propose a forward-backward style algorithm termed DC-LA and establish its convergence to $\pi$ in terms of the $q$-Wasserstein distance--up to discretization and smoothing errors--for all $q \in \mathbb{N}^*$ under a distant dissipativity condition on $V$. Numerical experiments demonstrate the merit of DC-LA, showing that it produces faithful empirical distributions in synthetic experiments and provides qualitatively reasonable uncertainty quantification under the assumed model for a real-world computed tomography application.

\subsection*{Related work}

\paragraph{Euclidean forward-backward samplers\footnote{Here, “Euclidean” refers to the algorithmic design rather than the geometric nature of the underlying flow.}}
For the composite structure of $V=f+r$, the following forward-backward scheme, named Proximal Stochastic Gradient Langevin Algorithm (PSGLA)
\begin{align}
\label{psgla}
 X_{k+1} = \prox_{\gamma r}\left( X_k -\gamma \nabla f(X_k) + \sqrt{2\gamma}Z_{k+1}\right)
\end{align}
is a natural choice. When both $f$ and $r$ are convex, convergence of PSGLA and its variants was studied thoroughly, for example, in \citep{durmus2019analysis,salim2020primal,salim2019stochastic,ehrhardt2024proximal}. Recently, \cite{renaud2025stability} studied the case where $f$ is $L$-smooth and $r$ is \emph{weakly convex} \footnote{A function $r$ is called $\eta$ weakly convex ($\eta>0$) if $r + \frac{\eta}{2} \Vert \cdot \Vert^2$ is convex.}, and established for the first time convergence in $q$-Wasserstein distance ($q \in \mathbb{N}^*$) of PSGLA to $\pi$ (up to discretization error), assuming convex at infinity (distant dissipativity). To be specific, they derived $W_q(p_{X_k},\nu_{\gamma}) = O(\rho^{k \gamma} + \gamma^{\frac{1}{2q}})$ where $\rho \in (0,1)$ and $\nu_{\gamma}$ satisfies $\lim_{\gamma \to 0} W_q(\nu_{\gamma},\pi)=0$ and $W_q(\nu_{\gamma},\pi) = O(\gamma^{\frac{1}{q}})$ if $r$ is further assumed to be Lipschitz continuous. We note that a weakly convex function $r$ is a DC function whose second DC component is $r_2 = \frac{\eta}{2}\Vert \cdot \Vert^2$ for some $\eta>0$. The converse, however, does not hold in general, for instance, several DC regularizers used in compressed sensing—such as $\ell_{1}-\ell_2$ \citep{yin2015minimization,lou2018fast}, $\ell_1-\ell_{\sigma_q}$ \citep{luo2015new}, Capped $\ell_1$ \citep{zhang2010analysis}, and DCINNs with leaky ReLU activations \citep{zhang2025learning}—are \emph{not} weakly convex (see Appendix \ref{app:nonweak}), yet are widely used in image and signal reconstruction because they better model (gradient) sparsity and yield higher reconstruction quality than convex counterparts. Furthermore, while the assumption of convexity at infinity is quite a minimal assumption if one seeks convergence to $\pi$, their analysis relies on a strict convexity condition on $r$ at infinity. Loosely speaking, Assumption 3 in \citep{renaud2025stability} requires the convex modulus at infinity of $r$ (formally its Moreau envelope $r^{\gamma}$) to dominate four times of the Lipschitz smoothness constant (defined over some proximal region) of $r$. This requirement is very strong, since the latter usually dominates the former instead. Indeed, in Appendix \ref{app:void} we argue that this assumption is not possible to hold. That said, their work provides important stability results (Subsection \ref{subsec:stability}) and lays the foundation for our work. In another work tackling nonconvex, nonsmooth potentials, \citet{luu2021sampling} also studied forward-backward schemes leveraging the Moreau envelope. However, their results only established consistent guarantees for the schemes without quantifying convergence to the target distribution. When $f$ and $r$ are both nonsmooth but convex, \citep{habring2024subgradient} established the convergence of the scheme: $X_{k+1} = \prox_{\gamma r}(X_k-\gamma \partial f(X_k)) + \sqrt{2\gamma}Z_{k+1}$. Since the proximal operator is nonlinear, applying it before adding Gaussian noise makes this scheme fundamentally different from \eqref{psgla}, where the proximal step acts on a noise-perturbed point.

\paragraph{Wasserstein forward-backward samplers} On a different front, where forward-backward schemes are designed directly in the Wasserstein space, \citet{salim2020wasserstein} studied a Wasserstein proximal algorithm in the convex setting, and \citet{luu2024non} extended the scheme to DC settings. In contrast to our approach, where (Euclidean) proximal operators are often available in closed form, the Wasserstein proximal operators lack closed-form expressions and must instead be approximated, e.g., by neural networks \citep{mokrov2021large,luu2024non}. When restricting to Gaussian distributions, the Wasserstein proximal operator admits a closed-form expression, whereas computing the Bures–Wasserstein gradient in the forward step relies on Monte Carlo estimation or variance-reduction techniques \citep{diao2023forward,luu2024stochastic}.

\paragraph{Gibbs samplers for composite structures}
Composite structures can also be handled by Gibbs samplers, which decompose a complicated sampling problem into simpler sampling subproblems \citep{sorensen1995bayesian,geman1984stochastic}. Recently, \citet{sun2024provable} established first-order stationary convergence guarantees (in terms of Fisher information) for Gibbs sampling in the nonconvex setting. Complementarily, \citet{kuric2025gaussian} proposed the Gaussian latent machine, which introduces auxiliary Gaussian variables for product-of-experts models and yields an efficient two-block Gibbs sampler, highlighting the practical usefulness of Gibbs augmentation for composite structures.

\paragraph{Conflict of Interest Disclosure} The authors declare that they have no financial conflicts of interest related to this work.

\section{Preliminary}
\label{sec:preli}
\subsection{Lipschitz and distant dissipativity}
A function $f$ is called Lipschitz continuous (or $M$-Lipschitz for some $M>0$) if $\vert f(x)-f(y)\vert\leq M\Vert x-y\Vert$ for all $x,y\in \mathbb{R}^d$. $f$ is called Lipschitz-smooth (or $L_f$-smooth for some $L_f>0$) if
\begin{align*}
    \Vert \nabla f(x) - \nabla f(y) \Vert \leq L_f \Vert x-y\Vert,~ \forall x,y \in \mathbb{R}^d.
\end{align*}
while it is said to have ($\theta$, $M$)-H\"{o}lder continuous gradient if
\begin{align*}
    \Vert \nabla f(x) - \nabla f(y) \Vert \leq M \Vert x-y\Vert^{\theta}, ~ \forall x,y \in \mathbb{R}^d.
\end{align*}
Given a drift $b: \mathbb{R}^d \to \mathbb{R}^d$, we say that $b$ is distant dissipative (or weak dissipative) \citep{mou2022improved,debussche2011ergodic} if there exist $R \geq 0$ and $m >0$ such that $\forall x,y \in \mathbb{R}^d$ satisfying $\Vert x-y \Vert \geq R$,
\begin{align}
\label{distdiss}
\langle b(x)-b(y),x-y \rangle \geq m \Vert x-y \Vert^2. 
\end{align}
$b$ is also called $(m,R)$-distant dissipative. If $b=\nabla V$ for some $V$, by abuse of terminology, we also say that $V$ is distant dissipative. In our work, as $V$ is not differentiable, we may extend the concept to some generalized notion of gradients (Section \ref{sec:mainsecdcla}). Note that a strongly convex function is dissipative, i.e., \eqref{distdiss} holds for all $x,y \in \mathbb{R}^d$.

\subsection{Moreau envelope and proximal operator}
\label{subsec:moreau}
Given a convex function $g$ and $\lambda>0$, we denote the Moreau envelope $g^{\lambda}$ of $g$ as 
\begin{align*}
    g^{\lambda}(x) = \inf_{y}\left\{g(y)+\dfrac{1}{2\lambda}\Vert x-y\Vert^2\right\}
\end{align*}
and the proximal operator of $g$ as
\begin{align*}
    \prox_{\lambda g}(x) = \argmin_{y}\left\{g(y) + \dfrac{1}{2\lambda} \Vert x-y\Vert^2 \right\}.
\end{align*}
Note that, in this paper, we only work with Moreau envelopes and proximal operators of convex functions. In the following lemma, we summarize some known important properties of the Moreau envelope and the proximal operator \citep{bauschke2020correction} that are used throughout this paper.

\begin{lemma}   
    \label{lem:Monreau}
    Let $g$ be a convex function and $\lambda>0$. The following properties hold
    \begin{itemize}
        \item [(i)] $g^{\lambda}$ is convex, $\frac{1}{\lambda}$-smooth, and is a lower bound of $g$, i.e., $g^{\lambda}(x) \leq g(x)$, $\forall x \in \mathbb{R}^d$. Furthermore, if $g$ is $G$-Lipschitz, $g(x)\leq g^{\lambda}(x) + \frac{G^2 \lambda}{2}$, $\forall x \in \mathbb{R}^d$.
        \item [(ii)] $\nabla g^{\lambda}(x) = \frac{1}{\lambda}(x-\prox_{\lambda g}(x))$.\\
        \item [(iii)] $\prox_{\lambda g}$ is nonexpansive, i.e., for all $x,y \in \mathbb{R}^d$, 
        $\Vert \prox_{\lambda g}(x) - \prox_{\lambda g}(y) \Vert \leq \Vert x-y\Vert$.
        \item [(iv)] $\nabla g^{\lambda}(x) \in \partial g(\prox_{\lambda g}(x))$ where $\partial g$ denotes the usual convex subdifferential of $g$.
    \end{itemize}
\end{lemma}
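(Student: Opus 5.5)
The plan is to derive all four properties from the strong convexity of the inner objective $\Phi_x(y) = g(y) + \frac{1}{2\lambda}\Vert x-y\Vert^2$. Since $g$ is convex and real-valued (hence continuous), $\Phi_x$ is $\frac{1}{\lambda}$-strongly convex and coercive. It therefore has a unique minimizer $p(x) := \prox_{\lambda g}(x)$, so both $g^\lambda$ and the proximal operator are well defined.

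The order of proof is as follows.

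\textbf{Step 1: property (iv).} Fermat's rule for the convex function $\Phi_x$ gives $0 \in \partial g(p(x)) + \frac{1}{\lambda}(p(x)-x)$. The sum rule applies because the quadratic term is differentiable. Hence $\frac{1}{\lambda}(x-p(x)) \in \partial g(p(x))$. Once (ii) is established, this is exactly (iv).

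\textbf{Step 2: property (iii).} Take $u=\frac{1}{\lambda}(x-p(x)) \in \partial g(p(x))$ and $v=\frac{1}{\lambda}(y-p(y)) \in \partial g(p(y))$. Monotonicity of $\partial g$ gives $\langle (x-p(x))-(y-p(y)), p(x)-p(y)\rangle \ge 0$. This yields firm nonexpansiveness,
\[
\Vert p(x)-p(y)\Vert^2 \le \langle x-y, p(x)-p(y)\rangle ,
\]
and Cauchy--Schwarz then gives (iii). As a byproduct, $I-p$ is also firmly nonexpansive.

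\textbf{Step 3: property (ii), the main obstacle.} The goal is to show differentiability with $\nabla g^\lambda(x)=\frac{1}{\lambda}(x-p(x))$. Set $u=\frac{1}{\lambda}(x-p(x))$.
\begin{itemize}
\item \emph{Upper bound.} Using $p(x)$ as a competitor at the point $y$ gives $g^\lambda(y) \le g(p(x)) + \frac{1}{2\lambda}\Vert y-p(x)\Vert^2$. Expanding the square,
\[
g^\lambda(y)-g^\lambda(x) \le \langle u, y-x\rangle + \frac{1}{2\lambda}\Vert y-x\Vert^2 .
\]
\item \emph{Lower bound.} Swap the roles of $x$ and $y$ to get $g^\lambda(x)-g^\lambda(y) \le \langle u_y, x-y\rangle + \frac{1}{2\lambda}\Vert x-y\Vert^2$, where $u_y=\frac{1}{\lambda}(y-p(y))$. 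By Step 2, $\Vert u_y - u\Vert \le \frac{1}{\lambda}\Vert y-x\Vert$. It follows that $\vert g^\lambda(y)-g^\lambda(x)-\langle u,y-x\rangle\vert = O(\Vert y-x\Vert^2)$.
\end{itemize}
This proves differentiability and the gradient formula (ii).

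\textbf{Step 4: property (i).}
\begin{itemize}
\item \emph{Smoothness.} The gradient is $\nabla g^\lambda = \frac{1}{\lambda}(I-p)$. Since $I-p$ is (firmly) nonexpansive, $\nabla g^\lambda$ is $\frac{1}{\lambda}$-Lipschitz.
\item \emph{Convexity.} $g^\lambda$ is the infimal convolution of two convex functions, equivalently the partial minimization over $y$ of the jointly convex function $(x,y)\mapsto \Phi_x(y)$. Hence it is convex.
\item \emph{Lower bound of $g$.} Take $y=x$ in the infimum to get $g^\lambda \le g$.
\item \emph{Reverse bound for $G$-Lipschitz $g$.} For any $y$,
\[
g(x) \le g(y) + G\Vert x-y\Vert \le g(y) + \frac{1}{2\lambda}\Vert x-y\Vert^2 + \frac{G^2\lambda}{2},
\]
using Young's inequality $Gt \le \frac{t^2}{2\lambda}+\frac{\lambda G^2}{2}$. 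Taking the infimum over $y$ gives $g \le g^\lambda + \frac{G^2\lambda}{2}$.
\end{itemize}

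The only delicate point is Step 3, namely obtaining genuine differentiability rather than only a subgradient inclusion. This is handled by the two-sided bound, which in turn relies on the firm nonexpansiveness established in Step 2.
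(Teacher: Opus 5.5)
Your proof is correct. Note that the paper does not prove this lemma at all; it records the four properties as standard facts and defers to Bauschke and Combettes' monograph.

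What you wrote is essentially the textbook derivation the citation stands in for:
\begin{itemize}
\item Fermat's rule for the strongly convex inner problem gives $\frac{1}{\lambda}(x-\prox_{\lambda g}(x))\in\partial g(\prox_{\lambda g}(x))$.
\item Monotonicity of $\partial g$ then gives firm nonexpansiveness of both $\prox_{\lambda g}$ and $I-\prox_{\lambda g}$.
\item The two-sided quadratic estimate upgrades the subgradient inclusion to genuine differentiability with a $\frac{1}{\lambda}$-Lipschitz gradient.
\item Partial minimization of a jointly convex function gives convexity, and Young's inequality gives the Lipschitz gap.
\end{itemize}
So your version buys self-containedness and explicit constants at no extra cost.

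Two small remarks. First, in Step 3 you can sharpen the remainder without Cauchy--Schwarz. Either use $\langle u_y-u,\,y-x\rangle\ge 0$ (monotonicity of $I-\prox_{\lambda g}$), or combine $u\in\partial g(\prox_{\lambda g}(x))$ with convexity of $\frac{1}{2\lambda}\Vert\cdot\Vert^2$. This shows $0\le g^{\lambda}(y)-g^{\lambda}(x)-\langle u,y-x\rangle\le\frac{1}{2\lambda}\Vert y-x\Vert^2$, which yields differentiability and convexity in one stroke.

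Second, your standing assumption that $g$ is real-valued matches how the lemma is used in the paper, namely for $r_1,r_2:\mathbb{R}^d\to\mathbb{R}$. The cited reference covers proper lower semicontinuous convex functions, and your argument for (ii)--(iv) and the first part of (i) goes through there essentially verbatim. Lower semicontinuity replaces continuity in the existence step, and the sum rule still applies because the quadratic term is continuous everywhere. The Lipschitz gap bound requires finiteness anyway.
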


\subsection{$q$-Wasserstein distance}

Given a measurable map $\varphi:\mathbb{R}^d \to \mathbb{R}^d$ and a probability 
measure $\mu$, the pushforward measure $\varphi_{\#}\mu$ is defined by $(\varphi_{\#}\mu)(A) = \mu\bigl(\varphi^{-1}(A)\bigr)$ for all measurable sets $A$.

For $q \geq 1$, let $\mathcal{P}_q(\mathbb{R}^d)$ denote the set of probability distributions over $\mathbb{R}^d$ that have finite $q$-th moment: $\mu \in \mathcal{P}_q(\mathbb{R}^d)$ iff $\int{\Vert x\Vert^q} d\mu(x) <+\infty.$ 
Let $\mu,\ \nu \in \mathcal{P}_q(\mathbb{R}^d)$, the $q$-Wasserstein distance between $\mu$ and $\nu$ is \citep{villani2008optimal,ambrosio2005gradient}
\begin{align*}
    W_q(\mu,\nu) = \left(\min_{\xi \in \Gamma(\mu,\nu)}{\int_{\mathbb{R}^d \times \mathbb{R}^d}\Vert x-y\Vert^q} d\xi(x,y)\right)^{\frac{1}{q}}
\end{align*}
where $\Gamma(\mu,\nu)$ is the set of probability distributions over $\mathbb{R}^d\times\mathbb{R}^d$ whose marginals are $\mu$ and $\nu$.

\subsection{Langevin dynamics with general drifts}
\label{subsec:stability}
In the definition of the ULA, one may, in practice, replace the exact gradient $\nabla V$ with an approximate drift $b$, provided that $b$ remains close to $\nabla V$ \citep{rasonyi2022stability}, 
\begin{align}
\label{eq:generaldrift}
X_{k+1} = X_k - \gamma b(X_k) + \sqrt{2\gamma} Z_{k+1}.
\end{align}
Stability results are to establish (quantitative) bounds on the distance (e.g. Wasserstein) between the invariant distribution of ULA and the invariant distribution of \eqref{eq:generaldrift} based on the distance between $b$ and $\nabla V$ (e.g., some $L_2$ norm). More generally, given two general drifts $b^1$ and $b^2$, it is desirable to bound the discrepancy between the invariant distributions of two processes by $\Vert b^1-b^2\Vert_{L_2}$. Note that the step size $\gamma$ also influences the invariant distribution, as different values of $\gamma$ lead to different limiting laws. Under suitable conditions, let $\pi_{\gamma}^1$ and $\pi_{\gamma}^2$ denote the invariant distributions of the processes with drifts $b^1$ and $b^2$, respectively. It is established in \citep{renaud2023plug} that, under certain conditions, $W_1(\pi_{\gamma}^1,\pi_{\gamma}^2) = O(\Vert b^1-b^2 \Vert^{\frac{1}{2}}_{L_2(\pi_{\gamma}^1)}) + O(\gamma^{\frac{1}{8}})$ and later refined and generalized in \citep{renaud2025stability} to $W_q(\pi_{\gamma}^1,\pi_{\gamma}^2) =O(\Vert b^1-b^2 \Vert^{\frac{1}{q}}_{L_2(\pi_{\gamma}^1)})$ for $q \in \mathbb{N}^*$. The latter result shows that discretization errors do not accumulate along the processes. Similar ergodic results were also established earlier in compact domains, see \citep{wang2021sharp,ferre2019error}.

\section{Proximal Langevin algorithm with DC regularizers}
\label{sec:mainsecdcla}
 Recall the target distribution $\pi \propto e^{-V}$ and $V = f + r$ where $f$ is Lipschitz smooth and $r$ is DC. 
\begin{assumption}
    \label{assum:zero}
    $f$ is $L_f$-smooth and $r=r_1 - r_2$ where $r_1,r_2: \mathbb{R}^d \to \mathbb{R}$ are convex functions.
\end{assumption}

We next impose a \emph{distant dissipativity} condition on $V$. Unlike strong convexity, this assumption requires the gradient field of $V$ to be dissipative only when $x$ and $y$ are sufficiently far apart \citep{eberle2016reflection,EberleMajka2019}.

\begin{assumption}[Distant dissipative $V$]
    \label{assum:dissatinf}
    There exist $R_0 \geq 0,\mu>0$ such that $\forall x,y \in \mathbb{R}^d$ with $\Vert x-y\Vert \geq R_0$,
    {\small
    \begin{align}
    \label{Vlambdadiss}
        \langle \nabla f(x) +u_1 - u_2 - \nabla f(y) - v_1 + v_2,x-y \rangle \geq \mu \Vert x-y\Vert^2,
    \end{align}}
    $\forall u_1 \in \partial r_1(x), u_2 \in \partial r_2(x), v_1 \in \partial r_1(y), v_2 \in \partial r_2(y)$.
\end{assumption}
Distant dissipativity \eqref{Vlambdadiss} implies that $V$ exhibits quadratic growth and consequently $\pi \propto e^{-V}$ has sub-Gaussian tails (Appendix \ref{app:gausstail}). 

\paragraph{Difference of Moreau envelopes}Now we want to apply a forward-backward Langevin-type algorithm to $V$. Thanks to the DC structure of $r$, the principle of DC programming and DC algorithm \citep{tao1997convex} suggests that the forward step should apply to $f - r_2$ while the backward step should apply to $r_1$ for tractability and robustness. To retain a degree of smoothness in the forward step, we replace $r_2$ with its Moreau envelope $r_2^{\lambda}$. This modification is necessary, as the concavity of $-r_2$ and its associated tangent inequality are hardly sufficient to guarantee the stability results and to enable two-sided comparisons of general drifts (Subsection \ref{subsec:general}). On the other hand, the analysis in \citep{renaud2025stability} suggests that, if the backward step is applied to $r_1$, $r_1$ should be smooth on the set $\prox_{\gamma r_1}(\mathbb{R}^d):=\{\prox_{\gamma r_1}(x): x\in \mathbb{R}^d\}$, where $\gamma$ is the step size. This requirement arises in order to prevent the Lipschitz smoothness constant of $r_1^{\gamma}$ to explode as $\gamma \to 0$. In our setting, $\prox_{\gamma r_1}(\mathbb{R}^d) = \mathbb{R}^d$ \citep[Fact 2]{hiriart2024more}, and $r_1$ remains nonsmooth on $\mathbb{R}^d$, so this requirement cannot be satisfied. To circumvent this issue, we further replace $r_1$ with its Moreau envelope $r_1^{\lambda}$. These considerations motivate the following augmented potential and distribution
\begin{align}
\label{Vlambda}
    V_{\lambda} = \left(f - r_{2}^{\lambda} \right) + r_{1}^{\lambda}, \quad \pi_{\lambda} \propto e^{-V_{\lambda}},
\end{align}
which results in the DC-LA \eqref{eq:00}.

\begin{remark}
    The difference-of-convex structure of $r=r_1-r_2$ allows this Moreau envelope approximation, rather than applying one Moreau envelope to the entire $r$ which is generally an ill-posed object (e.g., multi-valued and discontinuous, see Appendix \ref{app:disprox}). This kind of difference-of-Moreau-envelope approximation has been used in the optimization literature \citep{sun2023algorithms,hu2024single}.
\end{remark}

\begin{remark}
\label{rmk:r2}
In practice, there is also a class of {sparsity promoting} DC regularizers where $r_2$ is Lipschitz smooth \citep{yao2018efficient}. In such a case, we do not need to smooth out $r_2$. We consider this case in Subsection \ref{subsec:r2smooth}.

\end{remark}
We impose the following assumption on $r_1$.

\begin{assumption}
    \label{assum:G1}
    There exists $G_1>0$ such that $\forall x \in \mathbb{R}^d,\ \forall z \in \partial r_1(x)$, it holds $\Vert z \Vert \leq G_1$.
\end{assumption}
Note that Assumption \ref{assum:G1} is equivalent to $r_1$ being $G_1$-Lipschitz continuous. See Table \ref{tab:dc-regularizers} for regularizers that satisfy this assumption.

\paragraph{Unrolled DC-LA} For a fixed $\lambda>0$, starting from some initial distribution $X_0 \sim p_{0}$, DC-LA \eqref{eq:00} with step size $\gamma>0$ can be rewritten as follows
\begin{equation}
\begin{aligned}
Y_{k+1} &= X_k -\gamma\nabla f(X_k)
          +\gamma \nabla r_{2}^{\lambda}(X_k)
          + \sqrt{2\gamma}Z_{k+1} \\
X_{k+1} &= \prox_{\gamma r_{1}^{\lambda}}\left(Y_{k+1}\right).
\end{aligned}
\tag{DC-LA}\label{eq:prelangevindc}
\end{equation}

Since $\nabla r_{2}^{\lambda}(x) = (1/\lambda)(x-\prox_{\lambda r_{2}}(x))$, $Y_{k+1}$ becomes
\begin{align}
\label{LangevinDC}
    Y_{k+1} = \dfrac{\lambda+\gamma}{\lambda}X_k -\gamma\nabla f(X_k)- &\dfrac{\gamma}{\lambda} \prox_{\lambda r_{2}}(X_k) \notag\\
    &+ \sqrt{2\gamma}Z_{k+1}.
\end{align}
To compute $X_{k+1}$, we use the identity (Lemma \ref{lem:proxmoreau} in the appendix) connecting the proximal operator of the Moreau envelope to the proximal operator of $r_1$:
\begin{align}
\label{eq:proxidentity}
    X_{k+1}=\dfrac{1}{\gamma + \lambda} \left(\gamma \prox_{(\gamma + \lambda)r_1}(Y_{k+1}) + \lambda Y_{k+1} \right).
\end{align}

Now DC-LA has been decomposed into elementary operators (also see Appendix \ref{app:unrolldcla} for a complete unroll). For the analysis of DC-LA in the next section, we work directly with $\nabla r_2^{\lambda}$ and $\prox_{\gamma r_1^{\lambda}}(x)$ for convenience.

\begin{remark}
    The backward step of DC-LA can be also written as
    $X_{k+1} = Y_{k+1} - \gamma \nabla r_1^{\lambda + \gamma}(Y_{k+1}).$ Thus, DC-LA can be viewed as two gradient steps on Moreau envelopes: noise is injected in the first step, while the second step is deterministic. Moreover, the second step is stabilizing, since its step size $\gamma$ is always smaller than the Moreau-envelope parameter $\lambda+\gamma$.
\end{remark}

\section{Convergence Analysis of DC-LA}
We begin with a convergence analysis in the general setting where both $r_1$ and $r_2$ are nonsmooth in Section~\ref{subsec:general}. We then specialize to the case where $r_2$ is smooth and no longer needs to be regularized by Moreau envelope in Section~\ref{subsec:r2smooth}.

\subsection{Nonsmooth $r_2$}
\label{subsec:general}

We impose the following assumption on $r_2$.

\begin{assumption}
    \label{assum:G2}
$r_2$ satisfies one of the followings:
\begin{itemize}
    \item[(i)] There exists $G_2>0$ such that $\forall x \in \mathbb{R}^d, \forall z \in \partial r_2(x)$, it holds $\Vert z \Vert \leq G_2$.
    \item[(ii)] $r_2$ is differentiable whose gradient is ($\kappa$,$M$)-H\"{o}lder continuous with $\kappa \in (0,1)$ and $M>0$.
\end{itemize}
\end{assumption}
See Table \ref{tab:dc-regularizers} for regularizers that satisfy this assumption.

The following lemma (proved in Appendix \ref{proof:lemfV}) provides a more concrete characterization of the distant dissipativity of $V$ under Assumption \ref{assum:G2}.
\begin{lemma}
\label{lem:fV}
    Under Assumptions \ref{assum:zero}, \ref{assum:G1} and \ref{assum:G2}, $V$ is distant dissipative \eqref{Vlambdadiss} iff $f$ is distant dissipative.
\end{lemma}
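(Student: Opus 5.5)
The plan is to treat the subgradient terms as a perturbation of $\nabla f$ that is small compared to $\Vert x-y\Vert^2$ at large separation, and to absorb it by enlarging $R$ and slightly shrinking the dissipativity constant. Write $D(x,y) := \langle u_1 - u_2 - v_1 + v_2, x-y\rangle$ for arbitrary $u_i \in \partial r_i(x)$, $v_i\in\partial r_i(y)$. The two directions are symmetric: in one, $\langle \nabla f(x)-\nabla f(y),x-y\rangle = (\text{LHS of }\eqref{Vlambdadiss}) - D(x,y)$; in the other, the LHS of \eqref{Vlambdadiss} equals $\langle \nabla f(x)-\nabla f(y),x-y\rangle + D(x,y)$. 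So it suffices to show $|D(x,y)| \le C\Vert x-y\Vert^{1+\kappa'} + C'\Vert x-y\Vert$ with $\kappa'<1$, i.e. $|D(x,y)| = o(\Vert x-y\Vert^2)$ uniformly.

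\textbf{Bounding the $r_1$ part.} By Assumption \ref{assum:G1} and Cauchy--Schwarz,
\[
|\langle u_1 - v_1, x-y\rangle| \le 2G_1\Vert x-y\Vert .
\]

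\textbf{Bounding the $r_2$ part.} Under Assumption \ref{assum:G2}(i), the same argument gives $|\langle u_2 - v_2, x-y\rangle| \le 2G_2\Vert x-y\Vert$. Under (ii), $r_2$ is differentiable and convex, so $\partial r_2(x)=\{\nabla r_2(x)\}$, and the H\"older condition gives
\[
|\langle \nabla r_2(x)-\nabla r_2(y),x-y\rangle| \le M\Vert x-y\Vert^{1+\kappa}, \qquad \kappa<1 .
\]
Since $\kappa<1$, this term is also $o(\Vert x-y\Vert^2)$. In all cases,
\[
|D(x,y)| \le 2(G_1+G_2)\Vert x-y\Vert + M\Vert x-y\Vert^{1+\kappa},
\]
with the convention that the irrelevant term is zero.

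\textbf{Conclusion.} Suppose $f$ is $(m,R)$-distant dissipative, where $\langle \nabla f(x)-\nabla f(y),x-y\rangle\ge m\Vert x-y\Vert^2$ whenever $\Vert x-y\Vert\ge R$. Choose $R_0\ge R$ large enough that
\[
2(G_1+G_2)R_0^{-1} + M R_0^{\kappa-1} \le m/2 .
\]
Then for $\Vert x-y\Vert\ge R_0$ we get $|D(x,y)|\le (m/2)\Vert x-y\Vert^2$, so the LHS of \eqref{Vlambdadiss} is at least $(m/2)\Vert x-y\Vert^2$; that is, $V$ is $(m/2,R_0)$-distant dissipative. The converse is identical with the roles swapped: from \eqref{Vlambdadiss} with $(\mu,R_0)$, pick any admissible subgradients (these exist since $r_i$ are real-valued convex), subtract $D$, and enlarge the radius to obtain $(\mu/2,R_0')$-dissipativity of $\nabla f$.

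The only step needing care is case (ii): using $\kappa<1$ strictly to make the H\"older term subquadratic. The case $\kappa=1$ would instead require comparing $M$ with the dissipativity constant, which is why it is treated separately in Subsection \ref{subsec:r2smooth}.
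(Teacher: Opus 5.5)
Your proof is correct, but it differs from the paper's in one ingredient. You bound both subgradient differences in absolute value. The paper instead uses monotonicity of the convex subdifferentials, $\langle u_i - v_i, x-y\rangle \ge 0$, to discard the term with the favorable sign in each direction.

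For $V \Rightarrow f$, the paper writes $\langle \nabla f(x)-\nabla f(y),x-y\rangle \ge \mu\Vert x-y\Vert^2 - \langle u_1-v_1,x-y\rangle + \langle u_2-v_2,x-y\rangle \ge \mu\Vert x-y\Vert^2 - 2G_1\Vert x-y\Vert$. So the $r_2$ term simply disappears, and only Assumption \ref{assum:G1} is used.

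For $f \Rightarrow V$, it drops $\langle u_1-v_1,x-y\rangle \ge 0$ and uses only Assumption \ref{assum:G2}. This gives constant $\mu/2$ with radius $\max\{R,4G_2/\mu\}$ or $\max\{R,(2M/\mu)^{1/(1-\kappa)}\}$.

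What monotonicity buys is that each direction needs only one of the two regularity assumptions, and the radii are smaller: no $G_1+G_2$ sum and no mixing of the H\"older term with $G_1$. In particular, $V\Rightarrow f$ holds for an arbitrary convex $r_2$, including $\kappa=1$. So your closing remark about $\kappa=1$ concerns only the $f\Rightarrow V$ direction.

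Also, the paper's separate treatment of smooth $r_2$ in Subsection \ref{subsec:r2smooth} is not motivated by this lemma. Its purpose is to avoid smoothing $r_2$, and there Assumption \ref{assum:dissatinf} is imposed on $V$ directly.

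Your symmetric absolute-value argument, in exchange, never relies on the sign structure of the DC decomposition. It only uses that subgradients exist and are bounded, or that they reduce to a H\"older gradient. So it would survive replacing $r_1,r_2$ by functions that are not convex but have bounded generalized gradients, at the cost of needing both bounds in both directions.
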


We first analyze the sequence $\{Y_k\}_k$ of DC-LA. Similar to \citep{renaud2025stability}, we have the following lemma whose proof is in Appendix \ref{proof:abcmm}.

\begin{lemma}
\label{lem:abcmm}
    $\{Y_k\}_k$ is an instance of the general ULA,
    \begin{align}
    Y_{k+1} = Y_k - \gamma b^{\gamma}_{\lambda}(Y_k) + \sqrt{2\gamma}Z_{k+1}
\end{align}
where the drift is $b^{\gamma}_{\lambda}(y):=\nabla r_1^{\lambda}(\prox_{\gamma r_1^{\lambda}}(y)) +\nabla f(\prox_{\gamma r_{1}^{\lambda}}(y))-\nabla r_{2}^{\lambda}(\prox_{\gamma r_{1}^{\lambda}}(y))$. Furthermore, $b^{\gamma}_{\lambda}$ is $(\frac{2}{\lambda} + L_f)$-Lipschitz.
\end{lemma}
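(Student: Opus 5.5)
The plan is to rewrite the DC-LA recursion purely in terms of $Y_k$, using the identity $X_k = \prox_{\gamma r_1^{\lambda}}(Y_k)$ together with the optimality condition of the proximal step. Since $r_1^{\lambda}$ is convex and differentiable (Lemma \ref{lem:Monreau}(i)), the point $x=\prox_{\gamma r_1^{\lambda}}(y)$ is characterized by $0=\nabla r_1^{\lambda}(x)+\frac{1}{\gamma}(x-y)$. Hence
\begin{align*}
X_k = Y_k - \gamma \nabla r_1^{\lambda}(\prox_{\gamma r_1^{\lambda}}(Y_k)).
\end{align*}
Substituting this into $Y_{k+1}=X_k-\gamma\nabla f(X_k)+\gamma\nabla r_2^{\lambda}(X_k)+\sqrt{2\gamma}Z_{k+1}$, and replacing $X_k$ by $\prox_{\gamma r_1^{\lambda}}(Y_k)$ inside $\nabla f$ and $\nabla r_2^{\lambda}$, gives exactly $Y_{k+1}=Y_k-\gamma b^{\gamma}_{\lambda}(Y_k)+\sqrt{2\gamma}Z_{k+1}$ with the stated drift. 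This holds for $k\ge 1$, or for $k\ge0$ if the recursion is initialized from $Y_0$ with $X_0=\prox_{\gamma r_1^\lambda}(Y_0)$.

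For the Lipschitz bound I would split $b^{\gamma}_{\lambda}$ into two pieces. The first is $y\mapsto \nabla r_1^{\lambda}(P(y))$ with $P:=\prox_{\gamma r_1^{\lambda}}$. The second is $y\mapsto (\nabla f-\nabla r_2^{\lambda})(P(y))$.

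For the second piece, $P$ is nonexpansive by Lemma \ref{lem:Monreau}(iii), applied to the convex function $r_1^{\lambda}$. Also $\nabla f$ is $L_f$-Lipschitz and $\nabla r_2^{\lambda}$ is $\frac{1}{\lambda}$-Lipschitz by Lemma \ref{lem:Monreau}(i). Their difference is therefore $(L_f+\frac1\lambda)$-Lipschitz, and composing with $P$ keeps that constant.

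For the first piece, $\nabla r_1^{\lambda}$ is $\frac1\lambda$-Lipschitz and $P$ is nonexpansive, so the composition is $\frac1\lambda$-Lipschitz. Summing the two pieces gives $\frac{2}{\lambda}+L_f$.

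A slightly sharper route for the first piece uses $\nabla r_1^{\lambda}(P(y))=\frac{1}{\gamma}(y-P(y))=\nabla (r_1^{\lambda})^{\gamma}(y)$. This is $\frac{1}{\lambda+\gamma}$-Lipschitz, because $(r_1^\lambda)^\gamma=r_1^{\lambda+\gamma}$. The bound $\frac1\lambda$ already suffices, though.

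There is no real obstacle here. The only delicate point is justifying the proximal optimality identity for $r_1^{\lambda}$, which follows from Lemma \ref{lem:Monreau}(ii),(iv) applied to $g=r_1^{\lambda}$ with parameter $\gamma$, plus the differentiability of $r_1^{\lambda}$. The rest is bookkeeping with the indices.
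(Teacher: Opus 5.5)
Your proof is correct and follows the paper's argument. Both rewrite $X_k=\prox_{\gamma r_1^{\lambda}}(Y_k)$ as $Y_k-\gamma\nabla r_1^{\lambda}(\prox_{\gamma r_1^{\lambda}}(Y_k))$; the paper reaches this through $\nabla (r_1^{\lambda})^{\gamma}$, you through the proximal optimality condition. Both then bound the three composed terms using nonexpansiveness of $\prox_{\gamma r_1^{\lambda}}$ and the $\frac{1}{\lambda}$- and $L_f$-smoothness constants. Your side remarks are also correct refinements the paper leaves implicit here: the recursion in $Y_k$ starts at $k\ge 1$, and the first piece is in fact $\frac{1}{\lambda+\gamma}$-Lipschitz because $(r_1^{\lambda})^{\gamma}=r_1^{\lambda+\gamma}$.
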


In the following lemma whose proof is in Appendix \ref{proof:dissdrift}, we  show that $b_{\lambda}^{\gamma}$ is distant dissipative.

\begin{lemma}
\label{lem:dissdrift}
    Let $\gamma_0 >0$. Under Assumptions \ref{assum:zero}, \ref{assum:dissatinf}, \ref{assum:G1}, \ref{assum:G2}, for all $\gamma \in (0,\gamma_0]$, and $x,y$ satisfying the distance condition: $\Vert x-y\Vert \geq \max\left\{R_0,\frac{8G_1 + 4 L_f \gamma_0 G_1 + 4G_2}{\mu} \right\}$ if Assumption \ref{assum:G2}(i); $\Vert x-y\Vert \geq \max\left\{R_0,\frac{16G_1 + 8L_f \gamma_0 G_1}{\mu},\left(\frac{4M}{\mu} \right)^{\frac{1}{1-\kappa}}\right\}$ if Assumption \ref{assum:G2}(ii), it holds
\begin{align*}
        \langle b_{\lambda}^{\gamma}(x)-b_{\lambda}^{\gamma}(y),x-y\rangle \geq \dfrac{\mu}{2} \Vert x-y\Vert^2
\end{align*}
where $G_1$ is given in Assumption \ref{assum:G1}, $R_0,\mu$ are given in {Assumption \ref{assum:dissatinf}}, and $G_2,M,\kappa$ are given in Assumption \ref{assum:G2} .
\end{lemma}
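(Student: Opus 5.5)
Write $p_x := \prox_{\gamma r_1^{\lambda}}(x)$ and $p_y := \prox_{\gamma r_1^{\lambda}}(y)$. The plan is to compare the drift $b_\lambda^\gamma$ with a genuine selection of subgradients of $V$, evaluated at suitable points, so that Assumption~\ref{assum:dissatinf} can be used; all mismatches are then absorbed as linear-in-$\Vert x-y\Vert$ error terms. By Lemma~\ref{lem:Monreau}(iv), $\nabla r_1^{\lambda}(p_x)\in \partial r_1(\prox_{\lambda r_1}(p_x))$ and $\nabla r_2^{\lambda}(p_x)\in\partial r_2(\prox_{\lambda r_2}(p_x))$. These subgradients therefore live at points different from $p_x$, and we must control the resulting displacements. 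Two facts drive everything.

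\emph{Fact 1.} $\Vert x-p_x\Vert=\gamma\Vert\nabla r_1^{\lambda}(p_x)\Vert\le\gamma G_1$. This holds because $\nabla r_1^\lambda$ takes values in $\partial r_1$, which is bounded by $G_1$.

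\emph{Fact 2.} $\Vert p_x-\prox_{\lambda r_i}(p_x)\Vert=\lambda\Vert\nabla r_i^\lambda(p_x)\Vert$. This is at most $\lambda G_i$ in case (i).

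The argument keeps the three contributions separate instead of forcing them to a common point. First I split
\[\langle b(x)-b(y),x-y\rangle = \langle \nabla r_1^\lambda(p_x)-\nabla r_1^\lambda(p_y),x-y\rangle + \langle \nabla f(p_x)-\nabla f(p_y)-\nabla r_2^\lambda(p_x)+\nabla r_2^\lambda(p_y),x-y\rangle.\]
The $r_1^\lambda$ term is handled by monotonicity of the convex gradient $\nabla r_1^\lambda$. Writing $x-y=(p_x-p_y)+(x-p_x)-(y-p_y)$, monotonicity gives $\langle \nabla r_1^\lambda(p_x)-\nabla r_1^\lambda(p_y),p_x-p_y\rangle\ge0$. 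The remaining pieces are bounded by $2G_1\cdot 2\gamma G_1$, a constant. Constants are harmless once $\Vert x-y\Vert$ is large, but to match the stated thresholds I instead bound them against $\Vert x-y\Vert$ using $\Vert p_x-p_y\Vert\le\Vert x-y\Vert$.

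For the $f$ part, I replace $p_x$ by $x$ at a cost $L_f\cdot 2\gamma_0G_1\Vert x-y\Vert$, using Lipschitzness of $\nabla f$ and Fact~1. This is the origin of the $L_f\gamma_0G_1$ term in the threshold.

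The remaining step is to produce $\mu\Vert x-y\Vert^2$ from Assumption~\ref{assum:dissatinf}. I apply it at the points $x,y$ with $u_1\in\partial r_1(x)$, $u_2\in\partial r_2(x)$ arbitrary (and similarly $v_1,v_2$), and add and subtract. This gives
\[\langle\nabla f(x)-\nabla f(y),x-y\rangle\ge\mu\Vert x-y\Vert^2-\langle u_1-u_2-v_1+v_2,x-y\rangle\ge \mu\Vert x-y\Vert^2-2(G_1+G_2)\Vert x-y\Vert.\]
In case (i), the term $-\nabla r_2^\lambda(p_x)+\nabla r_2^\lambda(p_y)$ costs at most $2G_2\Vert x-y\Vert$. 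Collecting terms,
\[\langle b(x)-b(y),x-y\rangle\ge\mu\Vert x-y\Vert^2-(cG_1+2L_f\gamma_0G_1+cG_2)\Vert x-y\Vert\]
for some absolute constant $c$. This is at least $\tfrac{\mu}{2}\Vert x-y\Vert^2$ as soon as $\Vert x-y\Vert\ge 2(\cdots)/\mu$, which matches the first threshold up to constants. The stated constants $8,4,4$ suggest that the authors use a slightly cruder or finer split; I would tune the bookkeeping accordingly.

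In case (ii), $r_2$ can be unbounded in gradient, so the $2G_2$ bound is unavailable. Instead I pair $\nabla r_2(x)-\nabla r_2(y)$ exactly inside Assumption~\ref{assum:dissatinf} (taking $u_2=\nabla r_2(x)$, $v_2=\nabla r_2(y)$). I then control the discrepancy $\nabla r_2^\lambda(p_x)-\nabla r_2(x)=\nabla r_2(\prox_{\lambda r_2}(p_x))-\nabla r_2(x)$ via Hölder continuity, giving at most $M\Vert \prox_{\lambda r_2}(p_x)-x\Vert^\kappa$. This is where I expect the main obstacle: $\Vert p_x-\prox_{\lambda r_2}(p_x)\Vert=\lambda\Vert\nabla r_2^\lambda(p_x)\Vert$ is not uniformly bounded.

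The threshold $(4M/\mu)^{1/(1-\kappa)}$ suggests avoiding this route. Alternatively one can use Hölder continuity of $\nabla r_2^\lambda$ itself: the map $\nabla r_2^\lambda=\nabla r_2\circ\prox_{\lambda r_2}$ is a Hölder map composed with a nonexpansive map, hence $(\kappa,M)$-Hölder. This gives
\[\vert\langle\nabla r_2^\lambda(p_x)-\nabla r_2^\lambda(p_y),x-y\rangle\vert\le M\Vert x-y\Vert^{1+\kappa}.\]
This is at most $\tfrac{\mu}{4}\Vert x-y\Vert^2$ once $\Vert x-y\Vert\ge(4M/\mu)^{1/(1-\kappa)}$. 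In that route I use Assumption~\ref{assum:dissatinf} with the $r_2$ subgradients, and bound the $u_2,v_2$ contribution by the same Hölder estimate. I would also use the leftover $\mu/4$ budget for the $G_1$ and $L_f$ terms, which explains the doubled constants $16,8$ in the threshold.
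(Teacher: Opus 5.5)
Your route is the paper's: compare $b_\lambda^\gamma$ with a subgradient selection of $V$ at $x,y$, invoke Assumption~\ref{assum:dissatinf}, and absorb the mismatches. These are $2L_f\gamma_0G_1\Vert x-y\Vert$ from moving $\nabla f$ from $p_x$ to $x$, and $2G_2\Vert x-y\Vert$ resp.\ $M\Vert x-y\Vert^{1+\kappa}$ for the $\nabla r_2^\lambda(p_\cdot)$ term via nonexpansiveness. Your $r_1^\lambda$ term is fine and can even be made exact. Since $x-p_x=\gamma g_x$ with $g_x:=\nabla r_1^\lambda(p_x)$, that term equals $\langle g_x-g_y,p_x-p_y\rangle+\gamma\Vert g_x-g_y\Vert^2\ge 0$, so it needs no correction at all.

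The gap is in how you treat the $r_2$-subgradients $u_2,v_2$ coming out of Assumption~\ref{assum:dissatinf}. You bound $\langle u_2-v_2,x-y\rangle$ by $2G_2\Vert x-y\Vert$ in case (i) and by $M\Vert x-y\Vert^{1+\kappa}$ in case (ii). This comes on top of the separate bound on the $\nabla r_2^\lambda$ term, so you pay the $r_2$ error twice.

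In case (i) this gives a $G_2$-contribution of $8G_2/\mu$ to the threshold instead of $4G_2/\mu$. That exceeds the stated threshold whenever $G_2$ is large relative to $G_1$, however you sharpen the $r_1$ bookkeeping.

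In case (ii) the accounting actually breaks. The total H\"older error is $2M\Vert x-y\Vert^{1+\kappa}$. At $\Vert x-y\Vert=(4M/\mu)^{1/(1-\kappa)}$ this already equals the whole margin $\tfrac{\mu}{2}\Vert x-y\Vert^2$, leaving nothing for the $G_1$ and $L_f\gamma_0G_1$ terms. So your claim that the leftover $\mu/4$ absorbs them is false, and the inequality is not obtained at the stated threshold.

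The missing observation is that this term has a favorable sign. Rearranging Assumption~\ref{assum:dissatinf} gives $\langle\nabla f(x)-\nabla f(y),x-y\rangle\ge\mu\Vert x-y\Vert^2-\langle u_1-v_1,x-y\rangle+\langle u_2-v_2,x-y\rangle$. Moreover $\langle u_2-v_2,x-y\rangle\ge 0$ by monotonicity of $\partial r_2$ (convexity of $r_2$), so it can simply be dropped. This is exactly what the paper does in its term III.

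With that, only one $r_2$ error survives and the stated constants come out exactly. In case (i) the linear coefficient is at most $4G_1+2L_f\gamma_0G_1+2G_2$ against the margin $\mu/2$. In case (ii), $(4G_1+2L_f\gamma_0G_1)\Vert x-y\Vert$ and $M\Vert x-y\Vert^{1+\kappa}$ are each at most $\tfrac{\mu}{4}\Vert x-y\Vert^2$ under the stated thresholds.
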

We next define another drift
\begin{align}
\label{sidedrift}
\Bar{b}^{\gamma}_{\lambda}(y) = \nabla r_1^{\lambda}(\prox_{\gamma r_1^{\lambda}}(y)) +\nabla f(y)-\nabla r_{2}^{\lambda}(y),
\end{align}
and the corresponding unadjusted Langevin algorithm:
\begin{align}
    \label{sideprocess}
    \Bar{Y}_{k+1} = \Bar{Y}_k - \gamma \Bar{b}_{\lambda}^{\gamma}(\Bar{Y_k}) + \sqrt{2\gamma}\Bar{Z}_{k+1}.
\end{align}
with $\Bar{Y}_1 \sim Y_1$. {We also denote 
\begin{align}
\label{mulambdagamma}
    \pi_{\lambda,\gamma}(x) \propto \exp(-f(x)-(r_1^{\lambda})^{\gamma}(x) + r_2^{\lambda}(x))
\end{align}
}
and 
\begin{align}
\label{eq:nu}
   \nu_{\lambda,\gamma} = (\prox_{\gamma r_1^{\lambda}})_{\#} \pi_{\lambda,\gamma}. 
\end{align}
Note that $-\nabla\log \pi_{\lambda,\gamma} = \Bar{b}_{\lambda}^{\gamma}$. In a similar manner, we can show that: for all $\gamma>0$, $\Bar{b}_{\lambda}^{\gamma}$ is $(\frac{2}{\lambda}+L_f)$-Lipschitz; and for all $\lambda >0$, for all $\gamma>0$, $\langle \Bar{b}^{\gamma}_{\lambda}(x)-\Bar{b}^{\gamma}_{\lambda}(y),x-y\rangle \geq \frac{\mu}{2}\Vert x-y\Vert^2$ whenever $\Vert x-y\Vert \geq \max\left\{R_0,\frac{8 G_1 + 4G_2}{\mu} \right\}$ if Assumption \ref{assum:G2}(i); $\Vert x-y\Vert \geq \max\left\{R_0,\frac{16 G_1}{\mu},\left( \frac{4M}{\mu}\right)^{\frac{1}{1-\kappa}}\right\}$ if Assumption \ref{assum:G2}(ii). For completeness, we provide proof in Appendix \ref{proofbbarlamga}.

Based on the Lipschitz smoothness and distant dissipativity of these general drifts, the stability results of \citep{renaud2025stability} apply. Leveraging these results, we derive Theorem \ref{thm:main} which quantifies the Wasserstein distance between the distributions of iterates $\{Y_k\}_k$ (resp. $\{X_k\}_k$) of DC-LA and $\pi_{\lambda,\gamma}$ (resp. $\nu_{\lambda,\gamma}$). Detailed proof is in Appendix \ref{proofmaintheorem}.

\begin{theorem}
\label{thm:main}
Let $q \in \mathbb{N}^*$ and $\lambda>0$, under Assumptions \ref{assum:zero}, \ref{assum:dissatinf}, \ref{assum:G1}, \ref{assum:G2}, for $\{(X_k,Y_k)\}_k$ being DC-LA's sequence starting from $X_0$ with $\mathbb{E}\Vert X_0\Vert < +\infty$ if $q=1$ and $\mathbb{E}\Vert X_0 \Vert^{2q} <+\infty$ if $q\geq 2$. Let $\gamma$ satisfy: $\gamma \leq \frac{\mu \lambda^2}{2(2+\lambda L_f)^2}$ if $q=1$ and $\gamma \leq \min\!\left\{
\frac{\mu \lambda^2}{(2+\lambda L_f)^2 2^{2q+3} (2q-1)},\
\frac{\lambda}{4(2+\lambda L_f)}
\right\}$ if $q\geq 2$. There exist $A_{\lambda}, B_{\lambda} >0$ and $\rho_{\lambda} \in (0,1)$ such that
\begin{align*}
W_q(p_{X_{k+1}},\nu_{\lambda,\gamma}) \leq W_q(p_{Y_{k+1}},\pi_{\lambda,\gamma}) \leq A_{\lambda} \rho_{\lambda}^{k \gamma} + B_{\lambda} \gamma^{\frac{1}{2q}} 
\end{align*}
where $\pi_{\lambda,\gamma}$ and $\nu_{\lambda,\gamma}$ are defined in \eqref{mulambdagamma} and \eqref{eq:nu}.
\end{theorem}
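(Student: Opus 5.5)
The plan is to reduce everything to the two Langevin chains $\{Y_k\}$ (drift $b^\gamma_\lambda$) and $\{\Bar Y_k\}$ (drift $\Bar b^\gamma_\lambda$), and then apply the stability machinery of \citep{renaud2025stability}. Note that $\pi_{\lambda,\gamma}$ is the invariant law of the continuous Langevin diffusion with drift $\Bar b^\gamma_\lambda$, since $-\nabla\log\pi_{\lambda,\gamma}=\Bar b^\gamma_\lambda$.

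\textbf{Step 1: the inequality for $X$.} The left inequality is immediate. Since $X_{k+1}=\prox_{\gamma r_1^\lambda}(Y_{k+1})$ and $\nu_{\lambda,\gamma}=(\prox_{\gamma r_1^\lambda})_\#\pi_{\lambda,\gamma}$, we take any coupling $\xi$ of $(p_{Y_{k+1}},\pi_{\lambda,\gamma})$ and push it forward by $\prox_{\gamma r_1^\lambda}\times\prox_{\gamma r_1^\lambda}$. By nonexpansiveness (Lemma \ref{lem:Monreau}(iii)) the $q$-th moment of the displacement does not increase. Taking the optimal coupling gives $W_q(p_{X_{k+1}},\nu_{\lambda,\gamma})\le W_q(p_{Y_{k+1}},\pi_{\lambda,\gamma})$.

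\textbf{Step 2: verify the hypotheses of \citep{renaud2025stability}.} By Lemma \ref{lem:abcmm}, $\{Y_k\}$ is a ULA with drift $b^\gamma_\lambda$. This drift is $L_\lambda:=(2/\lambda+L_f)$-Lipschitz and, by Lemma \ref{lem:dissdrift}, $(\mu/2,R_1)$-distant dissipative uniformly in $\gamma\le\gamma_0$. The comparison drift $\Bar b^\gamma_\lambda$ has the same Lipschitz constant and is $(\mu/2,R_2)$-distant dissipative (Appendix \ref{proofbbarlamga}). The step-size restrictions in the statement are exactly of the form $\gamma\lesssim \mu/(2L_\lambda^2)$ for $q=1$, and the corresponding $q$-dependent versions for $q\ge2$. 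These restrictions ensure uniform-in-$k$ bounds on the $q$-th (resp.\ $2q$-th) moments of the chains, and the existence of a contraction rate $\rho_\lambda$ via reflection-coupling / Eberle-type concave distance arguments. The moment assumptions on $X_0$ transfer to $Y_1$ because $Y_1$ is an affine-Lipschitz image of $X_0$ plus Gaussian noise.

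\textbf{Step 3: triangle inequality.} We split
\[
W_q(p_{Y_{k+1}},\pi_{\lambda,\gamma})\le W_q(p_{Y_{k+1}},p_{\Bar Y_{k+1}})+W_q(p_{\Bar Y_{k+1}},\pi_{\lambda,\gamma}).
\]
For the second term we use the ULA convergence result for distant dissipative Lipschitz drifts. The chain $\Bar Y_k$ converges to its invariant law $\Bar\pi_\gamma$ geometrically, giving the $\rho_\lambda^{k\gamma}$ term. The bias $W_q(\Bar\pi_\gamma,\pi_{\lambda,\gamma})$ is $O(\gamma^{1/2q})$ (or better), by the standard one-step discretization bound interpolated through the $W_1$-to-$W_q$ comparison used in \citep{renaud2025stability}. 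For the first term we apply the stability theorem: with synchronous noise, the discrepancy is controlled by $\|b^\gamma_\lambda-\Bar b^\gamma_\lambda\|_{L_2}^{1/q}$ along the law of the chain. Pointwise,
\[
\|b^\gamma_\lambda(y)-\Bar b^\gamma_\lambda(y)\|\le L_\lambda\|\prox_{\gamma r_1^\lambda}(y)-y\|=\gamma L_\lambda\|\nabla (r_1^\lambda)^\gamma(y)\|\le \gamma L_\lambda G_1,
\]
since $\nabla(r_1^{\lambda})^\gamma$ is a subgradient of $r_1^\lambda$ at the prox point (Lemma \ref{lem:Monreau}(iv)) and $r_1^\lambda$ is $G_1$-Lipschitz. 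This yields a contribution $O(\gamma^{1/q})\le O(\gamma^{1/2q})$ for $\gamma\le1$, plus a transient geometric term absorbed into $A_\lambda\rho_\lambda^{k\gamma}$.

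\textbf{Main obstacle.} I expect the main obstacle to be Step 2 for $q\ge2$. There one must produce uniform $2q$-moment bounds and a $W_q$ contraction using only distant (not global) dissipativity. This is where the explicit constants $2^{2q+3}(2q-1)$ and $\lambda/(4(2+\lambda L_f))$ enter. I would try first to match Lemma \ref{lem:abcmm}, Lemma \ref{lem:dissdrift} and the bounded drift perturbation precisely to the assumption list of the stability theorem in \citep{renaud2025stability}, rather than reproving the coupling estimates.
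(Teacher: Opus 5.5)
\paragraph{Where the proposal breaks.} Your Step 1 is correct and matches the paper. So is the pointwise bound $\Vert b^\gamma_\lambda-\Bar b^\gamma_\lambda\Vert\le\gamma L_\lambda G_1$; the paper gets the slightly sharper constant $L_f+1/\lambda$ because the $\nabla r_1^\lambda\circ\prox_{\gamma r_1^\lambda}$ terms cancel. The gap is the first term of your Step 3.

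The stability result of \citep{renaud2025stability} (their Theorem 1) compares the \emph{invariant} laws: $p_{\lambda,\gamma}$ of $Q$ and $\Bar p_{\lambda,\gamma}$ of $\Bar Q$, with the $L_2$ norm taken under $p_{\lambda,\gamma}$. It gives no direct control of $W_q(p_{Y_{k+1}},p_{\Bar Y_{k+1}})$ at finite $k$.

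The mechanism you invoke, synchronous noise, does not give a $k$-uniform bound either. The drift is dissipative only for $\Vert x-y\Vert\ge R$. For $\Vert x-y\Vert<R$ the map $x\mapsto x-\gamma b^\gamma_\lambda(x)$ is merely $(1+\gamma L_\lambda)$-Lipschitz, so the per-step perturbation $\gamma^2 L_\lambda G_1$ can be amplified by a factor of order $e^{L_\lambda k\gamma}$. Contraction under distant dissipativity needs reflection coupling or weighted distances, and the cited theorem packages this only for invariant measures. A finite-time analogue could be obtained by telescoping $p_{Y_1}Q^k-p_{Y_1}\Bar Q^k$ against the contraction of $\Bar Q$, but that means reproving the stability theorem, not applying it.

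\paragraph{The repair (the paper's route).} The paper splits as follows, and the finite-time side chain $\Bar Y_k$ plays no role:
$W_q(p_{Y_{k+1}},\pi_{\lambda,\gamma})\le W_q(p_{Y_{k+1}},p_{\lambda,\gamma})+W_q(p_{\lambda,\gamma},\Bar p_{\lambda,\gamma})+W_q(\Bar p_{\lambda,\gamma},\pi_{\lambda,\gamma})$.
This needs three ingredients your proposal does not supply.

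First, $p_{\lambda,\gamma}$ and $\Bar p_{\lambda,\gamma}$ must exist with finite $2q$-th moments. This is Lemma \ref{lem:inPq} with $p'=2q$, $m=\mu/2$, $L=2/\lambda+L_f$, and it is exactly where $2^{2q+3}(2q-1)$ and $\lambda/(4(2+\lambda L_f))$ come from.

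Second, you need geometric contraction of $Q$ itself, started from $p_{Y_1}$ and from the invariant law $p_{\lambda,\gamma}$. This is in $W_1$ for $q=1$, and in the $V_q$-weighted norm for $q\ge 2$, so the prefactor involves $p_{Y_1}(V_q^2)+p_{\lambda,\gamma}(V_q^2)$.

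Third, that prefactor must be bounded uniformly in $\gamma$; otherwise $A_\lambda$ silently depends on $\gamma$. The same issue affects the prefactor of your convergence of $\Bar Y_k$ to $\Bar\pi_\gamma$. The paper handles this by bounding $W_{2q}(p_{\lambda,\gamma},\pi_{\lambda,\gamma})$ through $\Bar p_{\lambda,\gamma}$ with $\gamma_0$-dependent constants. It then uses $(r_1^\lambda)^\gamma\ge r_1^\lambda-\gamma G_1^2/2$, i.e.\ $\pi_{\lambda,\gamma}\le e^{G_1^2\gamma_0/2}\pi_\lambda$, whose moments are $\gamma$-free.

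A smaller point on the bias term: $\Bar b^\gamma_\lambda$ and $\pi_{\lambda,\gamma}$ themselves depend on $\gamma$. The paper runs the $\Bar b^\gamma_\lambda$-chain with an independent step size $\epsilon$. It applies \citep[Theorem 2]{renaud2025stability} with a constant depending only on the $\gamma$-uniform Lipschitz and dissipativity parameters, and only then sets $\epsilon=\gamma$.
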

The dependence of the bound on the key parameters $\lambda, L_f, \mu, R, G_1, G_2, M$ is discussed in Appendix \ref{app:lamdadepen} for case $q=1$ for simplicity. In terms of the step size condition, it comes mainly from the Wasserstein contraction of the Markov transition kernels \cite{de2019convergence}:$\gamma \leq \bar{\gamma} := \frac{m^+}{L^2}$, where $m^+$ is the distant dissipativity constant of the drift and $L$ is
its Lipschitz constant. In our setting, $m^+ = \mu/2$ and $L = 2/\lambda + L_f$. On the other hand, when $q \geq 2$, an additional
step size restriction is needed to ensure that all relevant distributions remain in the Wasserstein-$q$ space.

\begin{remark}
The side process $\{\Bar{Y}_k\}$ in \eqref{sideprocess} also defines a valid sampler with quantifiable behaviors. It corresponds to applying the forward step simultaneously to two Moreau envelopes, rather than performing two separate steps as in DC-LA. In this work, however, we use it only as an interim sequence to facilitate the analysis. 
\end{remark}

Finally, we quantify $W_q(\pi_{\lambda,\gamma},\pi_{\lambda})$ and $W_q(\pi_{\lambda},\pi)$. Since $\nabla r_1^{\lambda}(x) \in \partial r_1(\prox_{\lambda r_1}(x))$, under the assumption \ref{assum:G1}, $\Vert \nabla r_1^{\lambda}(x) \Vert \leq G_1$ for all $x$, which means that $ r_1^{\lambda}$ is $G_1$-Lipschitz continuous. Applying \citep[Proposition 1]{renaud2025stability}, $W_q(\pi_{\lambda,\gamma},\pi_{\lambda}) = O(\gamma^{1/q})$ and $W_q(\nu_{\lambda,\gamma},\pi_{\lambda}) = O(\gamma^{1/q})$\footnote{We remove the condition $\gamma \leq 2/G_1^2$ in \citep[Proposition 1]{renaud2025stability} for simplicity.}. Under the Assumption \ref{assum:G2}, we show in Appendix \ref{proof:plambdapi} that: $W_q(\pi_{\lambda},\pi) = O(\lambda^{1/q})$. Putting these bounds together, we derive the following bound.

\begin{theorem}
\label{thm:2}
Let $q \in \mathbb{N}^*$, under Assumptions \ref{assum:zero}, \ref{assum:dissatinf}, \ref{assum:G1}, \ref{assum:G2}, for $0<\lambda \leq \lambda_0$ for some $\lambda_0 >0$, let $\{X_k\}$ be the DC-LA's sequence starting from $X_0$ with $\mathbb{E}\Vert X_0\Vert < +\infty$ if $q=1$ and $\mathbb{E}\Vert X_0\Vert^{2q}<+\infty$ if $q \geq 2$. For $\gamma$ satisfying the condition as in Theorem \ref{thm:main}, there exist $A_{\lambda}, B'_{\lambda}, C>0$ and $\rho_{\lambda} \in (0,1)$,
\begin{align*}
    W_q(p_{X_{k+1}},\pi) \leq A_{\lambda} \rho_{\lambda}^{k \gamma} + B'_{\lambda} \gamma^{\frac{1}{2q}} + C \lambda^{\frac{1}{q}}.
\end{align*}
\end{theorem}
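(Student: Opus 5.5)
The plan is a triangle inequality that chains the three estimates already available:
\begin{align*}
W_q(p_{X_{k+1}},\pi) \leq W_q(p_{X_{k+1}},\nu_{\lambda,\gamma}) + W_q(\nu_{\lambda,\gamma},\pi_{\lambda}) + W_q(\pi_{\lambda},\pi).
\end{align*}
For the first term, Theorem \ref{thm:main} gives the bound $A_{\lambda}\rho_{\lambda}^{k\gamma} + B_{\lambda}\gamma^{\frac{1}{2q}}$, provided the step size satisfies the conditions stated there. For the second term, recall that $r_1^{\lambda}$ is $G_1$-Lipschitz by Lemma \ref{lem:Monreau}(iv) together with Assumption \ref{assum:G1}. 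Moreover, $f - r_2^{\lambda}$ is Lipschitz smooth and $V_\lambda$ is distant dissipative, which follows from the argument for $\bar b^\gamma_\lambda$ in the limit of small $\gamma$. Hence \citep[Proposition 1]{renaud2025stability} applies with $r_1^{\lambda}$ in the role of the regularizer and yields $W_q(\nu_{\lambda,\gamma},\pi_{\lambda}) \leq D_{\lambda}\gamma^{1/q}$. Finally, I would use $\gamma \leq \bar\gamma_\lambda$ (the step-size cap from Theorem \ref{thm:main}) to write $\gamma^{1/q} = \gamma^{\frac{1}{2q}}\gamma^{\frac{1}{2q}} \leq \bar\gamma_\lambda^{\frac{1}{2q}}\gamma^{\frac{1}{2q}}$. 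This absorbs the second term into the discretization term, with $B'_{\lambda} = B_{\lambda} + D_{\lambda}\bar\gamma_{\lambda}^{\frac{1}{2q}}$.

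The substantive step is the third term, $W_q(\pi_{\lambda},\pi) \leq C\lambda^{1/q}$, where $C$ must be uniform over $\lambda \in (0,\lambda_0]$. I would use the weighted total-variation bound $W_q^q(\mu,\nu) \leq 2^{q-1}\int \Vert x\Vert^q\, d|\mu-\nu|(x)$, which holds by Villani's Theorem 6.15, and combine it with a pointwise control on the potentials.

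For the potentials, Lemma \ref{lem:Monreau}(i) gives $0 \leq r_1 - r_1^{\lambda} \leq G_1^2\lambda/2$. For $r_2$ there are two cases. Under Assumption \ref{assum:G2}(i), the same lemma gives $0 \leq r_2 - r_2^\lambda \leq G_2^2\lambda/2$. Under Assumption \ref{assum:G2}(ii), $r_2^\lambda$ approximates $r_2$ at a rate like $\lambda^{\frac{1+\kappa}{1-\kappa}}$ or better, since $r_2 - r_2^\lambda \leq \sup_s (M s^{1+\kappa}/(1+\kappa) - s^2/(2\lambda))$; in any case the error is $O(\lambda)$ for $\lambda \leq \lambda_0$. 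Consequently $|V_\lambda - V| \leq c\lambda$ uniformly in $x$, and therefore
\[
\left|\frac{e^{-V_\lambda}}{Z_\lambda} - \frac{e^{-V}}{Z}\right| \leq (e^{2c\lambda}-1)\frac{e^{-V}}{Z} \leq c'\lambda\, \pi(x).
\]
This gives $W_q^q(\pi_\lambda,\pi) \leq 2^{q-1}c'\lambda\int \Vert x\Vert^q d\pi$, which is finite because $\pi$ has sub-Gaussian tails under Assumption \ref{assum:dissatinf}. Taking $q$-th roots gives $C\lambda^{1/q}$.

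The main obstacle I expect is verifying that the hypotheses of \citep[Proposition 1]{renaud2025stability} hold in our setting. That result was stated for a weakly convex regularizer with a smooth $f$. Here the smooth part $f - r_2^{\lambda}$ has smoothness constant $L_f + 1/\lambda$, and the regularizer $r_1^{\lambda}$ is convex and Lipschitz. I would check that the proof only uses the Lipschitz property of the regularizer, the bound $0 \leq r_1^\lambda - (r_1^\lambda)^\gamma \leq G_1^2\gamma/2$, and $\Vert \prox_{\gamma r_1^\lambda}(x) - x\Vert \leq \gamma G_1$, plus moment bounds on $\pi_{\lambda,\gamma}$. The same TV-weighted argument as above would then give both $W_q(\pi_{\lambda,\gamma},\pi_\lambda)$ and the pushforward shift, at cost $O(\gamma^{1/q})$ with $\lambda$-dependent constants.
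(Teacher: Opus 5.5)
Your proposal follows the paper's structure: the same three-term triangle inequality, Theorem~\ref{thm:main} for the first term, \citep[Proposition 1]{renaud2025stability} for $W_q(\nu_{\lambda,\gamma},\pi_\lambda)$ with $\gamma^{1/q}$ absorbed into the $\gamma^{\frac{1}{2q}}$ term, and a weighted total-variation bound for $W_q(\pi_\lambda,\pi)$. Under Assumption~\ref{assum:G2}(i), your treatment of the third term is correct and essentially the paper's.

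\textbf{The gap is under Assumption~\ref{assum:G2}(ii).} Your bound $r_2 - r_2^\lambda \le \sup_s\{M s^{1+\kappa}/(1+\kappa) - s^2/(2\lambda)\}$ drops the first-order term. Convexity only gives $r_2(x)-r_2(y)\le\langle \nabla r_2(x),x-y\rangle$, hence $r_2(x)-r_2^\lambda(x)\le \frac{\lambda}{2}\Vert\nabla r_2(x)\Vert^2$. Conversely, write $z=\prox_{\lambda r_2}(x)$, so that $x-z=\lambda\nabla r_2(z)$; then $r_2(x)-r_2^\lambda(x)\ge \frac{\lambda}{2}\Vert \nabla r_2(z)\Vert^2$. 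The error is therefore of size $\lambda\Vert\nabla r_2\Vert^2$, not $\lambda^{\frac{1+\kappa}{1-\kappa}}$:
\begin{itemize}
\item $r_2(x)=\langle a,x\rangle$ already gives exactly $\frac{\lambda}{2}\Vert a\Vert^2$.
\item For $r_2=\Vert\cdot\Vert^p$ with $1<p<2$ (the $\ell_1-\ell_2^p$ example in Table~\ref{tab:dc-regularizers}), the error is at least $\frac{\lambda p^2}{2}\Vert z\Vert^{2p-2}$, which tends to $\infty$ as $\Vert x\Vert\to\infty$.
\end{itemize}
So $|V_\lambda-V|\le c\lambda$ uniformly in $x$ is false. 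With it fails the pointwise bound $|\pi_\lambda-\pi|\le c'\lambda\,\pi$ on which your third-term argument rests.

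\textbf{The fix is to keep the $x$-dependence.} Use $\Vert\nabla r_2(x)\Vert^2\le 2(\Vert\nabla r_2(0)\Vert^2+M^2\Vert x\Vert^{2\kappa})$ and $0\le r_1-r_1^\lambda\le\lambda G_1^2/2$. Then $t:=V-V_\lambda$ satisfies $-\lambda(\Vert\nabla r_2(0)\Vert^2+M^2\Vert x\Vert^{2\kappa})\le t(x)\le \lambda G_1^2/2$, so $|e^{t(x)}-1|\le c\lambda(1+\Vert x\Vert^{2\kappa})$ for $\lambda\le\lambda_0$.

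The normalizing constants must then be handled separately. Since $Z_\lambda/Z=\int e^{t}\,d\pi$, this ratio is bounded below uniformly for $\lambda\le\lambda_0$ and satisfies $|Z_\lambda/Z-1|=O(\lambda)$. Hence $|\pi_\lambda/\pi-1|\le c'\lambda(1+\Vert x\Vert^{2\kappa})$. The weighted total-variation bound becomes $\int(1+\Vert x\Vert^q)\,|\pi_\lambda-\pi|\le c'\lambda\int(1+\Vert x\Vert^q)(1+\Vert x\Vert^{2\kappa})\,d\pi$, which is finite by the sub-Gaussian tails of $\pi$.

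This is the paper's route in Appendix~\ref{proof:plambdapi}. There the authors use $|e^t-1|\le|t|e^{|t|}$, which requires integrability of $e^{\lambda M^2\Vert x\Vert^{2\kappa}}$ against $\pi$; that is where $\kappa<1$ enters. The one-sided bound $t\le\lambda G_1^2/2$ above avoids the exponential weight and needs only polynomial moments of $\pi$.
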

$A_{\lambda}$ is the same as in Theorem \ref{thm:main} and $B'_{\lambda} = B_{\lambda} + O(1)$. See Appendix \ref{app:lamdadepen} for the dependence of this bound on the key parameters. In Theorem~\ref{thm:2}, the bound converges to zero in a stage-wise fashion. 
The third term scales as \(O(\lambda^{1/q})\). 
With \(\lambda\) fixed, the second term scales as \(O(\gamma^{1/(2q)})\). 
When both \(\lambda\) and \(\gamma\) are fixed, the first term vanishes with $k$.

\subsection{Smooth $r_2$}
\label{subsec:r2smooth}

\citet{yao2018efficient} identified a structured family of DC regularizers with a Lipschitz continuous first component and a smooth second component, consisting of the Geman penalty \citep{geman1995nonlinear}, log-sum penalty \citep{candes2008enhancing}, Laplace \citep{trzasko2008highly}, Minimax Concave Penalty \citep{zhang2010nearly}, and Smoothly Clipped Absolute Deviation \citep{fan2001variable}, see Table \ref{tab:dc-regularizers}. When $r_2$ has some smoothness, we only need to approximate $r_1$ by its Moreau envelope, resulting in the augmented potential $V_{\lambda} = f - r_2 + r_1^{\lambda}$ and the corresponding scheme named DC-LA-S(implified)
\begin{align}
\label{eq:psglareg2}
Y_{k+1} &= X_k -\gamma\nabla f(X_k)+\gamma \nabla r_{2}(X_k) + \sqrt{2\gamma}Z_{k+1}\notag\\
    X_{k+1} &= \prox_{\gamma r_{1}^{\lambda}}\left( Y_{k+1}\right).
\end{align}

Applying a similar analysis in the general case, we obtain Theorem \ref{fin:bound2}. Proof is given in Appendix \ref{proof:psglareg2}.

\begin{theorem}
\label{fin:bound2}
Let $q \in \mathbb{N}^*$. Under Assumptions \ref{assum:zero}, \ref{assum:dissatinf}, \ref{assum:G1}, and $r_2$ being $L_{r_2}$-smooth, for $\lambda >0$, let $\{X_k\}$ be DC-LA-S's sequence starting from $X_0$ with $\mathbb{E}\Vert X_0\Vert < +\infty$ if $q=1$ and $\mathbb{E}\Vert X_0\Vert^{2q}<+\infty$ if $q \geq 2$. Let $\gamma$ satisfy: 
$\gamma \leq \frac{\mu \lambda^2}{2(1+\lambda L_f + \lambda L_{r_2})^2}$ if $q=1$ and $\gamma \leq \min\left\{\frac{\mu \lambda^2}{(1+\lambda L_f + \lambda L_{r_2})^2 2^{2q+3} (2q-1)},\frac{\lambda}{4(1 + \lambda L_f + \lambda L_{r_2})}\right\}$ if $q\geq 2$. There exist $A_{\lambda}'', B_{\lambda}'', C''>0$ and $\rho_{\lambda} \in (0,1)$ such that
\begin{align*}
    W_q(p_{X_{k+1}},\pi) \leq A_{\lambda}'' \rho_{\lambda}^{k \gamma} + B_{\lambda}'' \gamma^{\frac{1}{2q}} + C'' \lambda^{\frac{1}{q}}.
\end{align*}

\end{theorem}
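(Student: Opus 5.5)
The plan is to mirror the proof of Theorem \ref{thm:2}, with $\nabla r_2^{\lambda}$ replaced by $\nabla r_2$ throughout. The error is split by the triangle inequality as
\begin{align*}
W_q(p_{X_{k+1}},\pi) \leq W_q(p_{X_{k+1}},\nu_{\lambda,\gamma}) + W_q(\nu_{\lambda,\gamma},\pi_{\lambda}) + W_q(\pi_{\lambda},\pi),
\end{align*}
where the objects are now defined as follows:
\begin{itemize}
\item $\pi_{\lambda} \propto e^{-f+r_2-r_1^{\lambda}}$;
\item $\pi_{\lambda,\gamma} \propto e^{-f+r_2-(r_1^{\lambda})^{\gamma}}$;
\item $\nu_{\lambda,\gamma} = (\prox_{\gamma r_1^{\lambda}})_{\#}\pi_{\lambda,\gamma}$.
\end{itemize}

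\textbf{Step 1: $Y_k$ is a general ULA.} As in Lemma \ref{lem:abcmm}, I will show that $Y_{k+1} = Y_k - \gamma b(Y_k) + \sqrt{2\gamma}Z_{k+1}$ with drift
\begin{align*}
b(y)=\nabla r_1^{\lambda}(p(y))+\nabla f(p(y))-\nabla r_2(p(y)), \qquad p=\prox_{\gamma r_1^{\lambda}}.
\end{align*}
This uses the identity $p(y) = y-\gamma\nabla r_1^{\lambda}(p(y))$. Since $\nabla r_1^{\lambda}$ is $\tfrac1\lambda$-Lipschitz and $p$ is nonexpansive, $b$ is $(\tfrac1\lambda+L_f+L_{r_2})$-Lipschitz. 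This explains the constant $1+\lambda L_f+\lambda L_{r_2}$ in the step-size condition, in place of $2+\lambda L_f$.

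The side drift $\bar b(y)=\nabla r_1^{\lambda}(p(y))+\nabla f(y)-\nabla r_2(y)$ equals $-\nabla\log\pi_{\lambda,\gamma}$ and has the same Lipschitz constant.

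\textbf{Step 2: distant dissipativity (main obstacle).} I need $\langle b(x)-b(y),x-y\rangle\ge \tfrac\mu2\|x-y\|^2$ for $\|x-y\|$ large, and the same for $\bar b$. Unlike Lemma \ref{lem:dissdrift}, $r_2$ is not bounded-gradient, so its perturbation cannot be absorbed as an $O(1)$ error. Instead I will apply Assumption \ref{assum:dissatinf} directly with $u_2=\nabla r_2$, and treat only the $r_1$ pieces as perturbations.

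Write $\tilde x=p(x)$ and $\tilde y=p(y)$, so that $\|x-\tilde x\|\le\gamma G_1$. The first ingredient uses Lemma \ref{lem:Monreau}(iv): $\nabla r_1^{\lambda}(z)\in\partial r_1(\prox_{\lambda r_1}z)$ and $\|z-\prox_{\lambda r_1}z\|\le\lambda G_1$. The second ingredient is that $\nabla f-\nabla r_2$ is $(L_f+L_{r_2})$-Lipschitz. Together these let me replace the points $\tilde x,\tilde y$ and $\prox_{\lambda r_1}$ images by $x,y$, with errors of size
\begin{align*}
O\big((L_f+L_{r_2})(\gamma_0+\lambda)G_1\|x-y\| + G_1\|x-y\|\big).
\end{align*}
These errors are linear in $\|x-y\|$, so they are dominated by $\tfrac\mu2\|x-y\|^2$ once
\begin{align*}
\|x-y\|\ge R:=\max\Big\{R_0,\ c\,\frac{G_1(1+(L_f+L_{r_2})(\gamma_0+\lambda))}{\mu}\Big\}
\end{align*}
for a suitable constant $c$. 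A cleaner route for the $r_1$ term is to use its monotonicity: the $r_1$ contribution is at least $-4G_1\|x-y\|$ by boundedness alone. With that, only the $\nabla f-\nabla r_2$ displacement requires the Lipschitz bound. The constant $\lambda$ may enter $R$, which is acceptable since all the constants in the statement are allowed to depend on $\lambda$.

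\textbf{Step 3: apply the stability results.} With Lipschitz and distant-dissipative drifts, I apply the contraction and stability results of \citep{renaud2025stability} exactly as in Theorem \ref{thm:main}. This gives $W_q(p_{Y_{k+1}},\pi_{\lambda,\gamma})\le A''_\lambda\rho_\lambda^{k\gamma}+B_\lambda\gamma^{1/(2q)}$. The step-size bounds are the stated ones, with the Lipschitz constant replaced as in Step 1. Nonexpansiveness of $p$ then transfers this bound to $W_q(p_{X_{k+1}},\nu_{\lambda,\gamma})$.

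\textbf{Step 4: the two bias terms.} Since $r_1^{\lambda}$ is $G_1$-Lipschitz, \citep[Proposition 1]{renaud2025stability} gives $W_q(\nu_{\lambda,\gamma},\pi_\lambda)=O(\gamma^{1/q})$, which is absorbed into $B''_\lambda\gamma^{1/(2q)}$.

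For $W_q(\pi_\lambda,\pi)$, I note that by Lemma \ref{lem:Monreau}(i), $0\le r_1-r_1^{\lambda}\le G_1^2\lambda/2$. So the densities satisfy $\pi_\lambda\propto e^{-V}e^{\delta}$ with $0\le\delta\le G_1^2\lambda/2$, and hence the total variation distance is $O(\lambda)$. The two measures have uniformly bounded $2q$-th moments, by the sub-Gaussian tails from Appendix \ref{app:gausstail}, which persist uniformly in $\lambda\le\lambda_0$. The standard bound $W_q^q\le C\,\|\cdot\|_{TV}^{1/2}(\text{moments})$ would only give $\lambda^{1/(2q)}$. To reach $\lambda^{1/q}$, I will instead use the weighted-TV bound
\begin{align*}
W_q^q\le 2^{q-1}\int\|x\|^q\,d|\pi_\lambda-\pi|,
\end{align*}
together with $|\pi_\lambda-\pi|\le O(\lambda)(\pi_\lambda+\pi)$ pointwise. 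This yields $W_q(\pi_\lambda,\pi)\le C''\lambda^{1/q}$. The triangle inequality from the beginning then concludes.
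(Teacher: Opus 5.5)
Your proposal is correct and follows essentially the same route as the paper. You rewrite $\{Y_k\}$ as a general ULA with a $(\frac{1}{\lambda}+L_f+L_{r_2})$-Lipschitz drift, and obtain distant dissipativity by bounding the $r_1$ terms with $G_1$ and the $\nabla f-\nabla r_2$ displacement with $(L_f+L_{r_2})\gamma G_1$ before invoking Assumption \ref{assum:dissatinf} with $u_2=\nabla r_2$. You then reuse the machinery of Theorem \ref{thm:main}, and your explicit weighted-total-variation bound for $W_q(\pi_\lambda,\pi)$ (via $e^{-G_1^2\lambda/2}\le \pi_\lambda/\pi\le e^{G_1^2\lambda/2}$) is exactly the argument behind the paper's appeal to \citep[Proposition 1]{renaud2025stability}.
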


\section{Experiments}

We study sampling problems using an $\ell_1-\ell_2$ prior \citep{yin2015minimization} on synthetic data (Gaussian likelihoods with varying means and covariance matrices) in a 2D setting to enable density visualization in Subsection \ref{subsec:his}, and a DCINNs prior \citep{zhang2025learning} on real Computed Tomography data of size $512 \times 512$ in Subsection \ref{sec:tomo}.  See Appendix \ref{app:exp} for additional experiments and details.\footnote{Our code is at \url{https://github.com/MCS-hub/DC-LA2026}.} 

\subsection{$\ell_1 - \ell_2$ prior}
\label{subsec:his}

We consider the potential of the form $V(x) = f(x) + \tau(\Vert x \Vert_1 - \Vert x \Vert_2)$ where $f(x) = \frac{1}{2}(x-\mu)^{\top}\Sigma(x-\mu)$ for some $\Sigma \succ 0$ and $\tau>0$ controlling the sparsity level. The proximal operators of $\Vert \cdot \Vert_1$ and $\Vert \cdot \Vert_2$ are available in closed-form, known as the soft thresholding operator and the block soft thresholding operator, respectively (Appendix \ref{app:exp}). Furthermore, both $\Vert \cdot \Vert_1$ and $\Vert \cdot \Vert_2$ are Lipschitz and Lemma \ref{lem:fV} applies, implying that $V$ is distant dissipative.

\paragraph{Baselines} Since $r$ is not weakly convex, PSGLA does not come with convergence guaranties. Nevertheless, because the proximal operator of $r$ admits a closed-form expression~\citep{lou2018fast}, we include a comparison of our scheme against PSGLA. For reference, we also run ULA on both the nonsmooth potential $V$ and its smoothed surrogate $V_{\lambda}$, the latter referred to as Moreau ULA.

\begin{remark}
Each DC-LA iteration requires one gradient evaluation of \(f\) and one proximal evaluation each for \(r_1\) and \(r_2\), while ULA requires evaluations of \(\nabla f\), \(\partial r_1\), and \(\partial r_2\). In this experiment, the \(\ell_1\) and \(\ell_2\) proximal maps are available in closed form, so DC-LA has a per-iteration complexity comparable to ULA. Without closed-form proximal maps, DC-LA may require convex solvers, and thus incur higher costs. DC-LA and Moreau ULA have the same per-iteration complexity.
\end{remark}

\paragraph{Setups} For the target distribution, we set $\tau=10$, $\Sigma =\begin{bsmallmatrix} 1 & 0.8 \\ 0.8 & 1 \end{bsmallmatrix}, \mu \in \{[0,0]^{\top},[1,1]^{\top},[2,2]^{\top}\}$. For the augmented potential $V_{\lambda}$, we set $\lambda=0.01$. Although our theoretical analysis provides a sufficient condition on the step size $\gamma$ for convergence of DC-LA, in this experiment, we fix a common step size $\gamma = 0.005$ for all methods across all datasets. This value yields stable behavior and reasonable mixing for three algorithms and is used as a neutral default rather than a performance-optimized choice. For each algorithm, we run $5000$ chains of length $1000$ and keep the last samples.  In Appendix~\ref{app:l1l2}, we conduct an ablation study on $(\lambda, \gamma)$, suggesting DC-LA is relatively robust.

\paragraph{Results} Figure \ref{fig:l1l2exp2} shows the target density (the normalizing constant is computed by \texttt{dblquad} from \texttt{scipy}) and histograms of samples produced by the sampling algorithms. The effect of $\ell_1 - \ell_2$ prior can be seen clearly as it highlights coordinate axes, promoting sparsity. DC-LA faithfully captures the target density. On the other hand, ULA and Moreau ULA are blurry, while PSGLA is the sharpest but appears to overemphasize the coordinates. We compute the binned KL divergence \footnote{In this scenario, the Wasserstein distance is difficult to evaluate due to the absence of true samples from the target distribution.} between the histograms produced by each sampling method and the binned target distribution using identical bins, and Figure~\ref{fig:l1l2exp2binKL} shows that DC-LA achieves the lowest KL divergence across bin resolutions.

\begin{figure*}[htbp]
    \centering
    \begin{subfigure}{0.75\textwidth}
        \centering\includegraphics[width=\linewidth]{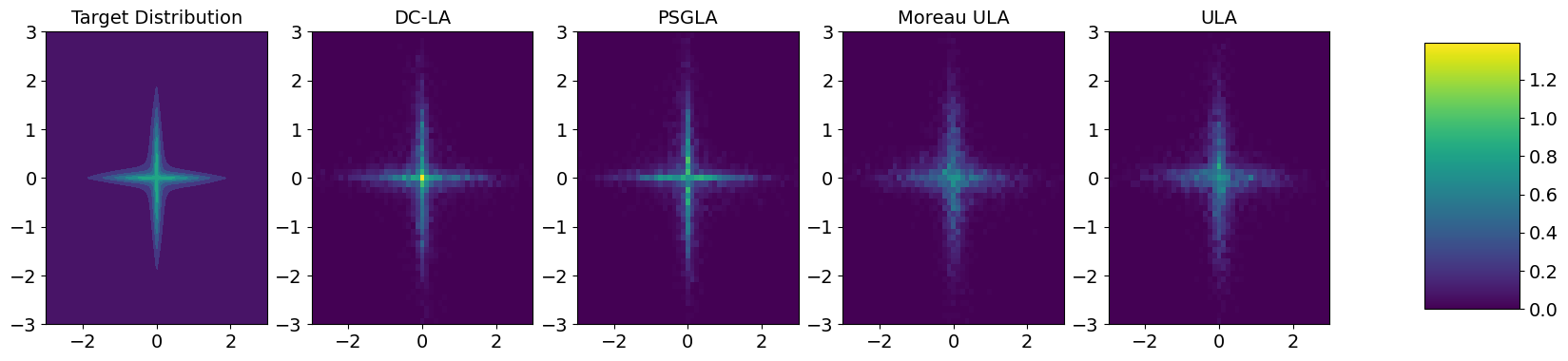}
        \caption{$\mu=[0,0]^{\top}$}
        \label{fig:sub1}
    \end{subfigure}
    \hfill
    \begin{subfigure}{0.75\textwidth}
        \centering\includegraphics[width=\linewidth]{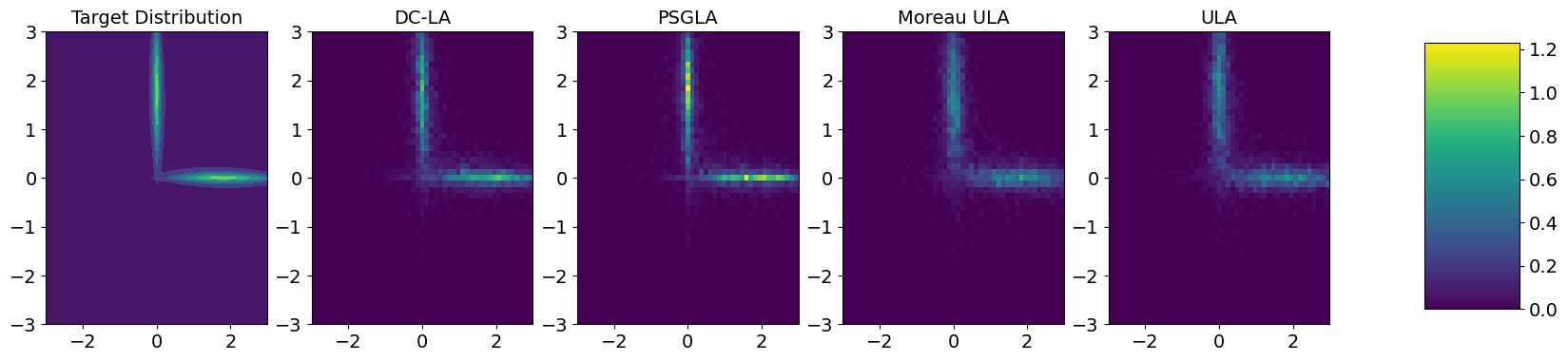}
        \caption{$\mu=[1,1]^{\top}$}
    \label{fig:sub2}
    \end{subfigure}
    \hfill
    \begin{subfigure}{0.75\textwidth}
        \centering    \includegraphics[width=\linewidth]{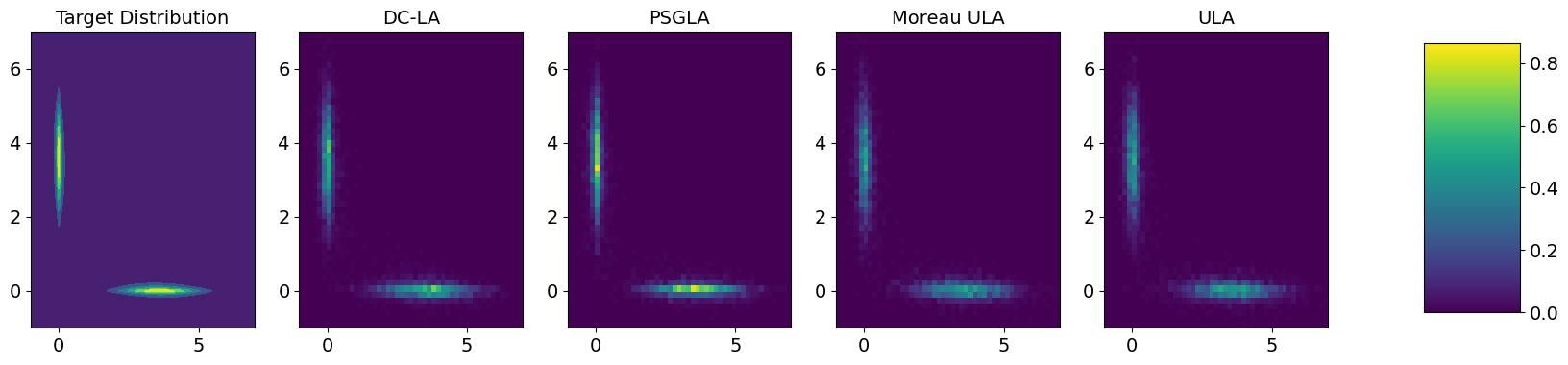}
        \caption{$\mu=[2,2]^{\top}$}
    \label{fig:sub3}
    \end{subfigure}
    \caption{Target densities and histograms of samples produced by DC-LA, PSGLA, Moreau ULA, and ULA}
    \label{fig:l1l2exp2}
\end{figure*}

\begin{figure}[htbp]
    \centering
    
    \begin{subfigure}{0.235\textwidth}
        \centering
        \includegraphics[width=\linewidth]{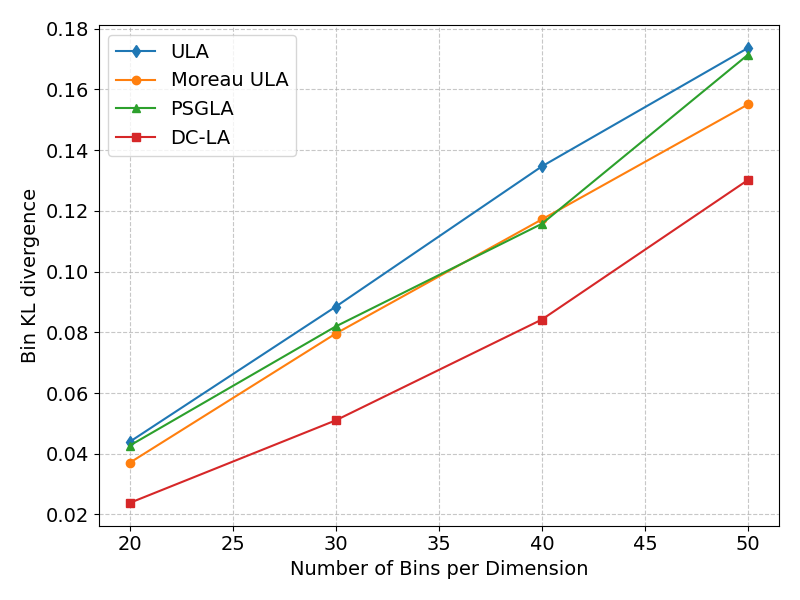}
        \caption{$\mu=[0,0]^{\top}$}
        \label{fig:sub1}
    \end{subfigure}
    \hfill
    \begin{subfigure}{0.235\textwidth}
        \centering
        \includegraphics[width=\linewidth]{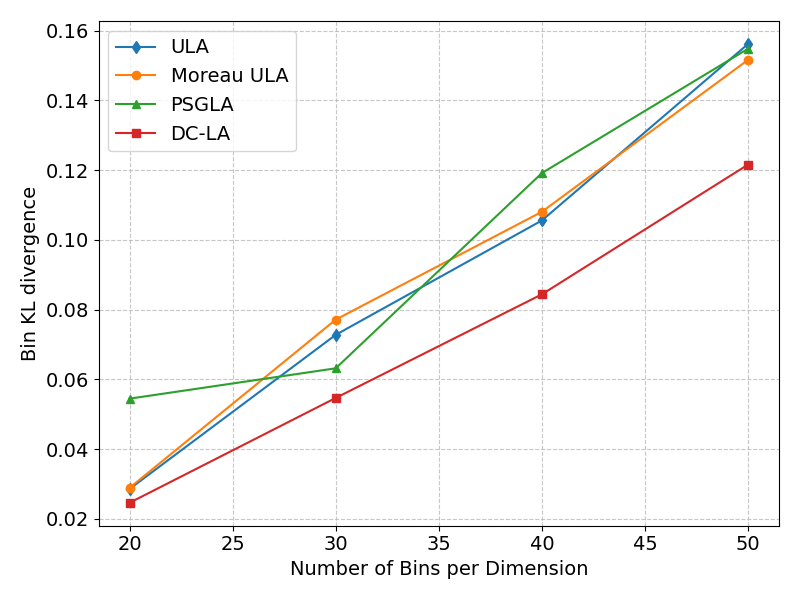}
        \caption{$\mu=[1,1]^{\top}$}
        \label{fig:sub2}
    \end{subfigure}
    \hfill
    \begin{subfigure}{0.235\textwidth}
        \centering
        \includegraphics[width=\linewidth]{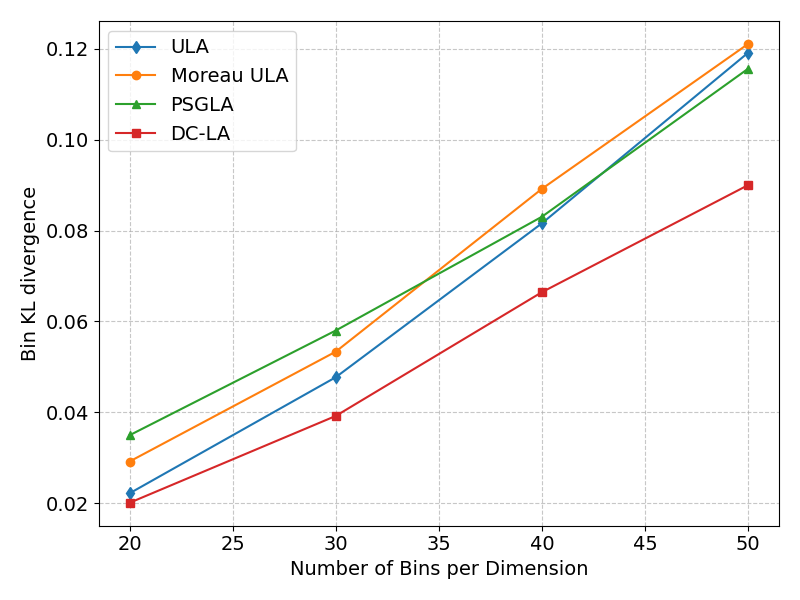}
        \caption{$\mu=[2,2]^{\top}$}
        \label{fig:sub3}
    \end{subfigure}

    \caption{Binned KL divergences between samples from \{ULA, Moreau ULA, PSGLA, DC-LA\} and the target distributions}
    \label{fig:l1l2exp2binKL}
\end{figure}

\subsection{Computed Tomography with DICNNs priors}

\label{sec:tomo}

We consider a Computed Tomography (CT) experiment using the Mayo Clinic's human abdominal CT scans \citep{moen2021low}, each of size $512 \times 512$. Given a ground truth CT scan $x^*$, the forward model is $y = A x^* + \epsilon$ where $\epsilon \sim \mathcal{N}(0,\sigma^2 I)$ and $A$ is the data acquisition using a 2D parallel-beam ray transform implemented in ODL \citep{adler2017operator} with $350$ uniform projections collected from $0^{\circ}$ to $120^{\circ}$. We use the data-driven DC prior based on the difference of two input-convex neural networks (ICNNs) proposed in \citep{zhang2025learning}. The target distribution is of the form $\pi(x) \propto \exp\left(-\frac{1}{2\sigma^2} \Vert y - Ax \Vert^2 - \tau(\icnn_1(\theta_1^*,x)-\icnn_2(\theta_2^*,x))\right)$ where $\icnn_1, \icnn_2$ are two ICNNs with pretrained parameters $\theta_1^*,\theta_2^*$. The experiment is conditioned on $\sigma$, i.e., treating $\sigma$ as a known parameter. The DC regularizer $r(\theta,x)=\icnn_1(\theta_1,x) - \icnn_2(\theta_2,x)$ was trained using the adversarial regularization framework \citep{lunz2018adversarial} encouraging it to be Lipschitz. Therefore, we largely assume that each of its components is also Lipschitz. On the other hand, as $A$ is in general ill-conditioned, Lemma \ref{lem:fV} does not apply directly. We can slightly augment $f$ to confine it at infinity. For example, we can add to $f$ a radial part $\phi(\|x\|)$, where $\phi(r) \approx 0$ for $r \le R$ and $\phi(r) \approx (r - R)^2$ for $r > R$ with $R$ sufficiently large to preserve the main posterior mass. Nevertheless, in this experiment, we leave the posterior as-is.

\paragraph{Setups} We set $\sigma=0.2$, $\tau = \hat{\lambda}/\sigma^2$ where $\hat{\lambda} = \Vert A^*(A x_{val}-y_{val}) \Vert$ with one validation data point $(x_{val},y_{val})$. For DC-LA, the proximal operators of $\icnn_1$ and $\icnn_2$ are not in closed-form. For a given $\eta>0$, let $v^* = \prox_{\eta \icnn_i}(x)$, $v^*$ solves the fixed-point equation: $v^* \in x - \eta \partial \icnn_i(v^*)$, we apply several fixed-point iterations: $v := x - \eta \partial \icnn_i(v)$. In practice, we found that $1$ iteration is sufficient, and we use it for both proximal operators. Note that \citet{ehrhardt2024proximal} suggested that bounded approximation errors in computing proximal operators lead to controlled bias; extending our analysis to this setting is a promising direction for future work. The step size of DC-LA is $\gamma = 10^{-5} \sigma^2$ and the smoothing parameter $\lambda=10^{-4}$. We simulate 100 Markov chains, each of length 2000, initialized from a black image and retaining only the final sample from each chain. We also compute the MAP estimate by iterating $x_{k+1} = \prox_{\gamma r_1}(x_k-\gamma \nabla f(x_k)+\eta \partial r_2(x_k))$ (called PSM -- Proximal Subgradient Method \citep{zhang2025learning}), which is the optimization counterpart of DC-LA. The optimizer runs for up to $20,000$ iterations.

\paragraph{Results} 

In Figure \ref{CTUQ} (more results are given in Appendix \ref{app:tomo}), panel (a) shows the ground-truth image, panel (b) presents the posterior mean produced by DC-LA, panel (c) shows the estimated pixel-wise posterior variance produced by DC-LA, while panels (d), (e), (f) show the MAP reconstructions produced by PSM at iterations $2000,\  10000,\  20000$ iterations. 

The posterior mean from DC-LA (with $100$ parallel chains) successfully recovers the overall anatomical structures. It achieves \emph{higher} SSIM (Structural Similarity Index Measure) and PSNR (Peak Signal-to-Noise Ratio) \footnote{SSIM and PSNR are reported as rough indicators of reconstruction quality; higher values are better. They should not be interpreted as metrics of sampling accuracy, and their values are strongly influenced by the choice of prior.} than the MAP estimate obtained by PSM at $2000$ iterations, which matches the length of the DC-LA chains. At $10000$ iterations, PSM roughly matches the performance of DC-LA. We run the PSM until the $20,000$ iterations, when it slightly surpasses DC-LA. The PSM result at $20000$ iterations improves over that at $10000$ iterations, although there are signs of overfitting by $20000$ iterations (Figure \ref{fig:psnrpsm}). The posterior variance produced by DC-LA highlights regions of higher uncertainty, primarily around boundaries and fine textures, indicating where the reconstruction is less confident. In contrast, large homogeneous regions exhibit small variance, implying agreement between the likelihood and prior in these regions. However, the samples miss a small structure in the lower-left region, and the posterior variance does not highlight this area, suggesting either that the samples are not sufficiently representative of the posterior or that the posterior itself is overconfident in this region.

\begin{figure}[ht]
    \centering
    \begin{subfigure}[t]{0.15\textwidth}
        \centering
        \includegraphics[width=\linewidth]{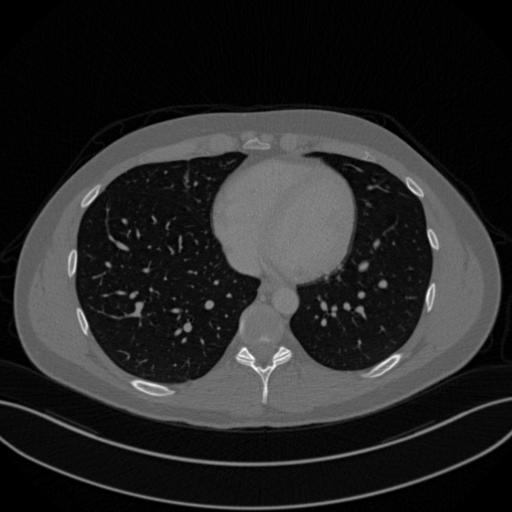}
        \caption{Ground truth}
    \end{subfigure}\hfill
    \begin{subfigure}[t]{0.15\textwidth}
        \centering
        \includegraphics[width=\linewidth]{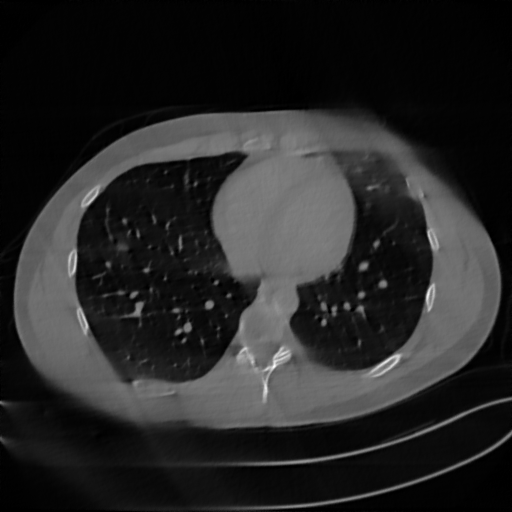}
        \caption{DC-LA mean}
    \end{subfigure}\hfill
    \begin{subfigure}[t]{0.15\textwidth}
        \centering
        \includegraphics[width=\linewidth]{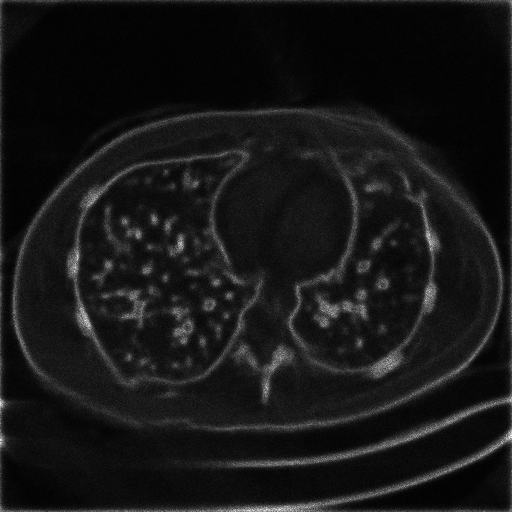}
        \caption{DC-LA variance}
    \end{subfigure}

        \begin{subfigure}[t]{0.15\textwidth}
        \centering
        \includegraphics[width=\linewidth]{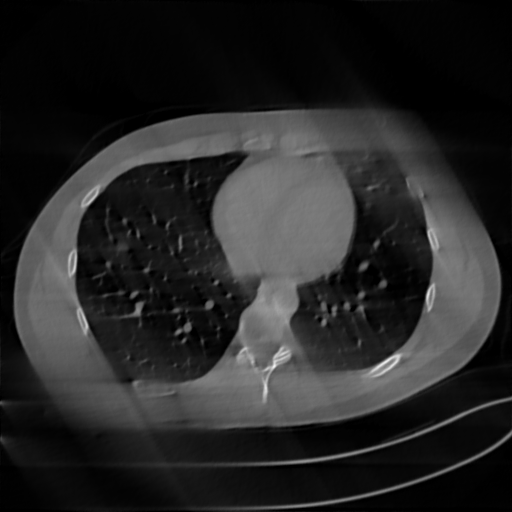}
        \caption{PSM MAP (iter. 2k)}
    \end{subfigure}\hfill
    \begin{subfigure}[t]{0.15\textwidth}
        \centering
        \includegraphics[width=\linewidth]{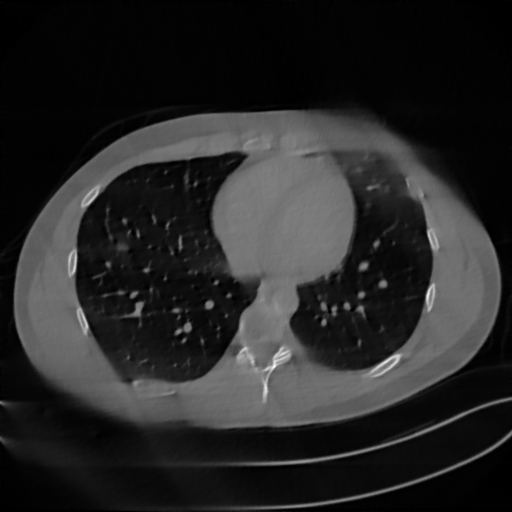}
        \caption{PSM MAP (iter. 10k)}
    \end{subfigure}\hfill
    \begin{subfigure}[t]{0.15\textwidth}
        \centering
        \includegraphics[width=\linewidth]{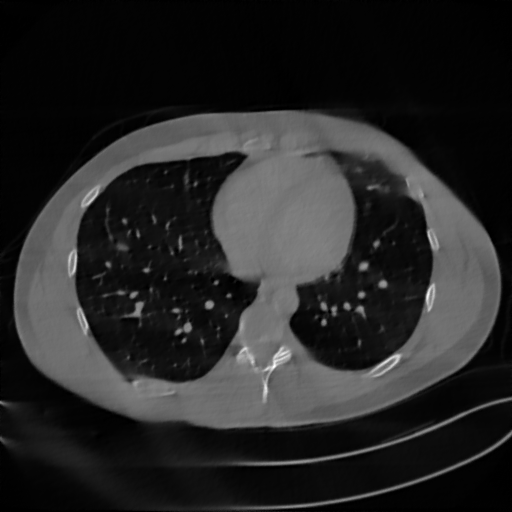}
        \caption{PSM MAP (iter. 20k)}
    \end{subfigure}
    \caption{CT reconstruction results.
DC-LA mean achieves SSIM 0.8493 and PSNR 26.2303.
PSM MAP with iterations: 
2k (SSIM 0.7586, PSNR 24.2803),
10k (SSIM 0.8488, PSNR 26.2498),
20k (SSIM 0.8501, PSNR 26.5246).}
    \label{CTUQ}
\end{figure}

\begin{figure}
    \centering
    \includegraphics[width=0.8\linewidth]{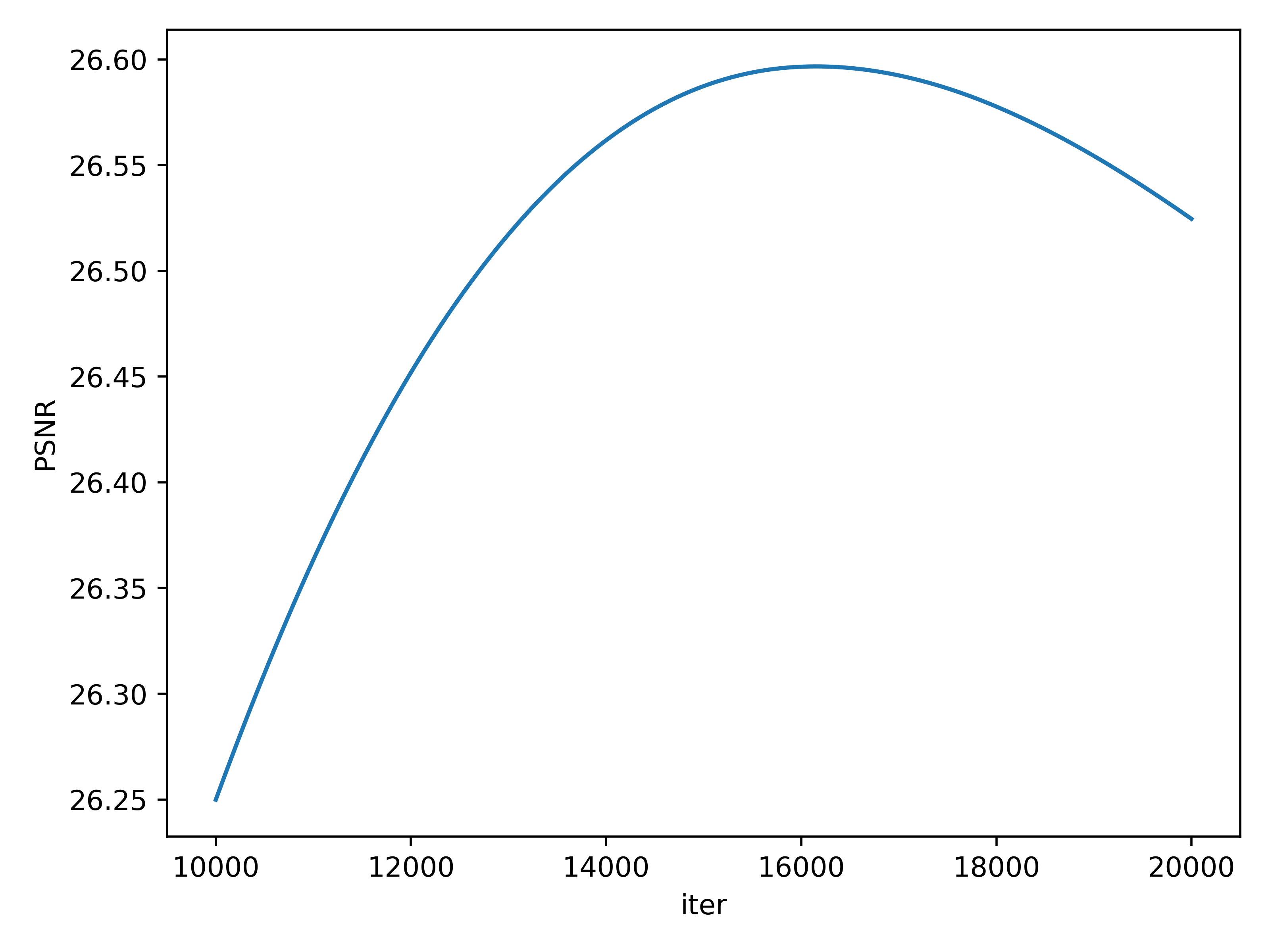}
    \caption{PSNR curve of PSM}
    \label{fig:psnrpsm}
\end{figure}

\section{Conclusion}

We propose a mathematically justified sampler DC-LA for nonsmooth DC regularizers not necessarily weakly convex, extending beyond log-concave settings and ensuring convergence under distant dissipativity. DC-LA is able to produce sensible variance images in a computed tomography application with DICNN priors, complementing MAP estimation in this setting. Future work will explore the practical use of these variance images in downstream applications, e.g., \citep{jun2025perfecting}. For the analysis, a natural extension is to relax the Lipschitz continuity of $r_1$ to similar conditions on $r_2$. The current analysis does not cover this case. However, for mixed-growth $r_1$, such as the elastic net, one may move the smooth quadratic part to the forward step and reuse the present framework. We leave this for future work.

\section*{Acknowledgments}
The authors thank Marien Renaud for the helpful discussion, Yasi Zhang for the
pretrained parameters of DICNNs, and the funding support from NTU-SUG and Singapore Ministry of Education (MOE) AcRF Tier 1 RG17/24. We thank the anonymous reviewers for their constructive feedback.

\section*{Impact Statement}
This paper presents work whose goal is to advance the field of machine learning. There are many potential societal consequences of our work, none of which we feel must be specifically highlighted here.

\bibliography{ref}
\bibliographystyle{icml2026}

\newpage
\appendix
\onecolumn
\section{Sub-Gaussian tails}
\label{app:gausstail}
Let $V$ satisfy Assumptions \ref{assum:zero} and \ref{assum:dissatinf}, we show that $\pi \propto e^{-V}$ has sub-Gaussian tails. Let $g_1(x) \in \partial r_1(x)$ and $g_2(x) \in \partial r_2(x)$ be some measurable selections and let $g(x):= \nabla f(x) + g_1(x) - g_2(x)$, under Assumption \ref{assum:dissatinf}: there exist $R_0 \geq 0$, $\mu >0$ such that
\begin{align*}
    \langle g(x) - g(y),x-y \rangle \geq  \mu  \Vert x-y\Vert^2
\end{align*}
for all $\Vert x-y\Vert \geq R_0$. Now fix $x$ such that $\Vert x \Vert > R_0$, we have
\begin{align*}
    \langle g(tx) - g(0), x \rangle \geq \mu t \Vert x\Vert^2
\end{align*}
whenever $t \geq R_0/\Vert x\Vert$. Let $t_0:= R_0/\Vert x\Vert < 1$, by applying Mean Value Theorem (MVT) to $f, r_1, r_2$ (see \citep[Theorem 2.3.4]{hiriart2004fundamentals} for MTV for convex functions), we get
\begin{align*}
    V(x) - V(0) &= \int_0^1{\langle g(tx),x \rangle} dt\\
    &= \int_{0}^{t_0}{\langle g(tx),x\rangle} dt + \int_{t_0}^{1}{\langle g(tx),x\rangle} dt\\
    &\geq \int_{0}^{t_0}{\langle g(tx),x\rangle} dt + \int_{t_0}^1{(\mu t \Vert x\Vert^2 + \langle g(0),x\rangle)}dt\\
    &= \int_{0}^{t_0}{\langle g(tx),x\rangle} dt + \dfrac{\mu}{2}(1-t_0^2) \Vert x\Vert^2 + (1-t_0)\langle g(0),x\rangle\\
    &\geq \int_{0}^{t_0}{\langle g(tx),x\rangle} dt + \dfrac{\mu}{2}\Vert x\Vert^2 - \dfrac{\mu}{2} R_0^2 - \Vert g(0) \Vert \Vert x \Vert.
\end{align*}

On the other hand, $\Vert g(tx) \Vert$ is bounded for all $t \in [0,t_0]$. Indeed, since $f$ is $L_f$-smooth, 
\begin{align*}
    \Vert \nabla f(tx) \Vert &\leq L_f t \Vert x \Vert + \Vert \nabla f(0) \Vert\\
    &\leq L_f R_0 + \Vert \nabla f(0) \Vert.
\end{align*}
Since $r_i$ is convex, its subgradients are bounded locally: $\Vert g_i(tx) \Vert \leq M_i$ for all $t \in [0,t_0]$.

Therefore, there exists $C,D>0$ such that
\begin{align*}
    V(x) - V(0) &\geq  \dfrac{\mu}{2}\Vert x\Vert^2-C\Vert x\Vert - \dfrac{\mu}{2} R_0^2\\
    &\geq \dfrac{\mu}{4}\Vert x\Vert^2 - D,
\end{align*}
implying that $\pi$ has sub-Gaussian tails. This further implies $\pi^{\lambda}$ \eqref{Vlambda} and $\pi^{\lambda,\gamma}$ \eqref{mulambdagamma} have sub-Gaussian tails. Indeed, it holds $V_{\lambda}(x) \geq V(x) - (\lambda G_1^2)/2$ for all $x$. On the other hand, $\nabla r_1^{\lambda}(x) \in \partial r_1(\prox_{\lambda r_1}(x))$, so $\Vert \nabla r_1^{\lambda}(x) \Vert \leq G_1$ for all $x$. Therefore,

\begin{align}
\label{tempmo}
    f(x) + (r_1^{\lambda})^{\gamma}(x) - r_2^{\lambda}(x) \geq f(x) + r_1^{\lambda}(x) - r_2^{\lambda}(x) - \dfrac{\gamma G_1^2}{2}.
\end{align}

\section{Finite moments of random variables}
\label{app:finitefm}

We first show that: if $X_0$ has finite $p'$ moment ($p'\geq 1$), $\{X_k\}, \{Y_k\}$ generated by DC-LA have finite $p'$-th moment.

By induction, suppose that $\mathbb{E}\Vert X_k\Vert^{p'} < +\infty$, we show $\mathbb{E} \Vert Y_{k+1} \Vert^{p'} < +\infty$ and $\mathbb{E} \Vert X_{k+1}\Vert^{p'} <+\infty$. Indeed, by applying Minkowski inequality,

\begin{align*}
    (\mathbb{E}\Vert Y_{k+1}\Vert^{p'})^{\frac{1}{p'}} &\leq (\mathbb{E}\Vert X_k \Vert^{p'})^{\frac{1}{p'}} + \gamma (\mathbb{E} \Vert \nabla f(X_k)\Vert^{p'})^{\frac{1}{{p'}}} + \gamma (\mathbb{E}\Vert \nabla r_2^{\lambda}(X_k)\Vert^{p'})^{\frac{1}{p'}} + \sqrt{2\gamma}(\mathbb{E}\Vert Z_{k+1}\Vert^{p'})^{\frac{1}{p'}}\\
    &\leq (\mathbb{E}\Vert X_k \Vert^{p'})^{\frac{1}{p'}} + \gamma(\Vert \nabla f(0) \Vert + L_f (\mathbb{E}\Vert X_k\Vert^{p'})^{\frac{1}{p'}}) \\
    &\quad+ \gamma\left(\Vert \nabla r_2^{\lambda}(0)\Vert + \dfrac{1}{\lambda}(\mathbb{E}\Vert X_k\Vert^{p'})^{\frac{1}{p'}} \right) + \sqrt{2\gamma} (\mathbb{E}\Vert Z_{k+1}\Vert^{p'})^{\frac{1}{p'}} <+\infty.
\end{align*}
Then, using the non-expansiveness of the proximal operator,
\begin{align*}
    (\mathbb{E}\Vert X_{k+1}\Vert^{p'})^{\frac{1}{p'}} = (\mathbb{E}\Vert \prox_{\gamma r_1^{\lambda}}(Y_{k+1}) \Vert^{p'})^{\frac{1}{p'}}\leq \Vert \prox_{\gamma r_1^{\lambda}}(0)\Vert + (\mathbb{E}\Vert Y_{k+1} \Vert^{p'})^{\frac{1}{p'}} <+\infty.
\end{align*}

Similarly, $\mathbb{E}\Vert \Bar{Y}_{k}\Vert^{p'} <+\infty$ where $\{\Bar{Y}_k\}$ is define in \eqref{sideprocess} if $\mathbb{E}\Vert \Bar{Y}_1 \Vert^{p'} < +\infty$.

\section{Unrolled DC-LA}

\label{app:unrolldcla}
The unrolled form of DC-LA is
\begin{align*}
    X_{k+1} = &X_k - \dfrac{\gamma \lambda}{\gamma+\lambda}\nabla f(X_k) - \dfrac{\gamma}{\gamma + \lambda}\prox_{\lambda r_2}(X_k) + \dfrac{\lambda\sqrt{2\gamma}}{\gamma + \lambda}Z_{k+1}\\
    &+\dfrac{\gamma}{\gamma + \lambda} \prox_{(\lambda+\gamma)r_1} \left(\dfrac{\lambda+\gamma}{\lambda}X_k -\gamma \nabla f(X_k) - \dfrac{\gamma}{\lambda}\prox_{\lambda r_2}(X_k) + \sqrt{2\gamma} Z_{k+1} \right).\\
\end{align*}

\section{DC regularizers}
\label{app:G1G2}
We first recall some non-convex regularizers and their DC decompositions. We then show that the first DC component of each regularizer is Lipschitz continuous, while the second is either Lipschitz continuous, or is differentiable with H\"{o}lder or Lipschitz continuous gradient. Hence, Theorems \ref{thm:2} and \ref{fin:bound2} can be invoked accordingly, provided the distant dissipative assumptions are satisfied.

\subsection{DC regularizers with two nonsmooth components}

\begin{itemize}
    \item[(1)]  \textbf{$\ell_1 -\ell_2$ regularizer \citep{yin2015minimization, lou2018fast}}
\begin{align*}
r(x) = \Vert x \Vert_1 - \Vert x \Vert_2.    
\end{align*}
\item[(2)] \textbf{$\ell_1 - \ell_{\sigma_q}$ regularizer \citep{luo2015new}}
We define $\Vert x \Vert_{\sigma_q}= \sum_{i=1}^{q}{\vert x_{[i]} \vert}$ where $x_{[i]}$ represents the i-th element of $x$ in descending order of magnitude. Let
\begin{align*}
    r(x) = \Vert x \Vert_1 - \Vert x \Vert_{\sigma_q}.
\end{align*}
\item[(3)] \textbf{Capped $\ell_1$ \citep{zhang2010analysis,le2015dc}}
Let $\theta>0$ be a parameter. We define
\begin{align*}
r(x) = \sum_{i=1}^d \min\{1, \theta |x_i|\} = \theta \|x\|_1 - \sum_{i=1}^d \left(\theta |x_i| - \min\{1, \theta |x_i|\}\right).
\end{align*}
\item[(4)] \textbf{PiL \citep{le2015dc}}
Let $\theta>0, a>1$ be parameters, we define
\begin{align*}
    r(x) &= \sum_{i=1}^{d}{\min\left\{1,\max\left\{0,\dfrac{\theta \vert x_i \vert-1}{a-1} \right\} \right\}}\\
    &= \dfrac{\theta}{a-1}\sum_{i=1}^d{\max\left\{\dfrac{1}{\theta},\vert x_i \vert \right\}} - \sum_{i=1}^{d}{\left(\dfrac{\theta
    }{a-1} \max\left\{\dfrac{1}{\theta},\vert x_i \vert \right\} - \min\left\{1, \max\left\{0, \frac{\theta |x_i| - 1}{a - 1}\right\}\right\} \right)}.
\end{align*}
\item[(5)] \textbf{$\ell_1-\ell_{2}^p$ regularizer $(1<p<2)$}
\begin{align*}
    r(x) = \Vert x \Vert_1 - \Vert x\Vert_2^p.
\end{align*}
\end{itemize}

The regularizers (1), (2), (3), (4) have both non-differentiable DC components, while the regularizer (5) has non-differentiable $r_1$ and differentiable $r_2$ whose gradient is H\"{o}lder continuous.

Since $\ell_1$ is $\sqrt{d}$-Lipschitz continuous ($d$ is the dimension), the first DC components of $\ell_1-\ell_2$, $\ell_1-\ell_{\sigma_q}$, $\ell_1 - \ell_2^p$ are $\sqrt{d}$-Lipschitz continuous and that of Capped $\ell_1$ is $\theta \sqrt{d}$-Lipschitz continuous. Regarding the first DC component of PiL, it is straightforward to verify that
\begin{align*}
    \left\vert \max\left\{\dfrac{1}{\theta},\vert u \vert \right\} -     \max\left\{\dfrac{1}{\theta},\vert v \vert \right\} \right\vert \leq \vert u - v \vert, \quad \forall u,v \in \mathbb{R}.
\end{align*}
Therefore, the first DC component of PiL is $\frac{\theta}{a-1} \sqrt{d}$-Lipschitz continuous. 

Now the second DC component of the $\ell_1-\ell_2$ regularizer is $1$-Lipschitz continuous, while the second DC component of the $\ell_1-\ell_{\sigma_q}$ regularizer is $\sqrt{d}$-Lipschitz continuous. The latter is due to
\begin{align*}
    \vert \Vert x \Vert_{\sigma_q} - \Vert y \Vert_{\sigma_q} \vert \leq \Vert x-y\Vert_{\sigma_q} \leq \Vert x-y\Vert_1 \leq \sqrt{d} \Vert x-y\Vert_2.
\end{align*}
It is direct to verify that $\vert \min\{1,\theta\vert u\vert\} - \min\{1,\theta\vert v\vert\} \vert \leq \theta \vert u-v\vert$, therefore the second DC component of Capped $\ell_1$ is $2 \theta \sqrt{d}$-Lipschitz continuous. We then have 
\begin{align*}
   &\left\vert \min\left\{1, \max\left\{0, \frac{\theta |u| - 1}{a - 1}\right\}\right\} - \min\left\{1, \max\left\{0, \frac{\theta |v| - 1}{a - 1}\right\}\right\} \right\vert\\
   &\leq \left\vert \max\left\{0, \frac{\theta |u| - 1}{a - 1}\right\} - \max\left\{0, \frac{\theta |v| - 1}{a - 1}\right\}\right\vert\\
   &\leq \dfrac{\theta}{a-1} \vert u-v\vert.
\end{align*}
Therefore, the second component of PiL is $2 \frac{\theta}{a-1} \sqrt{d}$-Lipschitz continuous.

Lastly, $r_2(x):= \Vert x \Vert^p$ has $(p-1,C)$-H\"{o}lder continuous gradient for some $C>0$. Indeed, the gradient is given by $\Vert x\Vert^{p-2} x$ if $x \neq 0$ and $0$ if $x=0$. Indeed, as a standard result, there exists $C>0$ such that
\begin{align*}
    \vert \Vert x \Vert^{p-2} x - \Vert y \Vert^{p-2} y \vert \leq C \Vert x-y\Vert^{p-1}, \quad \forall x,y \neq 0.
\end{align*}
For completeness, we give a proof. We consider two cases: (a) $\Vert x-y\Vert \geq \frac{1}{2} \max\{\Vert x \Vert, \Vert y \Vert\}$ and (b) $\Vert x-y\Vert < \frac{1}{2} \max\{\Vert x \Vert, \Vert y \Vert\}$.

Under case (a), 
\begin{align*}
    \vert \Vert x \Vert^{p-2} x - \Vert y \Vert^{p-2} y \vert \leq \Vert x \Vert^{p-1} + \Vert y \Vert^{p-1} \leq 2^p \Vert x-y\Vert^{p-1}.
\end{align*}

Under case (b), without loss of generality, assume that $\Vert x\Vert \geq \Vert y\Vert$. Note that $[x,y]$ cannot contain the origin. Applying Mean Value Theorem, there exists $z \in [x,y]$:
\begin{align*}
    \vert \Vert x \Vert^{p-2} x - \Vert y \Vert^{p-2} y \vert &\leq \Vert \nabla^2 r_2(z) \Vert \Vert x-y\Vert\\
    &=\left\Vert p \Vert z \Vert^{p-2} I + p(p-2)\Vert z\Vert^{p-4} z z^{\top} \right\Vert \Vert x-y \Vert\\
    &\leq C'\Vert z \Vert^{p-2} \Vert x-y\Vert.
\end{align*}
On the other hand
\begin{align*}
    \Vert z\Vert \geq \Vert x\Vert - \Vert x-z\Vert \geq \Vert x \Vert - \Vert x-y\Vert > \dfrac{1}{2}\Vert x \Vert.
\end{align*}
Therefore,
\begin{align*}
    \Vert z \Vert^{p-2}\Vert x-y\Vert \leq C''\Vert x \Vert^{p-2}\Vert x-y\Vert \leq C \Vert x-y\Vert^{p-1}.
\end{align*}
\subsection{DC regularizers with nonsmooth first DC component and smooth second DC component}
\citet{yao2018efficient} identified a family of DC regularizers whose first component is nonsmooth and second is smooth. This family of regularizers can be written as
\begin{align*}
    r(x) = \sum_{i=1}^{K}{\mu_i }{\kappa(\Vert A_i x\Vert_2)}
\end{align*}
where $\mu_i \geq 0$ for all $i=\overline{1,K}$ and $A_i$ are matrices (to induce structured sparsity like group LASSO, fused LASSO or graphical LASSO \citep{yao2018efficient}), $\kappa: [0,\infty) \to [0, \infty)$ such that $\kappa$ is concave, non-decreasing, $\rho$-smooth for some $\rho>0$ with $\kappa'$ non-differentiable at finite many points, $\kappa(0)=0$. As shown in Table 1 in \citep{yao2018efficient}, this structure covers German penalty (GP), log-sum penalty (LSP), Laplace, Minimax Concave Penalty (MCP), and Smoothly Clipped Absolute Deviation (SCAD).

Let $\kappa_0:=\kappa'(0)$, $r$ has the following DC decomposition
\begin{align*}
    r(x) = \kappa_0 \sum_{i=1}^{K}{\mu_i\Vert A_i x\Vert_2} - \sum_{i=1}^K{\left(\kappa_0 \mu_i\Vert A_i x\Vert_2-\mu_i\kappa(\Vert A_i x\Vert_2)\right)}.
\end{align*}
The first DC component of $r$ is nonsmooth, while the second can be shown to be smooth \citep{yao2018efficient}. We further see that 
\begin{align*}
    \vert \Vert A_i x\Vert_2 - \Vert A_i y \Vert_2 \vert \leq \Vert A_i x - A_i y\Vert_2 \leq \Vert A_i \Vert_F \Vert x-y\Vert_2
\end{align*}
where $\Vert A_i \Vert_F$ is the Frobenius norm of $A_i$. Therefore the first DC component of $r$ is $\kappa_0 (\sum{\mu_i \Vert A_i \Vert_F})$-Lipschitz continuous.

\section{On Assumption 3 in 
\citep{renaud2025stability}}
\label{app:void}

Let $\rho>0$, suppose that $r$ is $\rho$-weakly convex, i.e., $r+\frac{\rho}{2}\Vert \cdot \Vert^2$ is convex.

We recall the assumption as follows.
\begin{assumption*}
\begin{itemize}
    \item[(i)] $\forall \gamma \in (0,\frac{1}{\rho})$, $r$ is $L_r$-smooth on $\prox_{\gamma r}(\mathbb{R}^d)$.\\
    \item[(ii)] $r^{\gamma}$ is $\mu$-strongly convex at infinity with $\mu \geq 8L_f + 4L_r$, i.e, there exists $\gamma_1>0$ and $R_0\geq 0$ such that $\forall \gamma \in (0,\gamma_1]$, $\nabla^2 r^{\gamma} \succeq \mu I$ on $\mathbb{R}^d \setminus B(0,R_0)$.
\end{itemize}
\end{assumption*}

We show that this assumption does not hold. Let $\gamma \in (0,\frac{1}{\rho})$. For $x \in \mathbb{R}^d$, recall that
\begin{align*}
    \prox_{\gamma r}(x) = \argmin_y\left\{r(y) + \dfrac{1}{2\gamma} \Vert y-x\Vert^2 \right\}.
\end{align*}
By the first-order optimality condition
\begin{align}
\label{eq:firstorderrnorm}
    0 \in \partial\left(r + \dfrac{1}{2\gamma} \Vert \cdot - x\Vert^2 \right)(\prox_{\gamma r}(x)).
\end{align}
By Assumption (i), $r$ is differentiable at $\prox_{\gamma r}(x)$, \eqref{eq:firstorderrnorm} becomes
\begin{align*}
    0 = \nabla r(\prox_{\gamma r}(x)) + \dfrac{1}{\gamma}(\prox_{\gamma r}(x)-x)
\end{align*}
On the other hand, since $\gamma \rho <1$, it follows from \citep[Lemma 12]{renaud2025stability},
\begin{align*}
    \nabla r^{\gamma}(x) = \dfrac{1}{\gamma}(x-\prox_{\gamma r}(x))
\end{align*}
so $\nabla r^{\gamma}(x) = \nabla r(\prox_{\gamma r}(x))$ for all $x$. Now that
\begin{align*}
    &\Vert \nabla r^{\gamma}(x) - \nabla r^{\gamma}(y) \Vert = \Vert \nabla r(\prox_{\gamma r}(x)) - \nabla r(\prox_{\gamma r}(y)) \Vert \\
    &\leq L_r \Vert \prox_{\gamma r}(x)-\prox_{\gamma r}(y) \Vert \leq \dfrac{L_r}{1-\gamma \rho} \Vert x-y\Vert
\end{align*}
since $\prox_{\gamma r}$ is $\frac{1}{1-\gamma \rho}$-Lipschitz \citep[Lemma 11]{renaud2025moreau}. Therefore $r^{\gamma}$ is $\frac{L_r}{1-\gamma \rho}$-smooth in $\mathbb{R}^d$.

From Assumption (ii), let $x,y$ be two points far away from the origin so that the segment $[x,y]$ does not intersect $B(0,R_0)$. Let $\gamma < \min\{\frac{3}{4\rho},\gamma_1\}$,
\begin{align*}
    \nabla r^{\gamma}(x) - \nabla r^{\gamma}(y) = \int_{0}^{1} \nabla^2 r^{\gamma}(tx + (1-t)y)(x-y)dt.
\end{align*}
Therefore
\begin{align*}
    \langle x-y, \nabla r^{\gamma}(x) - \nabla r^{\gamma}(y) \rangle = \int_{0}^{1} (x-y)^{\top}\nabla^2r^{\gamma}(tx + (1-t)y)(x-y)dt\geq \mu \Vert x-y\Vert^2
\end{align*}
On the other hand,
\begin{align*}
    \langle x-y, \nabla r^{\gamma}(x) - \nabla r^{\gamma}(y) \rangle \leq \Vert x-y\Vert \Vert \nabla r^{\gamma}(x) - \nabla r^{\gamma}(y) \Vert \leq \dfrac{L_r}{1-\gamma \rho} \Vert x-y\Vert^2.
\end{align*}
These inequalities imply $\mu \leq \frac{L_r}{1-\gamma \rho}$. Therefore, the condition $\mu \geq 4 L_r + 8 L_f$ implies $\gamma \geq \frac{3}{4\rho}$ and this is a contradiction.

\section{Non-weakly-convex DC regularizers}
\label{app:nonweak}
We show the following DC regularizers are not weakly convex.
\subsection{$\ell_{1}-\ell_2$}
Let $r(x) = \Vert x \Vert_1 - \Vert x \Vert_2$. Suppose that there exists $\alpha>0$ such that $u(x):=r(x) + \alpha \Vert x \Vert_2^2$ is convex.

Let's pick $x^*$ with positive entries such that $\Vert x^* \Vert_2 < \frac{1}{2\alpha}$. We have $\nabla^2 \Vert \cdot \Vert_1(x^*) = 0$ and 
\begin{align*}
    \nabla^2 \Vert \cdot \Vert_2(x^*) = \dfrac{1}{\Vert x^* \Vert_2} I - \dfrac{1}{\Vert x^* \Vert_2^3} x^* {x^*}^{\top}.
\end{align*}
Therefore,
\begin{align*}
    \nabla^2 u(x^*) = 2\alpha I -\dfrac{1}{\Vert x^* \Vert_2} I + \dfrac{1}{\Vert x^* \Vert_2^3} x^* {x^*}^{\top}.
\end{align*}
Let $d \in \mathbb{R}^d, \Vert d \Vert_2 = 1$ such that $d$ is orthogonal to $x^*$, it follows that
\begin{align*}
    d^{\top} \nabla^2 u(x^*) d = 2\alpha - \dfrac{1}{\Vert x^*\Vert_2} < 0.
\end{align*}
So $u$ is not convex. This is a contradiction, we conclude that $r$ is not weakly convex.

\subsection{$\ell_{1}-\ell_{\sigma_q}$}

We define $\Vert x \Vert_{\sigma_q}= \sum_{i=1}^{q}{\vert x_{[i]} \vert}$ where $x_{[i]}$ represents the i-th element of $x$ ordered by magnitude. Note that when $q=1$, $\Vert x \Vert_{\sigma_q} = \Vert x \Vert_{\infty}$.

Now let $r(x) = \Vert x \Vert_1 - \Vert x \Vert_{\sigma_q}$ and suppose that there exists $\alpha>0$ such that $u(x)=r(x) + \alpha \Vert x \Vert_2^2$ is convex.

Let $A> 1$ and let $x$ such that
\begin{align*}
    & x_1  =  x_2 = \ldots = x_{q-1} = A,\\
    &  \vert x_q \vert < 1,~ \vert x_{q+1} \vert < 1,\\
    & x_{q+2} = \ldots = x_d  = 0.
\end{align*}
Hence $\Vert x \Vert_{\sigma_q} = (q-1)A + \max\{\vert x_q\vert,\vert x_{q+1}\vert\}$. Therefore
\begin{align*}
    r(x) = \vert x_q \vert + \vert x_{q+1} \vert - \max\{\vert x_q \vert,\vert x_{q+1}\vert\} = \min\{\vert x_q\vert, \vert x_{q+1} \vert\}.
\end{align*}
Consider the 2D slice, it would follow that the function $\Bar{u}(x_q,x_{q+1}) = \min\{\vert x_q\vert, \vert x_{q+1} \vert\} + \alpha (x_q^2 + x_{q+1}^2)$ is convex in $(-1,1)\times(-1,1)$. Let $0<\epsilon<1$,
\begin{align*}
    \Bar{u}(\epsilon,0) + \Bar{u}(0,\epsilon) \geq 2 \Bar{u}\left(\dfrac{\epsilon}{2},\dfrac{\epsilon}{2} \right)
\end{align*}
reducing to
\begin{align*}
    2 \alpha \epsilon^2 \geq 2 \left(\dfrac{\epsilon}{2} + \alpha \dfrac{\epsilon^2}{2} \right)
\end{align*}
or $\alpha \epsilon \geq 1$. This cannot hold for small $\epsilon$. We get a contradiction.

\subsection{Capped-$\ell_1$}

Let $\theta>0$, we define
\begin{align*}
    r(x) = \sum_{i=1}^{d}{\min\{1,\theta \vert x_i\vert\}}.
\end{align*}
Since $\cappedL$ is separable, we only need to show it is not weakly convex in 1D. 

Now in 1D, by contradiction, suppose there is $\alpha>0$ such that $u(x)=r(x) + \alpha x^2$ is convex. 

Let $\epsilon \in \left(0,\frac{1}{2\theta}\right)$ and we denote $x^+ = \frac{1}{\theta}+\epsilon, \; x^- = \frac{1}{\theta} - \epsilon >0$. 
Mid-point inequality reads
\begin{align*}
    2u\left(\dfrac{1}{2}(x^++x^-) \right) \leq u(x^+) + u(x^-)
\end{align*}
or
\begin{align*}
    2\left(1 + \alpha \dfrac{1}{\theta^2}\right) 
    \leq 1 + \alpha\left(\dfrac{1}{\theta}+\epsilon\right)^2 
    + (1-\theta\epsilon) 
    + \alpha\left(\dfrac{1}{\theta}-\epsilon\right)^2.
\end{align*}
which reduces to $2 \alpha \epsilon \geq \theta$. 
This cannot hold for small $\epsilon$ and is a contradiction.

\subsection{PiL}
Let $a>1$ and $\theta>0$, we define
\begin{align*}
    r(x) = \sum_{i=1}^{d}{\min\left\{1,\max\left\{0,\dfrac{\theta \vert x_i \vert-1}{a-1} \right\} \right\}}
\end{align*}
Since $r$ is separable, we only need to show that $r$ is not weakly convex in 1D.

Now in 1D, suppose there exists $\alpha>0$ such that $u(x):=r(x) + \alpha  x^2$ is convex. Let $x^+ = \frac{a}{\theta} + \epsilon$ and $x^- = \frac{a}{\theta}-\epsilon$ for $\epsilon \in (0,\frac{a-1}{\theta})$. We compute
\begin{align*}
    &u(x^+) = \alpha\left(\dfrac{a}{\theta}+\epsilon \right)^2 + 1\\
    &u(x^-) = \alpha\left(\dfrac{a}{\theta}-\epsilon \right)^2 + \dfrac{a-\theta \epsilon -1}{a-1}\\
    &u\left(\dfrac{x^++x^-}{2} \right) = \alpha \dfrac{a^2}{\theta^2} + 1.
\end{align*}
The midpoint inequality $2u(\frac{x^++x^-}{2}) \leq u(x^+) + u(x^-)$ implies
\begin{align*}
    2 \alpha \dfrac{a^2}{\theta^2} + 2 \leq \alpha\left(\dfrac{a}{\theta}+\epsilon \right)^2 + 1 + \alpha\left(\dfrac{a}{\theta}-\epsilon \right)^2 + \dfrac{a-\theta \epsilon -1}{a-1}
\end{align*}
which reduces to $\frac{\theta}{a-1} \leq 2\alpha \epsilon$. This cannot hold for small $\epsilon$ and is a contradiction.
\subsection{$\ell_1 - \ell_2^p$, $p \in (1,2)$}
Restrict $r$ in $x = t e_1$ for $t>0$: $r(te_1) = \Vert t e_1 \Vert_1 - \Vert t e_1\Vert_2^p = t-t^p$. Consider $\phi(t)=t-t^p + \alpha t^2$, then
\begin{align*}
    \phi''(t) = -p(p-1) t^{p-2} + 2 \alpha.
\end{align*}
We see that for any fixed $\alpha>0$, $\phi''(t) \to -\infty$ as $t\to 0^+$, so $\phi$ cannot be convex.

\subsection{DICNNs with leaky ReLU activations}

Let $r$ be given by $r(x)=r_1(x) - r_2(x)$ where $r_1$ and $r_2$ are two ICNNs with leaky ReLU activations. $r_1$ and $r_2$ are then two convex, piecewise linear functions with respect to $x$. In general, $r$ is not weakly convex. To see this, take a 1D slice, $-r_2$ restricted to this slice is a 1D piecewise concave function that is not a linear function in general. If $r_1$ restricted to the same slice cannot exactly cancel out the concavity of $-r_2$, no amount of quadratic $\alpha\Vert x\Vert^2$ can.

\section{Example on discontinuous proximal operator of a DC function}
\label{app:disprox}

Consider the following 1D example, $r(x) = - \vert x\vert$. Let $\gamma >0$ and let $\vert v \vert < \gamma$, we consider
\begin{align*}
    \prox_{\gamma r}(v) = \argmin_{x}\left\{\dfrac{1}{2 \gamma}(x-v)^2 - \vert x \vert \right\}
\end{align*}

Let $u(x) = \frac{1}{2}(x-v)^2 - \gamma \vert x\vert$. For $x>0$, $u$ is minimized at $x^+=v+\gamma>0$; For $x<0$, $u$ is minimized at $x^-=v-\gamma$. Comparing $f(x^+), f(x^-)$ and $f(0)$ we conclude: if $v=0$, both $x^+,x^-$ minimize $u$, if $v>0$, $x^+$ is the unique minimizer, and if $v<0$, $x^-$ is the unique minimizer. Therefore,
\begin{align*}
    \prox_{\gamma r}(v) = \begin{cases}
        v + \gamma \quad&\text{if } v>0\\
        \{\gamma,-\gamma\} \quad &\text{if } v=0,\\
        v-\gamma \quad&\text{if } v<0.
    \end{cases}
\end{align*}
Hence, $\prox_{\gamma r}$ is multivalued and discontinuous at $0$, as reflected in Figure \ref{fig:proxnel1}.

\begin{figure}[H]
    \centering
    \includegraphics[width=0.4\linewidth]{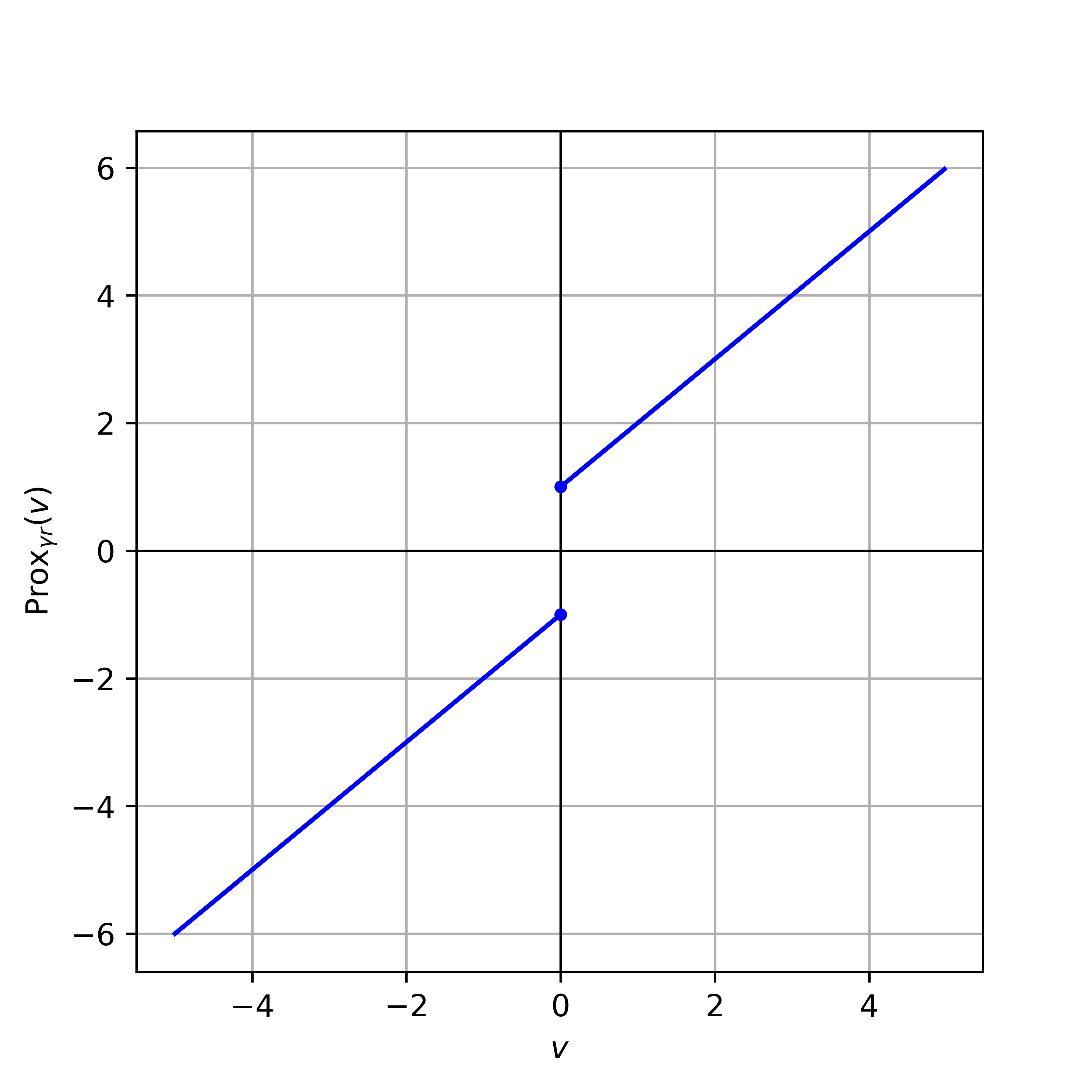}
    \caption{Plot of $\prox_{-\vert \cdot \vert}$}
    \label{fig:proxnel1}
\end{figure}

\section{Lemmas}

\begin{lemma}
\label{lem:proxmoreau}
    Let $g$ be a convex function and $\lambda, \gamma>0$, it holds
    \begin{align*}
    \prox_{\gamma g^{\lambda}}(x) = \dfrac{1}{\gamma + \lambda} \left(\gamma \prox_{(\gamma + \lambda)g}(x) + \lambda x \right).
\end{align*}
\end{lemma}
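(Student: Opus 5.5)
We prove the identity by computing $\prox_{\gamma g^{\lambda}}(x)$ from its first-order optimality condition and then reading it back as a proximal step of $g$ with parameter $\gamma+\lambda$. Since $g^{\lambda}$ is convex and $\frac{1}{\lambda}$-smooth (Lemma \ref{lem:Monreau}(i)), the objective $y\mapsto g^{\lambda}(y)+\frac{1}{2\gamma}\Vert y-x\Vert^2$ is strongly convex and differentiable. Hence $p:=\prox_{\gamma g^{\lambda}}(x)$ is the unique solution of
\begin{align*}
\gamma\nabla g^{\lambda}(p) + p - x = 0 .
\end{align*}

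\textbf{Step 1: rewrite the optimality condition.} Set $u:=\prox_{\lambda g}(p)$. By Lemma \ref{lem:Monreau}(ii), $\nabla g^{\lambda}(p)=\frac{1}{\lambda}(p-u)$, so the optimality condition becomes $\frac{\gamma}{\lambda}(p-u)+p-x=0$. Solving for $p$ gives
\begin{align*}
p=\frac{\lambda x+\gamma u}{\gamma+\lambda}.
\end{align*}

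\textbf{Step 2: identify $u$.} By Lemma \ref{lem:Monreau}(iv), or directly from the optimality condition defining $u$, we have $\frac{1}{\lambda}(p-u)\in\partial g(u)$. Substituting the expression for $p$,
\begin{align*}
p-u=\frac{\lambda(x-u)}{\gamma+\lambda},
\end{align*}
so $\frac{1}{\gamma+\lambda}(x-u)\in\partial g(u)$. This is exactly the optimality condition for $u=\prox_{(\gamma+\lambda)g}(x)$. Since $g$ is convex, $\prox_{(\gamma+\lambda)g}(x)$ is the unique point satisfying this condition, so $u=\prox_{(\gamma+\lambda)g}(x)$. Plugging this into the formula for $p$ yields the claimed identity.

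\textbf{Main obstacle.} The only delicate point is logical: we define $u$ from $p$ but then need to identify it via $x$. Uniqueness of both proximal points handles this. Alternatively, one can verify directly that the candidate $p$ built from $u=\prox_{(\gamma+\lambda)g}(x)$ satisfies $\prox_{\lambda g}(p)=u$. Indeed, $p-u=\frac{\lambda}{\gamma+\lambda}(x-u)\in\lambda\,\partial g(u)$, so $u$ is the prox of $p$. Then $\gamma\nabla g^{\lambda}(p)+p-x=\frac{\gamma}{\lambda}(p-u)+p-x=0$, which confirms $p$ solves the optimality condition.
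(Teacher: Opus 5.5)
Your proof is correct and is essentially the paper's argument. The paper lifts $\prox_{\gamma g^{\lambda}}(x)$ to the jointly convex problem $\min_{(y,z)} g(z)+\frac{1}{2\lambda}\Vert y-z\Vert^2+\frac{1}{2\gamma}\Vert x-y\Vert^2$, whose two first-order conditions are exactly your relations $\frac{1}{\lambda}(p-u)\in\partial g(u)$ and $\gamma\nabla g^{\lambda}(p)+p-x=0$ (with $z=u$, $y=p$), and then makes the same substitution $p=\frac{\gamma u+\lambda x}{\gamma+\lambda}$ to conclude $u=\prox_{(\gamma+\lambda)g}(x)$. Your version reaches this system through Lemma~\ref{lem:Monreau}(ii),(iv) rather than the joint problem, and it makes the uniqueness step explicit, which the paper leaves implicit.
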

\begin{proof}

This is a standard and known result. For completeness, we give a proof as follows. $\prox_{\gamma g^{\lambda}}(x)$ is the solution of
\begin{align}
\label{prob:firststage}
    \min_y~\left\{ g^{\lambda}(y) + \dfrac{1}{2\gamma} \Vert x-y \Vert^2 \right\}
\end{align}
By using the definition of Moreau envelope, the problem \eqref{prob:firststage} can be cast to
\begin{align}
\label{prob:secondstage}
    \min_{(y,z)}~\left\{g(z) + \dfrac{1}{2\lambda}\Vert y-z\Vert^2 + \dfrac{1}{2\gamma}\Vert x-y\Vert^2 \right\}.
\end{align}
Now this problem is jointly convex w.r.t. $(y,z)$, by first-order optimality condition, we need to solve
\begin{align*}
\begin{cases}
    &0 \in \partial g(z) + \dfrac{1}{\lambda}(z-y)\\
    &\dfrac{1}{\lambda}(y-z) + \dfrac{1}{\gamma}(y-x)=0.
\end{cases}
\end{align*}
Substituting $y = \frac{\gamma}{\gamma + \lambda}z + \frac{\lambda}{\gamma + \lambda}x$ from the second equation to the first equation, $z = \prox_{(\gamma+\lambda)g}(x).$
\end{proof}

\begin{lemma}
\label{lem:normqineq}
    Let $q \in \mathbb{N}^*$ and $q \geq 2$. There exists $C_q \in \mathbb{R}$ such that for all $x,z \in \mathbb{R}^d$ 
    \begin{align*}
        \Vert x + z \Vert^q \leq \Vert x\Vert^q + q \Vert x \Vert^{q-2}\langle x,z\rangle + C_q(\Vert x\Vert^{q-2} \Vert z \Vert^2 + \Vert z\Vert^q).
    \end{align*}
\end{lemma}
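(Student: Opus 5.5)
The plan is to reduce the inequality to a one-dimensional estimate in the scalars $a=\Vert x\Vert$, $b=\Vert z\Vert$, $s=\langle x,z\rangle$. Write
\begin{align*}
\Vert x+z\Vert^2 = a^2 + 2s + b^2 =: a^2 + h,
\end{align*}
so that $\Vert x+z\Vert^q = (a^2+h)^{q/2}$ with $\vert h\vert \le 2ab + b^2$. If $x=0$ the claim is trivial with $C_q\ge 1$, so assume $a>0$. We split into two regimes according to the relative size of $z$.

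\textbf{Large regime, $b \geq a/2$.} Here we bound crudely:
\begin{align*}
\Vert x+z\Vert^q \le (a+b)^q \le 3^q b^q .
\end{align*}
The main term on the right is at least $a^q - q a^{q-1} b \ge -q\,2^{q-1} b^q$, since $a \le 2b$. Hence any $C_q \ge 3^q + q2^{q-1}$ absorbs everything into $C_q b^q$.

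\textbf{Small regime, $b < a/2$.} Here $\vert h\vert \le 2ab + b^2 < \tfrac54 a^2$. This is not small enough for a clean expansion. It is convenient instead to use $b \le a/4$ as the dividing line, so that $\vert h\vert \le \tfrac{9}{16}a^2$; the large-regime argument then applies unchanged for $b \ge a/4$ with constants $5^q$ and $q4^{q-1}$. Set $\phi(t) = (a^2+t)^{q/2}$ and Taylor expand to second order with Lagrange remainder:
\begin{align*}
\phi(h) = a^q + \tfrac q2 a^{q-2} h + \tfrac12\phi''(\xi)h^2 .
\end{align*}
For $\vert\xi\vert \le \tfrac{9}{16}a^2$ the second derivative satisfies $\vert\phi''(\xi)\vert \le c_q a^{q-4}$; this uses $a^2+\xi \ge \tfrac{7}{16}a^2$ to handle negative exponents when $q<4$. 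The first-order term is
\begin{align*}
\tfrac q2 a^{q-2} h = q a^{q-2}s + \tfrac q2 a^{q-2} b^2,
\end{align*}
which gives exactly the required linear term plus an admissible $a^{q-2}b^2$ term. For the remainder, $h^2 \le (2ab+b^2)^2 \le 9a^2b^2/4$, so it is bounded by $c_q' a^{q-2}b^2$. Collecting terms, $C_q$ is the maximum of the constants from the two regimes.

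The only delicate point is keeping $a^2+\xi$ bounded away from $0$ in the Taylor remainder, which is what forces the regime split. Choosing the threshold $b\le a/4$ settles it.
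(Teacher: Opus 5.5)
Your proof is correct, but it takes a different route from the paper's.

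\textbf{The paper's route.} The paper Taylor-expands $g(t)=\Vert x+tz\Vert^q$ along the segment from $x$ to $x+z$, working in the vector variable. It bounds $g''(t)\le q(q-1)\Vert x+tz\Vert^{q-2}\Vert z\Vert^2$. Because the exponent $q-2$ is nonnegative, there is no singularity and no case distinction. The single estimate $(\Vert x\Vert+\Vert z\Vert)^{q-2}\le 2^{q-3}(\Vert x\Vert^{q-2}+\Vert z\Vert^{q-2})$ then produces both error terms at once, with the explicit constant $C_q=q(q-1)2^{q-4}$.

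\textbf{Your route.} You expand instead in the scalar $h=\Vert x+z\Vert^2-\Vert x\Vert^2$. The second derivative of $(a^2+t)^{q/2}$ carries the exponent $q/2-2$, which is negative for $q=3$. That is what forces your regime split, and in your argument the $\Vert z\Vert^q$ term arises only from the crude bound in the regime $b\ge a/4$.

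\textbf{Comparison.} Your argument uses only one-variable calculus on an interval where the function is smooth and bounded away from the singular point. So you never need to check that $\Vert\cdot\Vert^q$ is twice continuously differentiable at the origin; the paper records $\nabla^2\phi(0)$ separately for exactly this reason. The paper's argument is shorter and uniform in $x,z$, and it gives a much smaller explicit constant than your $\max\{5^q+q4^{q-1},\tfrac q2+c_q'\}$.

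\textbf{Arithmetic slip.} Since $(2ab+b^2)^2\ge 4a^2b^2$, the bound $h^2\le\tfrac94 a^2b^2$ is false. For $b\le a/4$ the correct bound is $(2ab+b^2)^2\le(\tfrac94 ab)^2=\tfrac{81}{16}a^2b^2$. This is harmless, since the constant is absorbed into $c_q'$.

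\textbf{Presentation.} In a write-up, state the threshold $b\le a/4$ from the start rather than first trying $a/2$. Also note that $x=0$ is already covered by the regime $b\ge a/4$, so it needs no separate case.
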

\begin{proof}
    Let $\phi(u) = \Vert u \Vert^q$ and $g(t) = \phi(x + tz) = \Vert x + tz \Vert^q$. By Taylor expansion,
    \begin{align*}
        g(1) = g(0) + g'(0) + \int_0^1{g''(t)(1-t)}dt.
    \end{align*}
Now that 
\begin{align*}
    &\nabla \phi(u) = q \Vert u \Vert^{q-2} u\\
    &\nabla^2 \phi(u) = q \Vert u \Vert^{q-2} I + q(q-2) \Vert u \Vert^{q-4} u u^{\top}, \quad u \neq 0,
\end{align*}
and $\nabla^2 \phi(0) = 2I$ if $q=2$ and $\nabla^2 \phi(0) = 0$ if $q >2$. It follows
\begin{align*}
    g'(0) &= \langle \nabla \phi(x),z\rangle = q \Vert x \Vert^{q-2} \langle x,z\rangle,\\
    g''(t) &= z^{\top} \nabla^2 \phi(x+tz) z\\
    &= z^{\top} \left( q \Vert x+tz \Vert^{q-2} I + q(q-2) \Vert x+tz\Vert^{q-4} (x+tz) (x+tz)^{\top} \right)z\\
    &= q \Vert x+tz \Vert^{q-2} \Vert z\Vert^2 + q(q-2) \Vert x+tz \Vert^{q-4} \langle z,x+tz\rangle^2.
\end{align*}
We evaluate
\begin{align*}
    g''(t) &\leq q \Vert x+tz \Vert^{q-2} \Vert z\Vert^2 + q(q-2) \Vert x+tz \Vert^{q-2} \Vert z\Vert^2\\
    &= q(q-1) \Vert x+tz \Vert^{q-2} \Vert z\Vert^2.
\end{align*}
We have $\Vert x + tz \Vert \leq \Vert x \Vert + \Vert z \Vert$ for $t \in [0,1]$ and
\begin{align*}
    (\Vert x \Vert + \Vert z \Vert)^{q-2} \leq 2^{q-3} (\Vert x \Vert^{q-2} + \Vert z \Vert^{q-2})
\end{align*}
where the inequality is an equality for $q \in \{2,3\}$ and follows from H\"{o}lder inequality for $q \geq 4$. 
Therefore,
\begin{align*}
    g''(t) \leq q(q-1) 2^{q-3}(\Vert x \Vert^{q-2} \Vert z \Vert^2 + \Vert z \Vert^q),
\end{align*}
it follows
\begin{align*}
    \int_0^1{(1-s)g''(s)ds} \leq q(q-1) 2^{q-4}(\Vert x \Vert^{q-2} \Vert z \Vert^2 + \Vert z \Vert^q).
\end{align*}
Putting together,
\begin{align*}
    \Vert x + z \Vert^q \leq \Vert x\Vert^q + q \Vert x \Vert^{q-2}\langle x,z\rangle + q(q-1) 2^{q-4}(\Vert x \Vert^{q-2} \Vert z \Vert^2 + \Vert z \Vert^q).
\end{align*}
\end{proof}

\begin{lemma}
    \label{lem:inPq}
    Let $\{Y_k\}_k$ be given by
    \begin{align*}
        Y_{k+1} = Y_k - \gamma b(Y_k) + \sqrt{2\gamma}Z_{k+1}
    \end{align*}
where $\{Z_k\}_k$ follows i.i.d. normal distribution and $b$ is $L$-Lipschitz and $(m,R)$-distant dissipative. Suppose that $\gamma \leq \frac{m}{L^2}$. Let $Q$ be the transition Markov kernel of $\{Y_k\}_k$, then $Q$ has an invariant distribution $\pi_Y \in \mathcal{P}_1(\mathbb{R}^d)$. Furthermore, for $p' \geq 2, p' \in \mathbb{N}^*$, if $\gamma$ is sufficiently small, i.e.,
\begin{align}
\label{eq:smallgamma}
    \gamma \leq \min\left\{\dfrac{1}{4L},\dfrac{m}{2^{p'+2}L^2 (p'-1)} \right\}
\end{align}
it holds $\pi_Y \in \mathcal{P}_{p'}(\mathbb{R}^d)$.  
\end{lemma}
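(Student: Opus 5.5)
The plan is a Lyapunov drift argument for $V_{p'}(y)=\Vert y\Vert^{p'}$, combined with a standard existence result for invariant measures of Feller kernels.

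\textbf{Step 1: drift inequality for $q=1,2$.} First convert distant dissipativity into a one-point dissipativity. Apply \eqref{distdiss} with $y=0$ when $\Vert x\Vert\geq R$. When $\Vert x\Vert<R$, use Lipschitzness, which gives $\langle b(x)-b(0),x\rangle\geq -L\Vert x\Vert^2\geq -LR^2$. Together these yield
\begin{align*}
\langle b(x),x\rangle\geq m\Vert x\Vert^2 - c_0\Vert x\Vert - c_1
\end{align*}
for all $x$, with $c_0=\Vert b(0)\Vert$ and $c_1=(m+L)R^2$. Next expand
\begin{align*}
\mathbb{E}\Vert Y_{k+1}\Vert^2 = \mathbb{E}\Vert Y_k-\gamma b(Y_k)\Vert^2+2\gamma d.
\end{align*}
Bound $\Vert b(y)\Vert^2\leq 2L^2\Vert y\Vert^2+2\Vert b(0)\Vert^2$. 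Since $\gamma\leq m/L^2$, the coefficient of $\Vert y\Vert^2$ becomes $1-2\gamma m+2\gamma^2L^2$. The leading term needs care: with $\gamma\leq m/L^2$ this is only $\leq 1$, so I would sharpen the bound to $\Vert b(y)\Vert^2\leq (1+\epsilon)L^2\Vert y\Vert^2+C_\epsilon$ with a small $\epsilon$ and absorb the remaining error. This gives $\mathbb{E}\Vert Y_{k+1}\Vert^2\leq (1-c\gamma)\mathbb{E}\Vert Y_k\Vert^2 + C\gamma$ for some $c>0$, so second moments are uniformly bounded in $k$.

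\textbf{Step 2: existence of $\pi_Y$.} $Q$ is Feller, since $b$ is continuous and the kernel is a Gaussian with continuous mean. The uniform second-moment bound makes the Cesàro averages $\frac1n\sum_{k<n}\delta_{y}Q^k$ tight. The Krylov--Bogoliubov theorem then gives an invariant $\pi_Y$. Its first moment is finite by Fatou's lemma, using lower semicontinuity of $\Vert\cdot\Vert^2$ under weak limits. (Alternatively, Wasserstein contraction of $Q$ from \citet{de2019convergence} under $\gamma\leq m/L^2$ gives existence and uniqueness in $\mathcal{P}_1$ directly; I would cite that as the primary route if available.)

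\textbf{Step 3: higher moments $p'\geq 2$.} Apply Lemma \ref{lem:normqineq} with $x=y-\gamma b(y)$ and $z=\sqrt{2\gamma}Z$. The cross term has zero mean because $Z$ is symmetric. This gives
\begin{align*}
\mathbb{E}\Vert y-\gamma b(y)+\sqrt{2\gamma}Z\Vert^{p'}\leq \Vert x\Vert^{p'}+C_{p'}\bigl(2\gamma\Vert x\Vert^{p'-2}\mathbb{E}\Vert Z\Vert^2+(2\gamma)^{p'/2}\mathbb{E}\Vert Z\Vert^{p'}\bigr).
\end{align*}
Step 1 shows $\Vert x\Vert^2\leq(1-2\gamma m+2\gamma^2L^2)\Vert y\Vert^2+C(1+\Vert y\Vert)$. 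Raise this to the power $p'/2$ and use $(1-a)^{p'/2}\leq 1-a$ for $a\in[0,1]$, together with Young's inequality on the lower-order terms. The constraint \eqref{eq:smallgamma} is exactly what keeps $1-2\gamma m+2\gamma^2L^2\cdot 2^{p'+1}(p'-1)$-type factors below $1-\gamma m/2$. Here $\gamma\leq 1/(4L)$ keeps $\Vert x\Vert\leq \tfrac54\Vert y\Vert+$const, which controls the binomial constants. The result is $\mathbb{E}V_{p'}(Y_{k+1})\leq(1-c\gamma)V_{p'}(y)+C\gamma$.

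\textbf{Step 4: transfer to $\pi_Y$.} Integrating against $\pi_Y$ is illegitimate if the moment is infinite, so I would truncate, using $\min(V_{p'},N)$ via iterates from a point. Concretely, run the chain from $\delta_0$; the uniform bound $\sup_k\mathbb{E}\Vert Y_k\Vert^{p'}\leq C/c$ passes to the Cesàro limit by Fatou, giving $\pi_Y\in\mathcal{P}_{p'}$. If $\pi_Y$ was obtained by contraction, uniqueness identifies it with this limit.

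\textbf{Main obstacle.} The hardest part is Step 3: keeping track of the constants in expanding $\Vert x\Vert^{p'}$ so that the negative $\gamma m$ term dominates under precisely \eqref{eq:smallgamma}. I would first try the crude bound $\Vert x\Vert^{p'-2}\leq 2^{p'-3}(\Vert y\Vert^{p'-2}+\gamma^{p'-2}\Vert b(y)\Vert^{p'-2})$ and check that the resulting constant matches $2^{p'+2}(p'-1)$.
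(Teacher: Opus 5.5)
Your proposal is correct, but it reaches the result by a different route from the paper in two places.

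\textbf{Existence of $\pi_Y$.} The paper does not use a Lyapunov/tightness argument. It invokes the $W_1$-contraction $W_1(\mu Q^k,\nu Q^k)\le D\rho^{k\gamma}W_1(\mu,\nu)$ from \citet{renaud2025stability} (valid for $\gamma\le m/L^2$). It then applies Banach's fixed point theorem to $Q^k$ for large $k$ and gets $\pi_Y Q=\pi_Y$ from uniqueness. That route gives uniqueness and the convergence $p_{Y_k}\to\pi_Y$, which the paper reuses in the truncation step and later. Your Krylov--Bogoliubov route (Feller property plus second-moment tightness) is self-contained but gives only existence. You therefore do need to fix $\pi_Y$ as a Ces\`aro limit from $\delta_0$. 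Alternatively, your drift inequality bounds the $p'$-moment of every invariant law, since
\begin{align*}
\pi(\min(V_{p'},N))=\pi(Q^k\min(V_{p'},N))\le \int \min(Q^kV_{p'},N)\,d\pi .
\end{align*}

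\textbf{Moment bound.} The paper applies Lemma \ref{lem:normqineq} with $x=y$ and $z=-\gamma b(y)+\sqrt{2\gamma}Z$. The drift then enters the remainder $C_{p'}(\Vert y\Vert^{p'-2}\Vert z\Vert^2+\Vert z\Vert^{p'})$ at order $\Vert y\Vert^{p'}$. The leading coefficient becomes $1+4C_{p'}\gamma^2L^2+C_{p'}2^{2p'-2}\gamma^{p'}L^{p'}-m\gamma p'/4$. With $C_{p'}=p'(p'-1)2^{p'-4}$, condition \eqref{eq:smallgamma} is exactly what makes this coefficient smaller than $1$.

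Your split $x=y-\gamma b(y)$, $z=\sqrt{2\gamma}Z$ removes the drift from the remainder entirely. The cross term vanishes, and the noise contributes only $O(\gamma\Vert y\Vert^{p'-2})+O(1)$. The leading coefficient then comes solely from the one-step contraction of $y\mapsto y-\gamma b(y)$. With your $(1+\epsilon)L^2$ refinement, that contraction holds for every $\gamma\le m/L^2$; this is the mechanism of Lemma \ref{lem16laumont}. So your route proves the $p'$-moment claim under the weaker step-size condition. In it, \eqref{eq:smallgamma} is used only through $\gamma\le m/(16L^2)\le m/L^2$.

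\textbf{Your ``main obstacle'' is not one.} Your remark that \eqref{eq:smallgamma} is what keeps the contraction factor below $1-\gamma m/2$ misattributes its role; that role belongs to the paper's split, not yours. You also do not need $\gamma\le 1/(4L)$. Do not try to reverse-engineer the constant $2^{p'+2}(p'-1)$: the factor $\Vert x\Vert^{p'-2}$ you would bound crudely is already multiplied by $\gamma$ and is lower order.

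Two smaller points. The lower-order terms in your bound on $\Vert x\Vert^2$ actually carry a factor $\gamma$. Keeping it makes the radius beyond which the contraction holds independent of $\gamma$, though this is immaterial for finiteness. The final transfer step, truncation followed by passage to the limit, is essentially the same in both proofs.
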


\begin{proof}
    Since $b$ is Lipschitz, similar to Subsection \ref{app:finitefm}, once the chain starts in $\mathcal{P}_{p'}(\mathbb{R}^d)$, it stays in $\mathcal{P}_{p'}(\mathbb{R}^d)$. Theorem 5 in \citep{renaud2025stability} guarantees that: there exist $D>0, \rho \in (0,1)$:
    \begin{align}
    \label{eq:contractive}
        W_1(\mu Q^k, \nu Q^k) \leq D \rho^{k \gamma}W_1(\mu,\nu)
    \end{align}
    for all $\mu,\nu \in \mathcal{P}_1(\mathbb{R})$.   Therefore, for large enough $k$, $Q^k$ is contractive in $(\mathcal{P}_1(\mathbb{R}^d), W_1)$. Applying Picard fixed point theorem, for some fixed large $k$, there exists unique $\pi_Y \in \mathcal{P}_1(\mathbb{R}^d)$ such that $\pi_Y Q^k = \pi_Y$. It follows that $(\pi_Y Q) Q^k = \pi_Y Q$ and by uniqueness $\pi_Y Q = \pi_Y$. If $p'=1$, the proof is complete. For $p' \geq 2$, we need to further show $\pi_Y \in \mathcal{P}_{p'}(\mathbb{R}^d)$ when $\gamma$ satisfies \eqref{eq:smallgamma}. Since $b$ is $L$-Lipschitz,
    \begin{align*}
        \Vert b(x) \Vert \leq L \Vert x \Vert + \Vert b(0) \Vert:=L \Vert x\Vert + a, \quad \forall x.
    \end{align*}
    $b$ being $(m,R)$-distant dissipative implies
    \begin{align*}
        \langle b(x),x \rangle \geq \dfrac{m}{2} \Vert x \Vert^2 - {\dfrac{1}{2m}\Vert b(0) \Vert^2} \quad \forall x: \Vert x\Vert \geq R.
    \end{align*}
So there exists $c>0$:
\begin{align}
\label{eq:distantmm}
    \langle b(x),x\rangle \geq \dfrac{m}{2} \Vert x \Vert^2 - c, \quad \forall x \in \mathbb{R}^d.
\end{align}

We have
\begin{align*}
    \mathbb{E}(\Vert Y_{k+1} \Vert^{p'} \vert Y_k=y) = \mathbb{E} \Vert y - \gamma b(y) + \sqrt{2\gamma} Z\Vert^{p'},
\end{align*}
where $Z \sim \mathcal{N}(0,I)$. Applying Lemma \ref{lem:normqineq}, there exists $C_{p'}$ such that
\begin{align*}
    \Vert x+z \Vert^{p'} \leq \Vert x \Vert^{p'} + p' \Vert x \Vert^{p'-2} \langle x,z\rangle + C_{p'} (\Vert x\Vert^{p'-2}\Vert z \Vert^2 + \Vert z \Vert^{p'}).
\end{align*}
Therefore,
\begin{align*}
    \Vert y - \gamma b(y) + \sqrt{2 \gamma} Z \Vert^{p'} \leq &\Vert y \Vert^{p'} + p'\Vert y \Vert^{p'-2} \langle y,-\gamma b(y) + \sqrt{2\gamma} Z \rangle \\
    &+ C_{p'} \left( \Vert y \Vert^{p'-2} \Vert -\gamma b(y) + \sqrt{2\gamma} Z \Vert^2 + \Vert -\gamma b(y) + \sqrt{2\gamma} Z \Vert^{p'}\right)
\end{align*}
so
\begin{align*}
    \mathbb{E} \Vert y - \gamma b(y) + \sqrt{2 \gamma} Z \Vert^{p'} \leq &\Vert y \Vert^{p'} -\gamma {p'}\Vert y \Vert^{p'-2} \langle y,b(y) \rangle\\
    &+C_{p'} \Vert y \Vert^{p'-2} \mathbb{E}\Vert -\gamma b(y) + \sqrt{2\gamma} Z \Vert^2 + C_{p'} \mathbb{E} \Vert -\gamma b(y) + \sqrt{2\gamma}Z \Vert^{p'}.
\end{align*}
Now use \eqref{eq:distantmm},
\begin{align*}
&\mathbb{E} \Vert y - \gamma b(y) + \sqrt{2 \gamma} Z \Vert^{p'}\\
&\leq \Vert y \Vert^{p'} + \gamma p'\Vert y \Vert^{p'-2} \left(-\dfrac{m}{2} \Vert y \Vert^2 + c \right) + 2C_{p'} \Vert y \Vert^{p'-2} (\gamma^2\Vert b(y)\Vert^2 + 2\gamma d) \\
&\quad+ C_{p'} \mathbb{E}\left( \gamma \Vert b(y) \Vert + \sqrt{2\gamma} \Vert Z\Vert \right)^{p'}.
\end{align*}
Applying H\"{o}lder inequality
\begin{align*}
    \left( \gamma \Vert b(y) \Vert + \sqrt{2\gamma} \Vert Z\Vert \right)^{p'} \leq 2^{p'-1} \left(\gamma^{p'} \Vert b(y) \Vert^{p'} + (2\gamma)^{\frac{p'}{2}} \Vert Z\Vert^{p'} \right),
\end{align*}
we get
\begin{align*}
&\mathbb{E} \Vert y - \gamma b(y) + \sqrt{2 \gamma} Z \Vert^{p'} \\
&\leq \Vert y \Vert^{p'} + \gamma p'\Vert y \Vert^{p'-2} \left(-\dfrac{m}{2} \Vert y \Vert^2 + c \right) + 2C_{p'} \Vert y \Vert^{p'-2} (\gamma^2 \Vert b(y)\Vert^2 + 2\gamma d)\\
&+ C_{p'} \gamma^{p'} 2^{p'-1}\Vert b(y) \Vert^{p'} + 2^{\frac{3p'}{2}-1} C_{p'} \gamma^{\frac{p'}{2}} \mathbb{E} \Vert Z \Vert^{p'}\\
&= \Vert y \Vert^{p'} - \dfrac{m\gamma p'}{2} \Vert y \Vert^{p'} + 2 C_{p'} \gamma^2 \Vert y \Vert^{p'-2} \Vert b(y) \Vert^2 + 4 C_{p'} \gamma d \Vert y \Vert^{p'-2} + c \gamma p' \Vert y \Vert^{p'-2}\\
&+ C_{p'} \gamma^{p'} 2^{p'-1} \Vert b(y) \Vert^{p'} + 2^{\frac{3p'}{2}-1} C_{p'} \gamma^{\frac{p'}{2}} \mathbb{E} \Vert Z \Vert^{p'}.
\end{align*}

Now use $\Vert b(y) \Vert \leq L \Vert y \Vert + a$, it holds
\begin{align*}
    \Vert b(y) \Vert^2 &\leq 2(L^2 \Vert y \Vert^2 + a^2)\\
    \Vert b(y) \Vert^{p'} &\leq 2^{p'-1} (L^{p'} \Vert y \Vert^{p'} + a^{p'}).
\end{align*}
{Then}
\begin{align}
\label{eq:amsm}
&\mathbb{E} \Vert y - \gamma b(y) + \sqrt{2 \gamma} Z \Vert^{p'} \leq \left( 1 + 4 C_{p'} \gamma^2 L^2 + C_{p'} \gamma^{p'} 2^{2p'-2}L^{p'} - \dfrac{m\gamma p'}{2} \right)\Vert y \Vert^{p'} \notag\\
&+ (4C_{p'} \gamma d + 4 C_{p'} \gamma^2 a^2 + c \gamma p') \Vert y \Vert^{p'-2} + C_{p'} \gamma^{p'} 2^{2p'-2} a^{p'} + 2^{\frac{3p'}{2}-1} C_{p'} \gamma^{\frac{p'}{2}} \mathbb{E} \Vert Z \Vert^{p'}.
\end{align}

Taking expectation over $y \sim p_{Y_{k}}$, we get
\begin{align}
\label{eq:ki}
    &\mathbb{E} \Vert Y_{k+1} \Vert^{p'} \leq \left( 1 + 4 C_{p'} \gamma^2 L^2 + C_{p'} \gamma^{p'} 2^{2p'-2}L^{p'} - \dfrac{m\gamma p'}{2} \right) \mathbb{E}\Vert Y_k \Vert^{p'} \notag\\
    &+(4C_{p'} \gamma d + 4 C_{p'} \gamma^2 a^2 + c \gamma p') \mathbb{E}\Vert Y_k \Vert^{p'-2} + C_{p'} \gamma^{p'} 2^{2p'-2} a^{p'} + 2^{\frac{3p'}{2}-1} C_{p'} \gamma^{\frac{p'}{2}} \mathbb{E} \Vert Z \Vert^{p'}.
\end{align}

Let $\Bar{R}>0$ so that
\begin{align}
\label{eq:ko}
    \Bar{R}^2 \geq 4 \dfrac{4C_{p'} \gamma d + 4 C_{p'} \gamma^2 a^2 + c \gamma p'}{m\gamma p'}.
\end{align}
We have
\begin{align*}
    \Vert y \Vert^{p'-2} \leq \dfrac{\Vert y \Vert^{p'}}{\Bar{R}^2} \quad \text{for } \Vert y \Vert \geq \Bar{R}.
\end{align*}
So
\begin{align*}
    \mathbb{E} \Vert Y_k \Vert^{p'-2} &= \int_{\Vert y \Vert \geq \Bar{R}}{\Vert y \Vert^{p'-2} p_{Y_k}(y)}dy + \int_{\Vert y \Vert \leq \Bar{R}}{\Vert y \Vert^{p'-2} p_{Y_k}(y)}dy\\
    &\leq \dfrac{1}{\Bar{R}^2}\int_{\Vert y \Vert \geq \Bar{R}}{\Vert y \Vert^{p'} p_{Y_k}(y)}dy + \Bar{R}^{p'-2}\\
    &\leq \dfrac{1}{\Bar{R}^2} \mathbb{E} \Vert Y_{k} \Vert^{p'} + \Bar{R}^{p'-2}.
\end{align*}
Apply this inequality to \eqref{eq:ki},
\begin{align}
\label{eq:recursionYk}
    &\mathbb{E} \Vert Y_{k+1} \Vert^{p'} \leq \left( 1 + 4 C_{p'} \gamma^2 L^2 + C_{p'} \gamma^{p'} 2^{2p'-2}L^{p'} - \dfrac{m\gamma p'}{2} + \dfrac{4C_{p'} \gamma d + 4 C_{p'} \gamma^2 a^2 + c \gamma p'}{\Bar{R}^2}\right) \mathbb{E}\Vert Y_k \Vert^{p'} \notag\\
    &+(4C_{p'} \gamma d + 4 C_{p'} \gamma^2 a^2 + c \gamma p') \Bar{R}^{p'-2} + C_{p'} \gamma^{p'} 2^{2p'-2} a^{p'} + 2^{\frac{3p'}{2}-1} C_{p'} \gamma^{\frac{p'}{2}} \mathbb{E} \Vert Z \Vert^{p'} \notag\\
    &\leq \left( 1 + 4 C_{p'} \gamma^2 L^2 + C_{p'} \gamma^{p'} 2^{2p'-2}L^{p'} - \dfrac{m\gamma p'}{4}\right) \mathbb{E}\Vert Y_k \Vert^{p'} \notag\\
    &+ (4C_{p'} \gamma d + 4 C_{p'} \gamma^2 a^2 + c \gamma p') \Bar{R}^{p'-2} + C_{p'} \gamma^{p'} 2^{2p'-2} a^{p'} + 2^{\frac{3p'}{2}-1} C_{p'} \gamma^{\frac{p'}{2}} \mathbb{E} \Vert Z \Vert^{p'}
\end{align}
thanks to \eqref{eq:ko}. Now for $\gamma$ small enough as in \eqref{eq:smallgamma}, it holds
\begin{align}
\label{sch}
    1 + 4 C_{p'} \gamma^2 L^2 + C_{p'} \gamma^{p'} 2^{2p'-2}L^{p'} - \dfrac{m\gamma p'}{4} < 1.
\end{align}
By choosing $Y_1 \in \mathcal{P}_{p'}(\mathbb{R}^d)$, the recursion \eqref{eq:recursionYk} and \eqref{sch} imply
\begin{align*}
    \sup_{k} \mathbb{E} \Vert Y_{k} \Vert^{p'} < +\infty.
\end{align*}

From \eqref{eq:contractive}, setting $\nu=\pi_Y$ we get $p_{Y_k}$ converging to $\pi_Y$ in 1-Wasserstein, which further implies \emph{narrow convergence}, i.e., for any $\varphi$ bounded and continuous, it holds
\begin{align*}
    \int \varphi(y) p_{Y_k}(y) dy \to \int \varphi(y) \pi_Y(y) dy. 
\end{align*}
Let $\varphi_M(y):= \min\{\Vert y \Vert^{p'},M\}$,
\begin{align*}
    \int\varphi_M(y) \pi_Y(y)dy = \lim_{k\to \infty}{\int\varphi_M(y) p_{Y_k}(y)dy} \leq \sup_{k}\int{\Vert y\Vert^{p'} p_{Y_k}(y)dy}=\sup_{k}\mathbb{E}\Vert Y_k \Vert^{p'} <+\infty.
\end{align*}
By letting $M \to \infty$ and applying the monotone convergence theorem, we obtain
\begin{align*}
    \int{\Vert y \Vert^{p'} \pi_Y(y) dy} <+\infty.
\end{align*}
\end{proof}

\begin{lemma}
\label{lem:rootexst}
    Let $b: \mathbb{R}^d \to \mathbb{R}^d$ be a drift. Assume that $b$ is continuous and $(\mu,R)$-distant dissipative. Then there exists $x^* \in \mathbb{R}^d$ such that $b(x^*)=0$.
\end{lemma}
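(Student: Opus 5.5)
We argue via Brouwer's fixed point theorem, or more precisely via its standard corollary, the ``acute angle'' lemma: if $b$ is continuous on $\mathbb{R}^d$ and $\langle b(x),x\rangle > 0$ for every $x$ on the sphere $\Vert x\Vert = \bar R$, then $b$ vanishes somewhere in the closed ball $\bar B(0,\bar R)$. The proof therefore has two parts: first turn distant dissipativity into a one-point coercivity estimate, then apply the lemma.

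\textbf{Step 1: coercivity.} Take $y=0$ in the distant dissipativity condition. For $\Vert x\Vert \geq R$ this gives
\begin{align*}
\langle b(x),x\rangle \geq \mu\Vert x\Vert^2 + \langle b(0),x\rangle \geq \mu \Vert x\Vert^2 - \Vert b(0)\Vert \Vert x\Vert .
\end{align*}
This is the same computation used in the proof of Lemma~\ref{lem:inPq} to obtain \eqref{eq:distantmm}. Now choose $\bar R > \max\{R, \Vert b(0)\Vert/\mu\}$. Then every $x$ with $\Vert x\Vert = \bar R$ satisfies $\langle b(x),x\rangle > 0$.

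\textbf{Step 2: the acute angle lemma.} This is the main point. Argue by contradiction and suppose $b$ has no zero in $\bar B := \bar B(0,\bar R)$. Define $T(x) = -\bar R\, b(x)/\Vert b(x)\Vert$. It is continuous and maps $\bar B$ into the sphere $\partial \bar B \subset \bar B$, so Brouwer's theorem gives a fixed point $x^\circ = T(x^\circ)$. Since $T$ takes values on the sphere, $\Vert x^\circ\Vert = \bar R$. Then
\begin{align*}
\langle b(x^\circ),x^\circ\rangle = -\bar R\,\Vert b(x^\circ)\Vert < 0,
\end{align*}
which contradicts Step 1. Hence there is $x^*\in\bar B$ with $b(x^*)=0$.

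\textbf{Obstacle and alternative.} No step is genuinely hard once Brouwer is available; the only care needed is that $T$ is well-defined, which is exactly the no-zero hypothesis. If one prefers to avoid citing the acute angle lemma, the same conclusion follows from the homotopy $H(t,x) = t\,b(x) + (1-t)x$. On the sphere $\Vert x\Vert = \bar R$ we have $\langle H(t,x),x\rangle > 0$ for all $t\in[0,1]$, so $H(t,\cdot)$ never vanishes there. By homotopy invariance, $\deg(b,B(0,\bar R),0) = \deg(\mathrm{id},B(0,\bar R),0) = 1$, which again yields a zero of $b$ in the ball.
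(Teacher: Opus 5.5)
Your proof is correct and follows the same route as the paper: plug $y=0$ into the dissipativity inequality to get $\langle b(x),x\rangle>0$ on a sphere of radius above $\max\{R,\Vert b(0)\Vert/\mu\}$, then invoke the Brouwer-type acute-angle theorem. The differences are cosmetic. The paper uses Young's inequality, which gives radius $\max\{R,(\sqrt{2}/\mu)\Vert b(0)\Vert\}$, and it cites \citet[Thm.~2.13]{teschl2005nonlinear} for the second step; you use Cauchy--Schwarz and prove that step directly from Brouwer's fixed point theorem, which also removes the need for the paper's separate case $b(0)=0$.
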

\begin{proof}
If $b(0)=0$, it is done. Now consider $b(0) \neq 0$, we have
\begin{align*}
    \langle b(x)-b(y),x-y\rangle \geq \mu \Vert x-y\Vert^2, \quad \forall x,y: \Vert x-y\Vert \geq R.
\end{align*}
Set $y=0$, for $\Vert x\Vert \geq R$,
\begin{align*}
    \langle b(x),x\rangle &\geq \mu \Vert x \Vert^2 + \langle b(0),x\rangle \\
    &\geq \dfrac{\mu}{2}\Vert x\Vert^2 - \dfrac{1}{2\mu} \Vert b(0)\Vert^2\\
    &\geq \dfrac{1}{2\mu}\Vert b(0)\Vert^2 \quad\text{ if $\Vert x\Vert \geq (\sqrt{2}/\mu)\Vert b(0)\Vert$}.
\end{align*}
So for $R_1:=\max\{R,(\sqrt{2}/\mu)\Vert b(0)\Vert\}$, it holds $\langle b(x),x\rangle > 0$ for all $x: \Vert x \Vert \geq R_1$. According to \citet[Thm. 2.13]{teschl2005nonlinear}, $b$ vanishes somewhere inside $B(0,R_1)$.
\end{proof}

\section{Proofs}
\label{app:proofs}

\subsection{Proof of Lemma \ref{lem:fV}}

\label{proof:lemfV}

1) Suppose that for all $\Vert x - y\Vert \geq R$,
\begin{align*}
    \langle \nabla f(x)+\partial r_1(x) -\partial r_2(x) - \nabla f(y) - \partial r_1(y) + \partial r_2(y),x-y\rangle \geq \mu \Vert x-y\Vert^2.
\end{align*}
Here, by abuse of notation we still use the set notation $\partial$ in the above inequality.

It follows that
\begin{align*}
    \langle \nabla f(x) - \nabla f(y),x-y\rangle &\geq \mu\Vert x-y\Vert^2 - \langle \partial r_1(x)-\partial r_1(y),x-y\rangle + \langle \partial r_2(x) - \partial r_2(y),x-y\rangle\\
    &\geq \mu\Vert x-y\Vert^2 - \langle \partial r_1(x)-\partial r_1(y),x-y\rangle\\
    &\geq \mu \Vert x-y\Vert^2 - 2G_1\Vert x-y\Vert.
\end{align*}
Therefore, $f$ is $(\frac{\mu}{2},\Bar{R})$ distant dissipative where $\Bar{R} = \max\{R, 4G_1/\mu\}$.

2) Suppose that for all $\Vert x-y\Vert \geq R$,
\begin{align*}
    \langle \nabla f(x) - \nabla f(y),x-y\rangle \geq \mu \Vert x-y\Vert^2.
\end{align*}
Then
\begin{align*}
    &\langle \nabla f(x)+\partial r_1(x) -\partial r_2(x) - \nabla f(y) - \partial r_1(y) + \partial r_2(y),x-y\rangle \\
    &\geq \mu \Vert x-y\Vert^2 - \langle \partial r_2(x) - \partial r_2(y),x-y\rangle\\
    &\geq \begin{cases}
        \mu \Vert x-y\Vert^2 - 2 G_2 \Vert x-y\Vert &\text{if Assumption \ref{assum:G2}(i)}\\
        \mu \Vert x-y\Vert^2 - M\Vert x-y\Vert^{\kappa + 1} &\text{if Assumption \ref{assum:G2}(ii)}.
    \end{cases}
\end{align*}
Therefore, $V$ is $(\frac{\mu}{2},\Bar{R})$-distant dissipative where $\Bar{R} = \max\{R,4G_2/\mu\}$ if Assumption \ref{assum:G2}(i), $\max\{R,(2M/\mu)^{\frac{1}{1-\kappa}}\}$ if Assumption \ref{assum:G2}(ii).

\subsection{Proof of Lemma \ref{lem:abcmm}}
\label{proof:abcmm}
We write
\begin{align}
    Y_{k+1} = \prox_{\gamma r_{1}^{\lambda}}(Y_{k}) -\gamma\nabla f(\prox_{\gamma r_{1}^{\lambda}}(Y_{k})) +\gamma \nabla r_{2}^{\lambda}(\prox_{\gamma r_{1}^{\lambda}}(Y_{k})) + \sqrt{2\gamma}Z_{k+1}.
\end{align}
Now using $\prox_{\gamma r_{1}^{\lambda}}(Y_{k}) = Y_k - \gamma \nabla (r_{1}^{\lambda})^{\gamma}(Y_k)$, we get
\begin{align}
\label{eq:temss}
   Y_{k+1} = &Y_k - \gamma \nabla (r_{1}^{\lambda})^{\gamma}(Y_k) -\gamma\nabla f(\prox_{\gamma r_{1}^{\lambda}}(Y_{k}))+\gamma \nabla r_{2}^{\lambda}(\prox_{\gamma r_{1}^{\lambda}}(Y_{k})) + \sqrt{2\gamma}Z_{k+1}.
\end{align}

Since $\nabla (r_1^{\lambda})^{\gamma}(Y_k) = \nabla r_1^{\lambda}(\prox_{\gamma r_1^{\lambda}}(Y_k))$,
\begin{align*}
   Y_{k+1} = &Y_k - \gamma \nabla r_1^{\lambda}(\prox_{\gamma r_1^{\lambda}}(Y_k)) -\gamma\nabla f(\prox_{\gamma r_{1}^{\lambda}}(Y_{k}))+\gamma \nabla r_{2}^{\lambda}(\prox_{\gamma r_{1}^{\lambda}}(Y_{k})) + \sqrt{2\gamma}Z_{k+1}.
\end{align*}

Now since $r_i^{\lambda}$ is $\frac{1}{\lambda}$-smooth for $i\in\{1,2\}$, it holds
\begin{align*}
    \Vert \nabla r_i^{\lambda}(\prox_{\gamma r_1^{\lambda}}(x))-\nabla r_i^{\lambda}(\prox_{\gamma r_1^{\lambda}}(y)) \Vert \leq \dfrac{1}{\lambda} \Vert \prox_{\gamma r_1^{\lambda}}(x)-\prox_{\gamma r_1^{\lambda}}(y) \Vert \leq \dfrac{1}{\lambda} \Vert x-y\Vert
\end{align*}
where the last inequality is thanks to the nonexpansiveness of the proximal operator. Therefore, for all $\gamma>0$, $\nabla r_i^{\lambda}(\prox_{\gamma r_1^{\lambda}}(x))$ is $\frac{1}{\lambda}$-Lipschitz. As $f$ is $L_f$-smooth, the drift $b_{\lambda}^{\gamma}$ is $(\frac{2}{\lambda} + L_f)$-Lipschitz.

\subsection{Proof of Lemma \ref{lem:dissdrift}}
\label{proof:dissdrift}

\begin{align*}
    &\langle b_{\lambda}^{\gamma}(x)-b_{\lambda}^{\gamma}(y),x-y\rangle \\
    &=\langle \nabla r_1^{\lambda}(\prox_{\gamma r_1^{\lambda}}(x)) +\nabla f(\prox_{\gamma r_{1}^{\lambda}}(x))-\nabla r_{2}^{\lambda}(\prox_{\gamma r_{1}^{\lambda}}(x)) \\
    &\quad \quad - \nabla r_1^{\lambda}(\prox_{\gamma r_1^{\lambda}}(y)) -\nabla f(\prox_{\gamma r_{1}^{\lambda}}(y))+\nabla r_{2}^{\lambda}(\prox_{\gamma r_{1}^{\lambda}}(y)), x-y \rangle\\
    &= \underbrace{\langle \nabla r_1^{\lambda}(\prox_{\gamma r_1^{\lambda}}(x)) - \partial r_1(x) - \nabla r_1^{\lambda}(\prox_{\gamma r_1^{\lambda}}(y)) + \partial r_1(y),x-y\rangle}_{:=\rom{1}}+\\
    &\quad + \underbrace{\langle \nabla f(\prox_{\gamma r_{1}^{\lambda}}(x)) - \nabla f(x) - \nabla f(\prox_{\gamma r_{1}^{\lambda}}(y)) + \nabla f(y),x-y\rangle}_{:=\rom{2}}\\
    &\quad+\underbrace{\langle -\nabla r_{2}^{\lambda}(\prox_{\gamma r_{1}^{\lambda}}(x)) +\partial r_2(x) +\nabla r_{2}^{\lambda}(\prox_{\gamma r_{1}^{\lambda}}(y))-\partial r_2(y),x-y \rangle}_{:=\rom{3}}\\
    &\quad+\underbrace{\langle \partial r_1(x) - \partial r_1(y) + \nabla f(x) - \nabla f(y) - \partial r_2(x) + \partial r_2(y),x-y\rangle}_{:=\rom{4}}
\end{align*}
The first term:
\begin{align*}
    \rom{1} &\geq - \left(\Vert \nabla r_1^{\lambda}(\prox_{\gamma r_1^{\lambda}}(x))\Vert + \Vert \partial r_1(x) \Vert + \Vert\nabla r_1^{\lambda}(\prox_{\gamma r_1^{\lambda}}(y))\Vert + \Vert \partial r_1(y) \Vert\right) \Vert x-y\Vert.
\end{align*}
Since $\nabla r_1^{\lambda}(\prox_{\gamma r_1^{\lambda}}(x)) \in \partial r_1\left(\prox_{\lambda r_1} (\prox_{\gamma r_1^{\lambda}}(x)) \right)$, under Assumption \ref{assum:G1}, it holds 
\begin{align*}
    \rom{1} \geq -4G_1 \Vert x-y\Vert.
\end{align*}
The second term,
\begin{align*}
    \rom{2} &\geq -\Vert \nabla f(\prox_{\gamma r_{1}^{\lambda}}(x)) - \nabla f(x) - \nabla f(\prox_{\gamma r_{1}^{\lambda}}(y)) + \nabla f(y) \Vert \Vert x-y\Vert\\
    &\geq -L_f\left(\Vert \prox_{\gamma r_{1}^{\lambda}}(x) - x \Vert + \Vert \prox_{\gamma r_{1}^{\lambda}}(y) -y\Vert \right) \Vert x-y\Vert.
\end{align*}
Now that
\begin{align}
    \label{proxprox2}
    x-\prox_{\gamma r_{1}^{\lambda}}(x) &= \gamma \nabla (r_{1}^{\lambda})^{\gamma}(x) \notag \\
    &=\gamma \nabla r_1^{\lambda}(\prox_{\gamma r_1^{\lambda}}(x)) \notag\\
    &\in \gamma \partial r_1\left(\prox_{\lambda r_1} (\prox_{\gamma r_1^{\lambda}}(x)) \right).
\end{align}
Therefore,
\begin{align*}
    \rom{2} &\geq -2 L_f \gamma G_1 \Vert x-y\Vert.
\end{align*}
The third term:
\begin{align*}
    \rom{3} &\geq \langle -\nabla r_{2}^{\lambda}(\prox_{\gamma r_{1}^{\lambda}}(x)) +\nabla r_{2}^{\lambda}(\prox_{\gamma r_{1}^{\lambda}}(y)),x-y \rangle.
\end{align*}
Under Assumption \ref{assum:G2}, if case (i) happens, we have
\begin{align*}
    \rom{3} \geq -2G_2 \Vert x-y\Vert,
\end{align*}
and if case (ii) happens
\begin{align*}
    \rom{3} &\geq - \Vert \nabla r_2(\prox_{\lambda r_2}(\prox_{\gamma r_{1}^{\lambda}}(x))) - \nabla r_2(\prox_{\lambda r_2}(\prox_{\gamma r_{1}^{\lambda}}(y)))  \Vert \Vert x-y\Vert\\
    &\geq -M\Vert \prox_{\lambda r_2}(\prox_{\gamma r_{1}^{\lambda}}(x)) - \prox_{\lambda r_2}(\prox_{\gamma r_{1}^{\lambda}}(y)) \Vert^{\kappa} \Vert x-y\Vert\\
    &\geq -M\Vert \prox_{\gamma r_{1}^{\lambda}}(x)- \prox_{\gamma r_{1}^{\lambda}}(y) \Vert^{\kappa} \Vert x-y\Vert\\
    &\geq -M \Vert x-y\Vert^{\kappa + 1}.
\end{align*}
The last term: if $\Vert x-y\Vert \geq R_0$
\begin{align*}
    \rom{4} \geq \mu \Vert x- y\Vert^2
\end{align*}
thanks to the Assumption \ref{assum:dissatinf}. From these evaluations, we get
\begin{align*}
    &\langle b_{\lambda}^{\gamma}(x)-b_{\lambda}^{\gamma}(y),x-y\rangle \\
    &\geq \begin{cases}
        \mu \Vert x-y\Vert^2 - (4G_1 + 2L_f \gamma G_1 + 2G_2) \Vert x-y\Vert &\text{if Assumption \ref{assum:G2}(i)}\\
        \mu \Vert x-y\Vert^2 - (4G_1 + 2L_f \gamma G_1) \Vert x-y\Vert - M \Vert x-y\Vert^{\kappa+1} &\text{if Assumption \ref{assum:G2}(ii)}.
    \end{cases}
\end{align*}
For $\Vert x-y \Vert$ being large, the second-order term dominates the lower-order terms (note that $\kappa < 1$), we get:
\begin{align}
    \langle b_{\lambda}^{\gamma}(x)-b_{\lambda}^{\gamma}(y),x-y\rangle \geq \dfrac{\mu}{2} \Vert x-y\Vert^2
\end{align}
whenever
\begin{align}
    \Vert x-y\Vert \geq \begin{cases}
        \max\left\{R_0,\dfrac{8G_1 + 4 L_f \gamma_0 G_1 + 4G_2}{\mu} \right\} &\text{ if Assumption \ref{assum:G2}(i)}\\
        \max\left\{R_0,\dfrac{16G_1 + 8L_f \gamma_0 G_1}{\mu},\left(\dfrac{4M}{\mu} \right)^{\frac{1}{1-\kappa}}\right\} &\text{ if Assumption \ref{assum:G2}(ii)}.
    \end{cases}
\end{align}

\subsection{Proof of the distant dissipativity of $\Bar{b}_{\lambda}^{\gamma}$}

\label{proofbbarlamga}
For $\Vert x-y\Vert \geq R_0$,
\begin{align*}
    &\quad\langle \Bar{b}^{\gamma}_{\lambda}(x)-\Bar{b}^{\gamma}_{\lambda}(y),x-y\rangle \\
    &=\langle \nabla r_1^{\lambda}(\prox_{\gamma r_1^{\lambda}}(x)) +\nabla f(x)-\nabla r_{2}^{\lambda}(x) -\nabla r_1^{\lambda}(\prox_{\gamma r_1^{\lambda}}(y)) -\nabla f(y)+\nabla r_{2}^{\lambda}(y) ,x-y\rangle\\
    &= \langle \nabla r_1^{\lambda}(\prox_{\gamma r_1^{\lambda}}(x)) - \partial r_1(x) -\nabla r_1^{\lambda}(\prox_{\gamma r_1^{\lambda}}(y)) + \partial r_1(y),x-y\rangle \\
    &\quad +\langle -\nabla r_2^{\lambda}(x) + \partial r_2(x) + \nabla r_2^{\lambda}(y) - \partial r_2(y),x-y\rangle\\
    &\quad + \langle \partial r_1(x) + \nabla f(x) - \partial r_2(x) - \partial r_1(y) - \nabla f(y) + \partial r_2(y),x-y\rangle\\
    & \geq -4G_1 \Vert x-y\Vert  + \langle -\nabla r_2^{\lambda}(x) + \nabla r_2^{\lambda}(y) ,x-y\rangle + \mu \Vert x-y\Vert^2.
\end{align*}
Therefore,
\begin{align*}
    \langle \Bar{b}^{\gamma}_{\lambda}(x)-\Bar{b}^{\gamma}_{\lambda}(y),x-y\rangle \geq \begin{cases}
        \mu \Vert x-y\Vert^2 - (4G_1 + 2G_2) \Vert x-y\Vert &\text{if Assumption \ref{assum:G2}(i)}\\
        \mu \Vert x-y\Vert^2 - 4 G_1 \Vert x-y\Vert - M \Vert x-y\Vert^{\kappa+1} &\text{if Assumption \ref{assum:G2}(ii)}.
    \end{cases}
\end{align*}

where the last inequality follows the same arguments as in the proof of Lemma \ref{lem:dissdrift} (Appendix \ref{proof:dissdrift}).

Therefore,
\begin{align*}
\langle \Bar{b}^{\gamma}_{\lambda}(x)-\Bar{b}^{\gamma}_{\lambda}(y),x-y\rangle \geq \dfrac{\mu}{2}\Vert x-y\Vert^2
\end{align*}
whenever
\begin{align*}
    \Vert x-y\Vert \geq \begin{cases}
        \max\left\{R_0,\dfrac{8 G_1 + 4G_2}{\mu} \right\} &\text{if Assumption \ref{assum:G2}(i)}\\
        \max\left\{R_0,\dfrac{16 G_1}{\mu},\left( \dfrac{4M}{\mu}\right)^{\frac{1}{1-\kappa}}\right\} &\text{if Assumption \ref{assum:G2}(ii).}
    \end{cases}
\end{align*}

\subsection{Proof of Theorem \ref{thm:main}}
\label{proofmaintheorem}

Let $Q, \Bar{Q}$ be transition kernels of $\{Y_k\}_k$ and $\{\Bar{Y}_k\}_k$, respectively. Recall that
\begin{align*}
    Y_1 = X_0 - \gamma \nabla f(X_0) + \gamma \nabla r_2^{\lambda}(X_0) + \sqrt{2\gamma}Z_1.
\end{align*}
In Appendix \ref{app:finitefm}, we show that: if $X_0 \in \mathcal{P}_{p'}(\mathbb{R}^d)$ for some $p' \geq 1$, then $Y_1 \in \mathcal{P}_{p'}(\mathbb{R}^d)$. By assumption, $X_0 \in \mathcal{P}_1(\mathbb{R}^d)$ if $q=1$ and $X_0 \in \mathcal{P}_{2q}(\mathbb{R}^d)$ if $q \geq 2$. So $Y_1 \in \mathcal{P}_1(\mathbb{R}^d)$ if $q=1$ and $Y_1 \in \mathcal{P}_{2q}(\mathbb{R}^d)$ if $q \geq 2$. Furthermore, applying Lemma \ref{lem:inPq} (for $p'=2q$), there exist invariant distributions of $Q$ and $\Bar{Q}$ named $p_{\lambda,\gamma}, \Bar{p}_{\lambda,\gamma}$ in $\mathcal{P}_1(\mathbb{R}^d)$ and further in $\mathcal{P}_{2q}(\mathbb{R}^d)$ if $q\geq 2$.

We now consider two cases $q = 1$ and $q \geq 2$ separately since they require slightly different conditions.

\underline{Case 1: $q=1$}

\citet{renaud2025stability} showed in the proof of Theorem 5 that: there exist $D(\lambda)>0$ and $\rho_{\lambda} \in (0,1)$ so that
\begin{align}
\label{zoro}
    W_1(\nu_1 Q^k, \nu_2 Q^k) \leq D(\lambda) \rho_{\lambda}^{k \gamma} W_1(\nu_1,\nu_2)
\end{align}
where $\nu_1,\nu_2 \in \mathcal{P}_1(\mathbb{R}^d)$ are two initial distributions. 


Plugging $\nu_1:=p_{\lambda,\gamma}$ the invariant distribution of $Q$ and $\nu_2:=p_{Y_1}$ the initial distribution of the chain, we get
\begin{align*}
W_1(p_{\lambda,\gamma},p_{Y_{k+1}}) \leq D(\lambda) \rho_{\lambda}^{k \gamma} W_1(p_{\lambda,\gamma},p_{Y_1}).
\end{align*}
We will show that: $W_1(p_{\lambda,\gamma},p_{Y_1})$ is uniformly bounded by a constant that does not depend on $\gamma$, i.e., there exist $C_1(\lambda)>0, \rho_{\lambda} \in (0,1)$,
\begin{align}
\label{Ykgeo}
W_1(p_{\lambda,\gamma},p_{Y_{k+1}}) \leq C_1(\lambda) \rho_{\lambda}^{k \gamma}, \quad \forall k \in \mathbb{N}.
\end{align}

We defer the proof of \eqref{Ykgeo} to a later part of the text. 

Now \citep[Theorem 1]{renaud2025stability} states that: there exists $C_2(\lambda)$ such that

\begin{align}
\label{c2lambda}
&W_1(p_{\lambda,\gamma},\Bar{p}_{\lambda,\gamma}) \leq C_2(\lambda) \left(\mathbb{E}_{X \sim p_{\lambda,\gamma}} \Vert b_{\lambda}^{\gamma}(X) - \Bar{b}_{\lambda}^{\gamma}(X)  \Vert^2 \right)^{\frac{1}{2}}.
\end{align}


On the other hand,
\begin{align}
\label{eq:bb}
    \Vert b_{\lambda}^{\gamma}(y) - \Bar{b}_{\lambda}^{\gamma}(y)\Vert\leq \left(L_f + \dfrac{1}{\lambda} \right) \Vert \prox_{\gamma r_1^{\lambda}}(y) - y\Vert\leq  \left(L_f + \dfrac{1}{\lambda} \right) G_1 \gamma
\end{align}
where the last inequality is thanks to \eqref{proxprox2}. Therefore,
\begin{align}
\label{dis-drif}
W_1(p_{\lambda,\gamma},\Bar{p}_{\lambda,\gamma}) \leq C_2(\lambda) \left( L_f + \dfrac{1}{\lambda}\right) G_1 \gamma.
\end{align}
On the other hand, to decouple the step size and the drift, consider the following auxiliary Markov chain:
\begin{align}
\label{twosz}
    \Bar{Y}_{k+1} = \Bar{Y}_k - \epsilon \Bar{b}_{\lambda}^{\gamma}(\Bar{Y_k}) + \sqrt{2\epsilon}\Bar{Z}_{k+1},
\end{align}
where the step size $\epsilon \leq \frac{\mu \lambda^2}{2(2+\lambda L_f)^2}$. Applying \citep[Theorem 2]{renaud2025stability}, this chain has an invariant distribution $\Bar{p}_{\lambda,\gamma,\epsilon}$ and there exists $C_3(\lambda)>0$ independent to both $\epsilon$ and $\gamma$ (note that $C_3(\lambda)$ depends on the drift $\Bar{b}_{\lambda}^{\gamma}$ only via its smoothness parameter and distant dissipativity parameters and these parameters are independent to $\gamma$):
\begin{align}
W_1(\Bar{p}_{\lambda,\gamma,\epsilon},\pi_{\lambda,\gamma}) \leq C_3(\lambda) \epsilon^{\frac{1}{2}}
\end{align}
where $\pi_{\lambda,\gamma}$ is defined in \eqref{mulambdagamma}. By setting $\epsilon=\gamma$ and  $\Bar{p}_{\lambda,\gamma}:=\Bar{p}_{\lambda,\gamma,\gamma}$, we get
\begin{align}
\label{ulaerror}
W_1(\Bar{p}_{\lambda,\gamma},\pi_{\lambda,\gamma}) \leq C_3(\lambda) \gamma^{\frac{1}{2}}.
\end{align}

From \eqref{Ykgeo}, \eqref{dis-drif}, and  \eqref{ulaerror},
\begin{align}
   \label{pyk} W_1(p_{Y_{k+1}},\pi_{\lambda,\gamma}) \leq C_1(\lambda) \rho_{\lambda}^{k\gamma} + C_2(\lambda) \left( L_f + \dfrac{1}{\lambda}\right) G_1 \gamma + C_3(\lambda) \gamma^{\frac{1}{2}}.
\end{align}

Here we can denote $A_{\lambda} = C_1(\lambda), B_{\lambda} = C_2(\lambda) \left( L_f + \dfrac{1}{\lambda}\right) G_1 \gamma_0^{\frac{1}{2}} + C_3(\lambda)$ for some $\gamma_0$ being an upper bound of $\gamma$.

The above result gives a convergence guarantee to $\pi_{\lambda,\gamma}$ for the law of the sequence $\{Y_k\}$. As promised, we prove \eqref{Ykgeo} by uniformly bounding $W_1(p_{\lambda,\gamma},p_{Y_1})$ as follows. We have
\begin{align*}
W_1(p_{\lambda,\gamma},p_{Y_1}) &\leq W_1(p_{\lambda,\gamma},\Bar{p}_{\lambda,\gamma}) + W_1(\Bar{p}_{\lambda,\gamma},\pi_{\lambda,\gamma}) + W_1(\pi_{\lambda,\gamma},p_{Y_1})\\
&\leq C_2(\lambda) \left( L_f + \dfrac{1}{\lambda}\right) G_1 \gamma_0 + C_3(\lambda) \gamma_0^{\frac{1}{2}} + W_1(\pi_{\lambda,\gamma},p_{Y_1})
\end{align*}
where $\gamma_0$ denotes the upper bound of $\gamma$ (the step size condition in Theorem \ref{thm:main}).
On the other hand, from \eqref{tempmo}, it holds
\begin{align}
\label{eq:upperpi}
    \pi_{\lambda,\gamma}(x) \leq  e^{\frac{G_1^2 \gamma_0}{2}} \pi_{\lambda}(x).
\end{align}
So,
\begin{align*}
    W_1(\pi_{\lambda,\gamma},p_{Y_1}) &\leq \int{\Vert x\Vert} \pi_{\lambda,\gamma}(x) dx + \int{\Vert x\Vert p_{Y_1}(x)} dx\\
    &\leq e^{\frac{G_1^2 \gamma_0}{2}} \int{\Vert x\Vert} \pi_{\lambda}(x)dx + \int{\Vert x\Vert p_{Y_1}(x)} dx.
\end{align*}
This is a uniform bound.

Now we turn to the original sequence $X_{k+1}=\prox_{\gamma r_{1}^{\lambda}}(Y_{k+1})$. The distribution of $X_{k+1}$ is $p_{X_{k+1}} = (\prox_{\gamma r_1^{\lambda}})_{\#} p_{Y_{k+1}}$. Let $\nu_{\lambda,\gamma} = (\prox_{\gamma r_1^{\lambda}})_{\#} \pi_{\lambda,\gamma}$ and $(X,Y)$ be the optimal coupling of $(\pi_{\lambda,\gamma},p_{Y_{k+1}})$,
\begin{align*}
W_1(p_{X_{k+1}},\nu_{\lambda,\gamma}) &\leq \mathbb{E}\Vert \prox_{\gamma r_1^{\lambda}}(X) - \prox_{\gamma r_1^{\lambda}}(Y)\Vert\\
&\leq \mathbb{E}\Vert X-Y\Vert\\
&=W_1(\pi_{\lambda,\gamma},p_{Y_{k+1}}).
\end{align*}

\underline{Case 2: $q \geq 2$}

According to the proof of Theorem 5 in \citep{renaud2025stability} and Lemma 18 therein: there exist $D_q(\lambda) >0, \rho_{\lambda} \in (0,1)$:
\begin{align*}
    W_{q}(\mu Q^k,\nu Q^k)^q \leq 2^{q-1}\Vert \mu Q^k - \nu Q^k \Vert_{V_q} \leq D_q(\lambda) \rho_{\lambda}^{k \gamma}(\mu(V_q^2) + \nu(V_q^2)).
\end{align*}
where $V_q = 1 + \Vert \cdot \Vert^q$. In particular, with $\nu=p_{\lambda,\gamma}$, it holds 
\begin{align}
\label{eq:kop}
W_q(\mu Q^k,p_{\lambda,\gamma})^q \leq D_q(\lambda) \rho_{\lambda}^{k \gamma}(\mu(V_q^2) + p_{\lambda,\gamma}(V^2_q)).
\end{align}
Thanks to Lemma \ref{lem:inPq}, $p_{\lambda,\gamma}(V_q^2)<+\infty$, the convergence is exponentially fast given that $\mu \in \mathcal{P}_{2q}(\mathbb{R}^d)$. From \eqref{eq:kop}, we will further show the following uniform bound in the later part of the text: there exist $C_{1,q}(\lambda)>0$:
\begin{align}
\label{eq:uniformq}
W_q(p_{Y_{k+1}},p_{\lambda,\gamma}) \leq C_{1,q} (\lambda)\rho_{\lambda}^{k\gamma/q}.
\end{align}
On another hand, from \citep[Theorem 1]{renaud2025stability}, there exists $C_{2,q}(\lambda)>0$
\begin{align*}
    W_q(p_{\lambda,\gamma},\Bar{p}_{\lambda,\gamma}) \leq C_{2,q}(\lambda) \left(L_f + \dfrac{1}{\lambda} \right)^{\frac{1}{q}}(G_1 \gamma)^{\frac{1}{q}}
\end{align*}
and from \citep[Theorem 2]{renaud2025stability}, there exists $C_{3,q}(\lambda)>0$:
\begin{align*}
W_q(\Bar{p}_{\lambda,\gamma},\pi_{\lambda,\gamma}) \leq C_{3,q}(\lambda) \gamma^{\frac{1}{2q}}.
\end{align*}
Therefore,
\begin{align*}
W_q(p_{Y_{k+1}},\pi_{\lambda,\gamma}) \leq C_{1,q}(\lambda) \rho_{\lambda}^{k \gamma/q} + C_{2,q}(\lambda) \left(L_f + \dfrac{1}{\lambda} \right)^{\frac{1}{q}} G_1^{\frac{1}{q}} \gamma^{\frac{1}{q}} + C_{3,q}(\lambda)\gamma^{\frac{1}{2q}}.
\end{align*}

Note that we can absorb $\rho_{\lambda}:=\rho_{\lambda}^{1/q}$.

Finally, we show the uniform bound \eqref{eq:uniformq} which boils down to a uniform bound for $p_{\lambda,\gamma}(V_{2q})$. We have
\begin{align}
\label{eq:ka}
    W_{2q}(p_{\lambda,\gamma},\pi_{\lambda,\gamma}) \leq W_{2q}(p_{\lambda,\gamma}, \Bar{p}_{\lambda,\gamma}) + W_{2q}(\Bar{p}_{\lambda,\gamma},\pi_{\lambda,\gamma}).
\end{align}
Similar to the arguments of Case 1, the RHS of \eqref{eq:ka} can be bounded uniformly by some $E(\lambda)$. By H\"{o}lder inequality,
\begin{align*}
    \Vert x \Vert^{2q} \leq 2^{2q-1}(\Vert y \Vert^{2q} + \Vert x-y\Vert^{2q}).
\end{align*}
Let $(X,Y)$ be the optimal coupling w.r.t. the cost $d(x,y)=\Vert x-y\Vert^{2q}$ of $p_{\lambda,\gamma}$ and $\pi_{\lambda,\gamma}$,
\begin{align*}
\int{\Vert x \Vert^{2q} d p_{\lambda,\gamma}(x)}&=\mathbb{E}\Vert X \Vert^{2q} \\&\leq 2^{2q-1} \left(\mathbb{E}\Vert Y\Vert^{2q} + \mathbb{E}\Vert X-Y\Vert^{2q} \right)\\
    &= 2^{2q-1}\left(\mathbb{E}\Vert Y\Vert^{2q} + W_{2q}(p_{\lambda,\gamma},\pi_{\lambda,\gamma})^{2q} \right)\\
    &\leq 2^{2q-1}\left(\int{\Vert y \Vert^{2q}} d \pi_{\lambda,\gamma}(y) + E(\lambda)^{2q}\right)\\
    &\leq 2^{2q-1}\left(e^{\frac{G_1^2 \gamma_0}{2}}\int{\Vert y \Vert^{2q}} d \pi_{\lambda}(y) + E(\lambda)^{2q}\right)
\end{align*}
where the last inequality follows \eqref{eq:upperpi}.

Lastly, for the sequence $\{X_k\}_k$
\begin{align*}
W_q(p_{X_{k+1}},\nu_{\lambda,\gamma}) \leq W_q(p_{Y_{k+1}},\pi_{\lambda,\gamma}).
\end{align*}

\subsection{Bound $W_q(\pi_{\lambda},\pi)$}
\label{proof:plambdapi}
We follow the proof template in \citep[Appendix G.3]{renaud2025stability} where we additionally handle the part $-r^{\lambda}$.

\underline{Case 1: Assumption \ref{assum:G2}(i):}

Since $r_i$ is $G_i$-Lipschitz ($i \in \{1,2\}$), it holds $r_i(x)-r_i^{\lambda}(x) \leq \frac{\lambda G_i^2}{2}$ for all $x$. Combining with the fact that the Moreau envelope of a function is a lower bound of that function, we get
\begin{align}
\label{eq:upper}
  -\dfrac{ \lambda G_2^2}{2}  \leq r_1 - r_2 -(r_1^{\lambda} - r_2^{\lambda}) \leq  \dfrac{ \lambda G_1^2}{2}.
\end{align}
Let $\beta:=\frac{\lambda G_2^2}{2}$. Given a function $\mathcal{V}: \mathbb{R}^d \to [1,\infty)$ (that induces the $\mathcal{V}$-norm), we write
\begin{align*}
    \Vert \pi_{\lambda}-\pi\Vert_\mathcal{V} &= \sup_{\vert \phi \vert \leq \mathcal{V}} \int{\phi(x)(\pi_{\lambda}(x)-\pi(x))}dx\\
    &\leq\sup_{\vert \phi \vert \leq \mathcal{V}} \int{\vert \phi(x) \vert \vert \pi_{\lambda}(x)-\pi(x)\vert}dx\\
    &\leq \int{\mathcal{V}(x) \vert \pi_{\lambda}(x)-\pi(x)\vert}dx\\
    &=\int{\mathcal{V}(x)\pi(x)\left\vert 1-\dfrac{\pi_{\lambda}(x)}{\pi(x)}\right\vert}dx\\
    &=\int{\mathcal{V}(x)\pi(x)\left\vert 1-e^{r_2^{\lambda}(x)-r_1^{\lambda}(x)-r_2(x)+r_1(x)} \dfrac{\int{e^{-f-r_1+r_2}}}{\int{e^{-f-r_1^{\lambda}+r_2^{\lambda}}}} \right\vert}dx\\
&=\int{\mathcal{V}(x)\pi(x)\left\vert 1-e^{r_2^{\lambda}(x)-r_1^{\lambda}(x)-r_2(x)+r_1(x)+\beta} \cdot e^{-\beta} \dfrac{\int{e^{-f-r_1+r_2}}}{\int{e^{-f-r_1^{\lambda}+r_2^{\lambda}}}} \right\vert}dx\\
&\leq \int{\mathcal{V}(x)\pi(x)\left\vert 1-e^{r_2^{\lambda}(x)-r_1^{\lambda}(x)-r_2(x)+r_1(x)+\beta} \right\vert}dx\\
&\quad+\int{\mathcal{V}(x)\pi(x)e^{r_2^{\lambda}(x)-r_1^{\lambda}(x)-r_2(x)+r_1(x)+\beta}\left \vert 1-e^{-\beta} \dfrac{\int{e^{-f-r_1+r_2}}}{\int{e^{-f-r_1^{\lambda}+r_2^{\lambda}}}} \right\vert}dx.
\end{align*}
  Since
\begin{align}
    r_2^{\lambda}(x)-r_1^{\lambda}(x)-r_2(x)+r_1(x)+\beta \geq 0
\end{align}
it follows $e^{ r_2^{\lambda}(x)-r_1^{\lambda}(x)-r_2(x)+r_1(x)+\beta} \geq 1$ and
\begin{align*}
    e^{-\beta} \dfrac{\int{e^{-f-r_1+r_2}}}{\int{e^{-f-r_1^{\lambda}+r_2^{\lambda}}}} \leq 1,
\end{align*}
we have
\begin{align*}
\Vert \pi_{\lambda}-\pi\Vert_\mathcal{V} &\leq\int{\mathcal{V}(x)\pi(x)\left( e^{r_2^{\lambda}(x)-r_1^{\lambda}(x)-r_2(x)+r_1(x)+\beta}-1 \right)}dx\\
&\quad+\int{\mathcal{V}(x)\pi(x)e^{r_2^{\lambda}(x)-r_1^{\lambda}(x)-r_2(x)+r_1(x)+\beta}\left( 1-e^{-\beta} \dfrac{\int{e^{-f-r_1+r_2}}}{\int{e^{-f-r_1^{\lambda}+r_2^{\lambda}}}} \right)}dx
\end{align*}

Now using the evaluation \eqref{eq:upper}, we derive
\begin{align*}
\Vert \pi_{\lambda}-\pi\Vert_\mathcal{V} &\leq \int{\mathcal{V}(x)\pi(x)\left( e^{\frac{\lambda (G_1^2+G_2^2)}{2}}-1 \right)}dx\\
&\quad+\int{\mathcal{V}(x)\pi(x)e^{\frac{\lambda (G_1^2+G_2^2)}{2}}\left( 1-e^{-\beta} \dfrac{\int{e^{-f-r_1+r_2}}}{\int{e^{-f-r_1^{\lambda}+r_2^{\lambda}}}} \right)}dx
\end{align*}
It also follows from \eqref{eq:upper} that
\begin{align*}
    \dfrac{\int{e^{-f-r_1+r_2}}}{\int{e^{-f-r_1^{\lambda}+r_2^{\lambda}}}}  \geq e^{-\frac{\lambda G_1^2}{2}}.
\end{align*}
Therefore
\begin{align*}
    \Vert \pi_{\lambda}-\pi\Vert_\mathcal{V} \leq 2 \left(e^{\frac{\lambda(G_1^2+G_2^2)}{2}}-1 \right) \int{\mathcal{V}(x)\pi(x)}dx = O(\lambda).
\end{align*}

It then follows from \citep[Lemma 18]{renaud2025stability} that (by using $\mathcal{V} = 1 + \Vert \cdot \Vert^q$)
\begin{align*}
    W_q(\pi_{\lambda},\pi) = O(\lambda^{1/q}).
\end{align*}

\underline{Case 2: Assumption \ref{assum:G2}(ii):}

It is a standard result that
\begin{align*}
    r_2(x) - r_2^{\lambda}(x) \leq \dfrac{\lambda}{2} \Vert \nabla r_2(x)\Vert^2, \quad \forall x \in \mathbb{R}^d.
\end{align*}
Indeed, a short proof is as follows
\begin{align*}
    r_2(x) - r_2^{\lambda}(x) &=  r_2(x) - \inf_{y}\left\{r_2(y) + \dfrac{1}{2\lambda} \Vert x-y\Vert^2 \right\}\\
    &= \sup_y\left\{r_2(x) - r_2(y) - \dfrac{1}{2\lambda} \Vert x-y\Vert^2 \right\}\\
    &\leq \sup_y\left\{\langle \nabla r_2(x),x-y\rangle - \dfrac{1}{2\lambda} \Vert x-y\Vert^2 \right\}\\
    &= \sup_y\left\{-\dfrac{1}{2\lambda} \Vert y-x + \lambda \nabla r_2(x) \Vert^2 + \dfrac{\lambda}{2} \Vert \nabla r_2(x) \Vert^2 \right\}\\
    &\leq \dfrac{\lambda}{2} \Vert \nabla r_2(x)\Vert^2.
\end{align*}
Under Assumption \ref{assum:G2}(ii):
\begin{align*}
    \Vert \nabla r_2(x) \Vert^2 \leq 2(\Vert \nabla r_2(0) \Vert^2 + M^2 \Vert x \Vert^{2\kappa}).
\end{align*}
Therefore
\begin{align*}
    r_2(x) - r^{\lambda}_2(x) \leq \lambda(\Vert \nabla r_2(0) \Vert^2 + M^2 \Vert x \Vert^{2\kappa})
\end{align*}
and 
\begin{align*}
    \vert r_1(x)-r_2(x) - r_1^{\lambda}(x) + r_2^{\lambda}(x) \vert \leq \dfrac{\lambda G_1^2}{2} + \lambda \Vert \nabla r_2(0) \Vert^2 + \lambda M^2 \Vert x\Vert^{2\kappa}.
\end{align*}
By applying
\begin{align*}
    \vert e^t - 1\vert \leq \vert t \vert e^{\vert t \vert}, \quad \forall t \in \mathbb{R},
\end{align*}
we have
\begin{align*}
    \Vert \pi_{\lambda} - \pi\Vert_{\mathcal{V}_q} &\leq \int{\mathcal{V}_q(x)\pi(x)\left\vert 1-e^{r_2^{\lambda}(x)-r_1^{\lambda}(x)-r_2(x)+r_1(x)} \dfrac{\int{e^{-f-r_1+r_2}}}{\int{e^{-f-r_1^{\lambda}+r_2^{\lambda}}}} \right\vert}dx\\
    &\leq \int{\mathcal{V}_q(x)\pi(x) \left\vert 1-e^{r_2^{\lambda}(x) - r_1^{\lambda}(x)-r_2(x)+r_1(x)} \right\vert} dx \\
    &+ \int{\mathcal{V}_q(x)\pi(x)e^{r_2^{\lambda}(x)-r_1^{\lambda}(x)-r_2(x)+r_1(x)}}dx \left\vert 1- \dfrac{\int{e^{-f-r_1+r_2}}}{\int{e^{-f-r_1^{\lambda}+r_2^{\lambda}}}} \right\vert\\
    &\leq \int{\mathcal{V}_q(x)\pi(x) \vert r_2^{\lambda}(x) - r_1^{\lambda}(x)-r_2(x)+r_1(x)  \vert} e^{\vert r_2^{\lambda}(x) - r_1^{\lambda}(x)-r_2(x)+r_1(x) \vert} dx\\
    &+e^{\lambda G_1^2/2} \pi(\mathcal{V}_q) \left\vert 1- \dfrac{\int{e^{-f-r_1+r_2}}}{\int{e^{-f-r_1^{\lambda}+r_2^{\lambda}}}} \right\vert\\
    &\leq \underbrace{\int{\mathcal{V}_q(x)\pi(x) \left(\dfrac{\lambda G_1^2}{2} + \lambda \Vert \nabla r_2(0) \Vert^2 + \lambda M^2 \Vert x\Vert^{2\kappa}\right)e^{\frac{\lambda G_1^2}{2} + \lambda \Vert \nabla r_2(0) \Vert^2 + \lambda M^2 \Vert x\Vert^{2\kappa}}}dx}_{\rom{1}} \\
    &+\underbrace{e^{\lambda G_1^2/2} \pi(\mathcal{V}_q) \left\vert 1- \dfrac{\int{e^{-f-r_1+r_2}}}{\int{e^{-f-r_1^{\lambda}+r_2^{\lambda}}}} \right\vert}_{\rom{2}}.
\end{align*}

Since $\pi$ has sub-Gaussian tails (Appendix \ref{app:gausstail}),
\begin{align}
\label{eq:finitemomexp}
    \int{\mathcal{V}_q(x) \Vert x \Vert^{2\kappa} e^{\lambda M^2 \Vert x \Vert^{2\kappa}} \pi(x)} < +\infty
\end{align}
for all $\kappa \in (0,1)$. To see this, recall $\pi \propto e^{-V}$ and there exist $a,b>0$:
\begin{align*}
    V(x) \geq a \Vert x\Vert^2 - b.
\end{align*}
Let $\Bar{a} = a/(\lambda M^2 \kappa)$. By Young's inequality
\begin{align*}
    \Vert x \Vert^{2\kappa} \leq \dfrac{\kappa \Bar{a}}{2} \Vert x \Vert^2 + \left( \dfrac{2}{\Bar{a}}\right)^{\frac{\kappa}{1-\kappa}}(1-\kappa).
\end{align*}
Therefore,
\begin{align*}
    V(x) - \lambda M^2 \Vert x\Vert^{2\kappa} \geq \dfrac{a}{2}\Vert x\Vert^2 - c
\end{align*}
for some $c$. So $e^{-V(x) + \lambda M^2 \Vert x\Vert^{2\kappa}}$ also has sub-Gaussian tails, which guarantees the finiteness of \eqref{eq:finitemomexp}. As a result, $\rom{1} = O(\lambda)$.

Now that
\begin{align*}
\int e^{-\lambda\bigl(\|\nabla r_2(0)\|^2 + M^2\|x\|^{2\kappa}\bigr)} \,\pi(x)\,dx
\;\le\;
\frac{\int e^{-f(x)-r_1^{\lambda}(x)-r_2^{\lambda}(x)}\,dx}{\int e^{-f(x)-r_1(x)+r_2(x)}\,dx}
\;\le\;
e^{\lambda G_1^2/2}.
\end{align*}
Therefore,
\begin{align*}
    \left\vert 1- \dfrac{\int{e^{-f-r_1+r_2}}}{\int{e^{-f-r_1^{\lambda}+r_2^{\lambda}}}} \right\vert &\leq \left\vert 1-e^{-\lambda G_1^2/2} \right\vert + \left\vert 1-\dfrac{e^{\lambda\Vert \nabla r_2(0) \Vert^2}}{\int{e^{-\lambda M^2 \Vert x\Vert^{2\kappa}}\pi(x)}dx}\right\vert\\
    &\leq \lambda \dfrac{G_1^2}{2} e^{\lambda G_1^2/2} + \dfrac{\vert 1- e^{\lambda\Vert \nabla r_2(0) \Vert^2}\vert + \vert 1-  \int{e^{-\lambda M^2 \Vert x\Vert^{2\kappa}}\pi(x)}dx\vert}{\int{e^{-\lambda M^2 \Vert x\Vert^{2\kappa}}\pi(x)} dx}\\
    &\leq\lambda \dfrac{G_1^2}{2} e^{\lambda G_1^2/2} + \dfrac{\lambda \Vert \nabla r_2(0) \Vert^2e^{\lambda \Vert \nabla r_2(0)\Vert^2} + \int{\vert 1 - e^{-\lambda M^2 \Vert x\Vert^{2\kappa}} \vert \pi(x)}dx}{\int{e^{-\lambda M^2 \Vert x\Vert^{2\kappa}}\pi(x)}dx}\\
    &\leq \lambda \dfrac{G_1^2}{2} e^{\lambda G_1^2/2} + \dfrac{\lambda \Vert \nabla r_2(0) \Vert^2e^{\lambda \Vert \nabla r_2(0)\Vert^2} + \lambda M^2 \int{\Vert x\Vert^{2\kappa} e^{\lambda M^2 \Vert x\Vert^{2\kappa}} \pi(x)}dx}{\int{e^{-\lambda M^2 \Vert x\Vert^{2\kappa}}\pi(x)}dx}\\
    &= O(\lambda)
\end{align*}
thanks to $\pi$ having sub-Gaussian tails.

\subsection{On the dependence of the bounds on key parameters}
\label{app:lamdadepen}
For simplicity, we discuss the $W_1$-distant case.

Recall that the general drift $b_{\lambda}^{\gamma}$ is $L$ smooth and $(m^+,R)$ distant dissipative, where $L = \frac{2}{\lambda} + L_f$, $m^+ = \frac{\mu}{2}$, and $R=\max\left\{R_0,\frac{8G_1 + 4 L_f \bar{\gamma} G_1 + 4G_2}{\mu} \right\}$ under Assumption \ref{assum:G2}(i) or $R=\max\left\{R_0,\frac{16G_1 + 8L_f \bar{\gamma} G_1}{\mu},\left(\frac{4M}{\mu} \right)^{\frac{1}{1-\kappa}}\right\}$ under Assumption \ref{assum:G2}(ii), here $\bar{\gamma}=m^+/L^2$. We discuss \underline{\emph{the difficult paradigm}: $LR^2 \gg 1$}. The limit of interest is indeed: small $\lambda$ and $\mu$, large $L_f,R_0,G_1,G_2,M$.

The constant $\rho_{\lambda}$ in \eqref{zoro} comes from Corollary 2 of \cite{de2019convergence} and is given in the equation (10) therein (with $m=-L$):
\begin{align}
\label{rholambda}
-\log(\log(\rho_{\lambda}^{-1})) \simeq (LR^2/4) \sup_{\gamma \in (0,\Bar{\gamma}]}\left\{ \left( 1+\dfrac{\gamma L}{2}\right) \left(1-\exp\left[-\dfrac{R^2L(2+\gamma L)}{1+2\gamma L + \gamma^2 L^2} \right] \right)^{-1}\right\}
\end{align}
where $\Bar{\gamma} = m^+/L^2$ and $\simeq$ denotes equality up to logarithmic factors. The supremum term is upper bounded at the limits of interest. Indeed, for $\gamma \leq \Bar{\gamma}$:
\begin{align*}
    \left( 1+\dfrac{\gamma L}{2}\right) \left(1-\exp\left[-\dfrac{R^2L(2+\gamma L)}{1+2\gamma L + \gamma^2 L^2} \right] \right)^{-1} &\leq \left( 1 + \dfrac{m^+}{2L}\right) \left( 1 - \exp\left[-\dfrac{2R^2L}{(1 + \gamma L)^2} \right]\right)^{-1}\\
    &\leq \left( 1 + \dfrac{m^+}{2L}\right) \left( 1 - \exp\left[-\dfrac{2R^2L}{(1 + m^+/L)^2} \right]\right)^{-1}\\
    &=O(1).
\end{align*}
Therefore, $-\log(\log(\rho_{\lambda}^{-1})) \simeq LR^2$.

By Lemma \ref{lem:rootexst}, there exists $x^*$: $b_{\lambda}^{\gamma}(x^*)=0$ and we can shift:
\begin{align}
    \label{shift}
    \hat{b}_{\lambda}^{\gamma}(x):=b_{\lambda}^{\gamma}(x+x^*),
\end{align}
to make $\hat{b}_{\lambda}^{\gamma}(0)=0$. We denote the sequence corresponding to $\hat{b}_{\lambda}^{\gamma}$ as $\hat{X}_{k+1} = \hat{X}_k - \gamma \hat{b}_{\lambda}^{\gamma}(\hat{X}_k) + \sqrt{2\gamma} \hat{Z}_{k+1}$ and let $\hat{Q}$ be the Markov transition kernel corresponding to this sequence. Since $(\hat{X}_{k+1} + x^*) = (\hat{X}_k+x^*) - \gamma b_{\lambda}^{\gamma}(\hat{X}_k+x^*) + \sqrt{2\gamma} \hat{Z}_{k+1}$, the law of $X_k \vert (X_0=x+x^*)$ and the law of $\hat{X}_k\vert (\hat{X}_0=x) + x^*$ are the same. Since we choose $x^*$ by Lemma \ref{lem:rootexst}, $\Vert x^*\Vert \leq \max\{R,(\sqrt{2}/m^+) \Vert b_{\lambda}^{\gamma}(0)\Vert\}$. Moreover, we can bound $\Vert b_{\lambda}^{\gamma}(0)\Vert$ by $O(G_1 + G_2 +\Vert \nabla f(0)\Vert)$ if Assumption \ref{assum:G2}(i) or $O(G_1 + \Vert \nabla f(0)\Vert+ M^{1/(1-\kappa)} + \Vert \nabla r_2(0)\Vert)$ if Assumption \ref{assum:G2}(ii), so $\Vert x^*\Vert$ is bounded uniformly w.r.t. small $\lambda$ and $\gamma$.

Applying \citet[Thm. 13]{de2019convergence} (also see the discussion right after Theorem 13) to $b_{\lambda}^{\gamma}$: let $\mathcal{V}(x,y) := 1 + \Vert x-y\Vert/R$, $\mathbf{c}(x,y)=1(x\neq y) \mathcal{V}(x,y)$ and $W_{\mathbf{c}}$ the Wasserstein distance associated with the cost $\mathbf{c}$, it holds

\begin{align*}
   W_{\mathbf{c}}(\delta_x Q^k,\delta_y Q^k) = W_{\mathbf{c}}(\delta_{x-x^*} \hat{Q}^k, \delta_{y-x^*} \hat{Q}^k) \leq \left(D_{\Bar{\gamma},1,a} + D_{\Bar{\gamma},2,a} + C_{\Bar{\gamma},a}\right) \rho^{k \gamma/4} \mathbf{c}(x,y)
\end{align*}
for all $\gamma \in (0,\Bar{\gamma}]$, where $\Bar{\gamma} = m^+/L^2$, $\rho$ is the same as $\rho_{\lambda}$ in \eqref{zoro} and \eqref{rholambda} (up to a constant power), and
\begin{align*}
    D_{\Bar{\gamma},1,a} &= 1 + 4A \log^{-1}\left(\dfrac{1}{\hat{\lambda}}\right) / \hat{\lambda}^{\Bar{\gamma}}\\
    D_{\Bar{\gamma},2,a} &= D_{\Bar{\gamma},1,a} A \hat{\lambda}^{-(1+\Bar{\gamma})\ell}(1+\Bar{\gamma})\ell\\
    C_{\Bar{\gamma},a} &= \dfrac{8A}{\log(1/\rho) \rho^{\Bar{\gamma}}}\\
    \hat{\lambda} &= \exp\left[-\dfrac{1}{2}\left(m^+ - \dfrac{\Bar{\gamma}L^2}{2} \right) \right]\\
    A &= m^+ + L\\
    \ell &= \lceil R^2 \rceil
\end{align*}

We simplify: $\hat{\lambda} = \exp\left[-(1/4)m^+\right]$.

\underline{$D_{\Bar{\gamma},1,a}$:}
\begin{align*}
    D_{\Bar{\gamma},1,a} =  1 + \dfrac{16(m^+ + L)}{m^+ \exp\left[ -\dfrac{(m^+)^2}{4L^2} \right]} = O(L/m^+).
\end{align*}


\underline{$D_{\Bar{\gamma},2,a}$:}

\begin{align*}
    D_{\Bar{\gamma},2,a} &= D_{\Bar{\gamma},1,a} (m^+ + L) \exp\left( \dfrac{m^+}{4}(1+\Bar{\gamma}) \ell \right)(1+\Bar{\gamma}) \ell\\
    &= O\left( \dfrac{L^2 \ell}{m^+} \exp(m^+ \ell)\right) = O\left( \dfrac{L^2 R^2}{m^+} \exp(m^+ R^2)\right).
\end{align*}

\underline{$C_{\Bar{\gamma},a}$:}

Since $\rho \uparrow 1$ in the hard regime, 
\begin{align*}
    C_{\Bar{\gamma},a} \sim \dfrac{8(m^+ + L)}{\log(1/\rho)}.
\end{align*}
Using $\log(\log^{-1}(\rho^{-1})) \simeq LR^2$, 
\begin{align*}
    C_{\Bar{\gamma},a} = \exp(\tilde{O}(LR^2)).
\end{align*}

Now with 
\begin{align*}
    \mathbf{\Psi}(\gamma,\ell,t) &= 2 \mathbf{\Phi}\left(-\dfrac{t}{2 \Xi^{1/2}_{\ell \lceil 1/\gamma \rceil}(\kappa)}\right)\\
    \Xi_n(\kappa) &= \gamma \sum_{k=1}^n{(1+\gamma \kappa(\gamma))^{-k}}\\
    \kappa(\gamma) &= 2L + L^2 \gamma
\end{align*}
where $\mathbf{\Phi}$ is the CDF of $\mathcal{N}(0,1)$, applying Corollary 14 and Proposition 9 in \cite{de2019convergence}, we can go from $W_{\mathbf{c}}$ to $W_1$:
\begin{align*}
    \dfrac{1}{R} W_1(\delta_{x-x^*} \hat{Q}^k, \delta_{y-x^*} \hat{Q}^k) \leq \left[-\mathbf{a}(D_{\Bar{\gamma},1,a}+D_{\Bar{\gamma},2,a} + C_{\Bar{\gamma},a})/\rho^{1/4} + \dfrac{D_{\Bar{\gamma},1,a}}{R} \exp[\kappa(1+\Bar{\gamma})]/\rho^{(1+\Bar{\gamma})/4} \right] \rho^{k\gamma/4} \Vert x-y\Vert
\end{align*}
where $\mathbf{a} = \inf_{\gamma \in (0,\Bar{\gamma}]} \mathbf{\Psi}'(\gamma,1,0) \geq -(\pi \inf_{\gamma \in (0,\Bar{\gamma}]} \Xi_{\lceil 1/\gamma \rceil}(\kappa))^{-1/2}$. Moreover,
\begin{align*}
     \Xi_{\lceil 1/\gamma \rceil}(\kappa) &= \gamma \sum_{k=1}^{\lceil 1/\gamma \rceil}{(1+\gamma \kappa(\gamma))^{-k}}\\
     &= \dfrac{1 - (L\gamma + 1)^{-2\lceil 1/\gamma \rceil}}{L(2+L\gamma)}\\
     &\geq \dfrac{1 - (L\gamma + 1)^{-2\lceil 1/\gamma \rceil}}{L(2+L\Bar{\gamma})}\\
     &\geq \dfrac{1 - \exp\left(-\frac{2}{\gamma}\log(1+\gamma L) \right)}{L(2+L\Bar{\gamma})}\\
     &\geq \dfrac{1 - \exp\left(-\frac{2L}{1+\gamma L} \right)}{L(2+L\Bar{\gamma})}\\
     &\geq \dfrac{1 - \exp\left(-\frac{2L}{1+\Bar{\gamma} L} \right)}{L(2+L\Bar{\gamma})}.
\end{align*}
Therefore
\begin{align*}
    -\mathbf{a} \leq \pi^{-1/2} \left(\dfrac{m^+ + 2L}{1 - \exp\left(-\frac{2L^2}{m^+ + L} \right)}\right)^{1/2} = O(L^{1/2}).
\end{align*}
And we get
\begin{align*}
    W_1(\delta_x Q^k, \delta_y Q^k)=W_1(\delta_{x-x^*} \hat{Q}^k, \delta_{y-x^*} \hat{Q}^k) \leq e^{\tilde{O}(LR^2)} \rho^{k \gamma /4} \Vert x-y\Vert.
\end{align*}

And we obtain the constant $D(\lambda)$ in \eqref{zoro} as $D(\lambda) = e^{\tilde{O}(LR^2)}$.

Next, $C_2(\lambda)$ in \eqref{c2lambda} is given as (see \citet[Appendix F.5]{renaud2025stability})
\begin{align}
\label{c2lambdajj}
    C_2(\lambda)=\dfrac{2}{1-\rho^{1-\Bar{\gamma}}} \hat{D} (1+2 M_2)(2M_4 + 2M_4^2)^{1/2}
\end{align}
where $\rho$ is just $\rho_{\lambda}$ up to a constant power, $\hat{D} \propto \hat{E} \sqrt{M_2}$, $\hat{E} = E_{\Bar{\gamma},1}^{1/2}$ given in Corollary 2 in \cite{de2019convergence}, and $M_2,M_4$ can be retrieved from Lemma 16 and Lemma 17 in \cite{laumont2022bayesian}. Since $\Vert \mu - \nu \Vert_{TV} \leq W_{\mathbf{c}}(\mu,\nu)$, $E_{\Bar{\gamma},1}$ is given by
\begin{align*}
    E_{\Bar{\gamma},1} =  D_{\Bar{\gamma},1,a} +  D_{\Bar{\gamma},2,a} + C_{\Bar{\gamma},a} =  e^{\tilde{O}(LR^2)}.
\end{align*}

We now give estimates for $M_{2\varpi}$ for $\varpi \in \mathbb{N}$. Recall that $M_{2\varpi}$ comes from the inequality: $Q^k V_{2\varpi}(x) \leq M_{2\varpi} V_{2\varpi}(x)$ for all $k$ and $x$. Here $V_{2\varpi}(x)=1+\Vert x\Vert^{2\varpi}$. To obtain the form of $M_{2\varpi}$, we first need Lemma 16 in \cite{laumont2022bayesian}, which is stated as in the following lemma.

\begin{lemma}
\label{lem16laumont}
    Let $b$ be a drift that is $L$-Lipschitz, $(m^+,R)$-distant dissipative, and $b(0)=0$. Let $R$ be the Markov transition kernel of $X_{k+1} = X_k - \gamma b(X_k) + \sqrt{2\gamma}Z_{k+1}$. 
    Let $\bar{\gamma}=m^+/L^2$, then: for any $\varpi \in \mathbb{N}^*$, $\gamma \in (0,\bar{\gamma}]$, $b$ satisfies the Foster–Lyapunov drift condition:
    \begin{align*}
        R V_{2\bar{\varpi}}(x) \leq \check{\eta}^{\gamma/2} V_{2\varpi}(x) + \check{C} \gamma,
    \end{align*}
    where $\check{\eta} = \exp(-m^+\varpi/2)$ and $\check{C}=O\left( \frac{L^{8 \varpi^2} R^{4 \varpi^2}}{(m^+)^{4\varpi^2+2\varpi-1}}\right)$.
\end{lemma}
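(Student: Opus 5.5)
The plan is to reduce everything to a one-step moment recursion for $\Vert x\Vert^{2\varpi}$. First I convert the distant dissipativity of $b$ together with $b(0)=0$ into a global one-sided bound. For $\Vert x\Vert\geq R$, taking $y=0$ gives $\langle b(x),x\rangle\geq m^+\Vert x\Vert^2$. For $\Vert x\Vert<R$, Lipschitzness gives $\langle b(x),x\rangle\geq -L\Vert x\Vert^2\geq -LR^2$. Hence
\begin{align*}
\langle b(x),x\rangle\geq m^+\Vert x\Vert^2-(m^++L)R^2,\quad \forall x\in\mathbb{R}^d .
\end{align*}
Combined with $\Vert b(x)\Vert\leq L\Vert x\Vert$ and $\gamma\leq\bar\gamma=m^+/L^2$, the deterministic part of the step satisfies
\begin{align*}
\Vert x-\gamma b(x)\Vert^2\leq (1-2\gamma m^++\gamma^2L^2)\Vert x\Vert^2+2\gamma(m^++L)R^2\leq (1-\gamma m^+)\Vert x\Vert^2+2\gamma(m^++L)R^2 .
\end{align*}

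Next I add the noise. Set $y=x-\gamma b(x)$ and write
\begin{align*}
\Vert y+\sqrt{2\gamma}Z\Vert^2=\Vert y\Vert^2+2\sqrt{2\gamma}\langle y,Z\rangle+2\gamma\Vert Z\Vert^2 .
\end{align*}
I raise this to the power $\varpi$ and expand multinomially. The term $\Vert y\Vert^{2\varpi}$ is the leading one. Every term containing an odd power of $\langle y,Z\rangle$ has zero expectation, by symmetry of $Z$. Every remaining term carries at least one factor $\gamma$ and a power $\Vert y\Vert^{2\varpi-2j}$ with $j\geq1$, multiplied by a Gaussian moment of order $O(d^{j})$. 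I then substitute $\Vert y\Vert^2\leq(1-\gamma m^+)\Vert x\Vert^2+c_0\gamma$, where $c_0=2(m^++L)R^2$. Expanding $((1-\gamma m^+)s+c_0\gamma)^{\varpi}$ in the same way gives $(1-\gamma m^+)^{\varpi}s^{\varpi}$ plus terms of the form $\gamma\cdot\mathrm{poly}(c_0,d)\, s^{\varpi-j}$, again with $j\geq1$.

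All the lower-order terms $\gamma\,K_j\Vert x\Vert^{2\varpi-2j}$ are then absorbed by Young's inequality:
\begin{align*}
\gamma K_j\Vert x\Vert^{2\varpi-2j}\leq \gamma\,\tfrac{m^+\varpi}{4\varpi}\Vert x\Vert^{2\varpi}+\gamma\, C_j,\qquad C_j=O\!\left(K_j^{\varpi/j}(m^+)^{-(\varpi-j)/j}\right).
\end{align*}
Together with $(1-\gamma m^+)^{\varpi}\leq 1-\gamma m^+\varpi+O(\gamma^2)$, and using the smallness of $\gamma$ to control the quadratic correction, this yields
\begin{align*}
RV_{2\varpi}(x)\leq \Big(1-\tfrac{\gamma m^+\varpi}{2}\Big)\Vert x\Vert^{2\varpi}+1+\check C\gamma .
\end{align*}
Finally, $1-\tfrac{\gamma m^+\varpi}{2}\leq e^{-\gamma m^+\varpi/2}\leq\check\eta^{\gamma/2}$, and the constant $1$ is handled as $1\leq\check\eta^{\gamma/2}+(1-\check\eta^{\gamma/2})\leq\check\eta^{\gamma/2}+\gamma m^+\varpi/4$, with the surplus folded into $\check C$.

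I expect the main obstacle to be the bookkeeping needed to obtain the stated polynomial form of $\check C$. One must check that the worst Young's-inequality term comes from $K_j$ scaling like $(LR^2)^{j}$ up to constants (note $c_0\lesssim LR^2$ since $m^+\leq L$), with $j$ up to $\varpi$, and that each absorption costs a factor $(m^+)^{-1}$ per power. Iterating the two expansions produces exponents of order $\varpi^2$, which gives the bound $\check C=O\big(L^{8\varpi^2}R^{4\varpi^2}(m^+)^{-(4\varpi^2+2\varpi-1)}\big)$. I would track only the dominant monomials and not aim for sharp exponents, because any polynomial bound of this type suffices for the later use.
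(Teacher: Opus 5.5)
Your proof is correct in substance, but it follows a different route from the paper's.

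\textbf{Comparison with the paper.} The paper, restating Laumont et al., bounds the norm rather than its square. A case split at $\Vert x\Vert^2=4\hat d/m^+$ gives $\Vert x-\gamma b(x)\Vert\leq\hat\eta^{\gamma}\Vert x\Vert+\gamma\hat c$ with $\hat c=O(L^2R/m^+)$. This is raised to powers, and all lower-order contributions (deterministic and Gaussian) are lumped into one term $\gamma\check c(1+\Vert x\Vert^{2\varpi-1})$ with $\check c=O(\hat c^{2\varpi})$. That single term is then absorbed via $as^{p-1}-bs^{p}\leq a^{p}b^{1-p}\varphi(p)$ into the gap between $\check\eta^{\gamma}$ and $\check\eta^{\gamma/2}$.

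Your route differs at each stage. Your squared-norm estimate needs no case split. The symmetry of $Z$ leaves only even powers $\Vert x\Vert^{2\varpi-2j}$, and you absorb each with its own Young exponent $\varpi/j$.

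The paper's version keeps constants in the form used by Laumont et al.'s Lemma~17. Yours is more self-contained and actually sharper. Since $\gamma^{j}\leq\gamma\bar\gamma^{j-1}$ and $\bar\gamma\leq 1/m^+$, you get $K_j\lesssim\bar\gamma^{j-1}(LR^2+d)^{j}$. Hence every $C_j=O\big((LR^2+d)^{\varpi}(m^+)^{-(\varpi-1)}\big)$, with exponents linear in $\varpi$. Your closing remark that your bookkeeping produces exponents of order $\varpi^2$ is therefore inaccurate for your own argument. In the paper those exponents come from composing two $2\varpi$-th powers. The stated $O(\cdot)$ still follows a fortiori in the paper's regime of interest (large $L,R$, small $m^+$).

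\textbf{A step that must be fixed.} The lemma allows every $\gamma\leq\bar\gamma$, which only gives $\gamma m^+\leq(m^+/L)^2\leq 1$. Equality is possible, e.g. $b(x)=Lx$ with $m^+=L$ and $\gamma=1/L$. For $\varpi\geq 2$ your intermediate coefficient $1-\gamma m^+\varpi/2$ is then $\leq 0$. It cannot dominate $(1-\gamma m^+)^{\varpi}$ plus a positive absorbed mass; in that example your intermediate inequality is false for large $\Vert x\Vert$. So ``using the smallness of $\gamma$'' is an extra hypothesis as written.

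The fix is to aim at the actual target $\check\eta^{\gamma/2}=e^{-\gamma m^+\varpi/4}$ directly. Set $u=\gamma m^+\varpi\in(0,\varpi]$. Then $(1-\gamma m^+)^{\varpi}\leq e^{-u}$ and $\check\eta^{\gamma/2}-e^{-u}=e^{-u}(e^{3u/4}-1)\geq \tfrac34 e^{-\varpi}u$. Choose the Young weights so that the total absorbed mass is at most $\tfrac34 e^{-\varpi}\gamma m^+\varpi\Vert x\Vert^{2\varpi}$. The leading coefficient is then $\leq\check\eta^{\gamma/2}$ for every $\gamma\in(0,\bar\gamma]$, at the cost of only $\varpi$-dependent factors in $\check C$. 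Your treatment of the constant $1$ then goes through unchanged.
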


\begin{proof}
    We restate the proof of \cite{laumont2022bayesian} to track the dependence of $\check{\eta}$ and $\check{C}$ on the drift's parameters.

By separating two cases, $\Vert x\Vert \leq R$ and $\Vert x\Vert>R$, we can show that:
\begin{align*}
    \langle b(x),x\rangle \geq m^+\Vert x \Vert^2 -(m^++L)R^2=m^+\Vert x\Vert^2 - \hat{d}, \quad \forall x\in \mathbb{R}^d.
\end{align*}
Let $T_{\gamma}(x)=x-\gamma b(x)$. Consider $\Vert x\Vert^2 \geq 4\hat{d}/m^+$,
\begin{align*}
    \Vert T_{\gamma}(x) \Vert &= \left(\Vert x\Vert^2 - 2\gamma \langle b(x),x\rangle + \gamma^2 \Vert b(x)\Vert^2 \right)^{1/2}\\
    &\leq \left[\Vert x\Vert^2 -2\gamma \left(m^+\Vert x\Vert^2 - \hat{d} \right) + \gamma^2 L^2 \Vert x\Vert^2\right]^{1/2}\\
    &\leq \left[(1-\gamma m^+) \Vert x \Vert^2 + (\gamma m^+/2)\Vert x\Vert^2 \right]^{1/2}\\
    &\leq e^{-\gamma m^+/4}\Vert x\Vert.
\end{align*}

Consider the case $\Vert x \Vert^2 < 4 \hat{d}/m^+$:
\begin{align*}
    \Vert T_{\gamma}(x) \Vert &\leq (1+\gamma L) \Vert x\Vert\\
    &\leq e^{-\gamma m^+/4}\Vert x\Vert + e^{\gamma L} \Vert x\Vert - e^{-\gamma m^+/4}\Vert x\Vert\\
    &\leq e^{-\gamma m^+/4}\Vert x\Vert + e^{\gamma L}(\gamma L + \gamma m^+/4)\Vert x\Vert\\
    &\leq e^{-\gamma m^+/4}\Vert x\Vert + 2\gamma e^{m^+/L} (L + m^+/4) \sqrt{\dfrac{\hat{d}}{m^+}}
\end{align*}
Combine two cases,
    \begin{align*}
    \Vert T_{\gamma}(x) \Vert \leq \hat{\eta}^{\gamma} \Vert x\Vert + \gamma \hat{c}
\end{align*}
where $\hat{\eta} = \exp(-m^+/4)$ and $\hat{c}=O(L^2R/m^+)$.

Therefore, for $k\in \mathbb{N}$,
\begin{align*}
    \Vert T_{\gamma}(x) \Vert^k &\leq \hat{\eta}^{k\gamma} \Vert x\Vert^k + \gamma 2^k \max\{\hat{c},1\}^k \max\{\bar{\gamma},1\}^{k-1} \left(1+\Vert x\Vert^{k-1} \right)\\
    &\leq \Tilde{\eta}_k^{\gamma} \Vert x\Vert^k + \Tilde{c}_k \gamma (1+\Vert x\Vert^{k-1}),
\end{align*}
where $\Tilde{\eta}_k = \hat{\eta}^k = \exp(-m^+k/4)$ and $\Tilde{c}_k = O(\hat{c}^k)$. Now, 
\begin{align}
\label{eq:mid}
    \int_{\mathbb{R}^d}(1+\Vert y\Vert^{2\varpi})R(x,dy) \leq &1+\tilde{\eta}_{2\varpi}^{\gamma}\|x\|^{2\varpi}
+\tilde{c}_{2\varpi}\gamma\{1+\|x\|^{2\varpi-1}\}\notag
\\
&+\gamma 2^{3\varpi/2}2^{2\varpi}\max(\bar{\gamma},1)^{2\varpi}
\sup_{k\in\{1,\ldots,\varpi\}}
\{(1+\tilde{c}_k\bar{\gamma})\mathbb{E}[\|Z\|^k]\}
(1+\|x\|^{2\varpi-1})\notag\\
&\leq 1 + \check{\eta}^{\gamma}\Vert x \Vert^{2\varpi} + \gamma \check{c}(1+\Vert x\Vert^{2\varpi-1}),
\end{align}
where $\check{\eta} = \tilde{\eta}_{2\varpi} = \exp(-m^+ \varpi/2)$ and $\check{c} = O(\Tilde{c}_{2\varpi})=O(\hat{c}^{2\varpi})$. We then evaluate
\begin{align}
\label{eq:midd}
    &1 + \check{\eta}^{\gamma}\Vert x \Vert^{2\varpi} + \gamma \check{c}(1+\Vert x\Vert^{2\varpi-1})\notag\\
    &= \check{\eta}^{\gamma/2} V_{2\varpi}(x) +\gamma \check{c}(1+\Vert x\Vert^{2\varpi-1})+ (1-\check{\eta}^{\gamma/2}) + (\check{\eta}^{\gamma}-\check{\eta}^{\gamma/2}) \Vert x\Vert^{2\varpi}\notag\\
    &\leq \check{\eta}^{\gamma/2} V_{2\varpi}(x) +\gamma \check{c}(1+\Vert x\Vert^{2\varpi-1})+ \dfrac{1}{2}\gamma \log\left(\dfrac{1}{\check{\eta}}\right) - \log\left(\dfrac{1}{\check{\eta}}\right)\gamma \dfrac{\check{\eta}^{\gamma/2}}{2} \Vert x\Vert^{2\varpi}.
\end{align}
As an elementary evaluation: for $s\geq 0, p \geq 2$
\begin{align*}
    as^{p-1} - b s^p \leq \dfrac{a^p}{b^{p-1}}\left[\underbrace{\left(\dfrac{p-1}{p}\right)^{p-1} -  \left(\dfrac{p-1}{p}\right)^p}_{:=\varphi(p)}\right].
\end{align*}
We have
\begin{align}
\label{eq:middd}
    &1 + \check{\eta}^{\gamma}\Vert x \Vert^{2\varpi} + \gamma \check{c}(1+\Vert x\Vert^{2\varpi-1})\notag\\
    &\leq \check{\eta}^{\gamma/2} V_{2\varpi}(x) +\gamma \left[ \check{c}+ \dfrac{1}{2} \log\left(\dfrac{1}{\check{\eta}}\right) + \check{c}\Vert x\Vert^{2\varpi-1}- \log\left(\dfrac{1}{\check{\eta}}\right) \dfrac{\check{\eta}^{\gamma/2}}{2} \Vert x\Vert^{2\varpi} \right]\notag\\
    &\leq \check{\eta}^{\gamma/2} V_{2\varpi}(x) +\gamma \left[ \check{c}+ \dfrac{1}{2} \log\left(\dfrac{1}{\check{\eta}}\right) + \dfrac{\check{c}^{2\varpi}}{\left(\log\left(\dfrac{1}{\check{\eta}}\right) \dfrac{\check{\eta}^{\gamma/2}}{2}\right)^{2\varpi-1}} \varphi(2\varpi)\right]\\
    &= \check{\eta}^{\gamma/2} V_{2\varpi}(x) +\gamma \left[ \check{c}+ \dfrac{1}{2} \dfrac{m^+ \varpi}{2} + \dfrac{\check{c}^{2\varpi}}{\left(\dfrac{m^+ \varpi}{2} \dfrac{\check{\eta}^{\gamma/2}}{2}\right)^{2\varpi-1}} \varphi(2\varpi)\right]
\end{align}

From \eqref{eq:mid}, \eqref{eq:midd}, and \eqref{eq:middd},
\begin{align*}
    R V_{2\varpi}(x) \leq \check{\eta}^{\gamma/2} V_{2\varpi}(x) + \gamma \check{C}
\end{align*}
where $\check{C} = O(\check{c}^{2\varpi}/(m^+)^{2\varpi-1})=O(\hat{c}^{4\varpi^2}/(m^+)^{2\varpi-1})$ and $\check{\eta} = \exp(-m^+ \varpi/2).$
\end{proof}

With the two-stepsize technique as in \eqref{twosz}, we can apply Lemma \ref{lem16laumont} to $\hat{b}_{\lambda}^{\gamma}$
we can apply Lemma 17 in \cite{laumont2022bayesian}:
\begin{align*}
    \hat{Q}^k V_{2\varpi}(x) \leq \left(1 + \check{C} \left( \bar{\gamma}+\log\left(\frac{1}{\check{\eta}^{1/2}}\right)\right) \right)V_{2\varpi}(x) = O\left(\check{C}\right)V_{2\varpi}(x)
\end{align*}
for all $k,x$. Now we have
\begin{align*}
    Q^k V_{2\varpi}(x) &= 1 + \mathbb{E} \left[ \Vert X_k\Vert^{2\varpi} \vert X_0=x \right]\\
    &= 1 + \mathbb{E} \left[ \Vert \hat{X}_k + x^* \Vert^{2\varpi} \vert \hat{X}_0=x-x^*\right]\\
    &\leq 1 + 2^{2\varpi-1} \left(\mathbb{E}(\Vert \hat{X}_k\Vert^{2\varpi} \vert \hat{X}_0=x-x^*) + \Vert x^*\Vert^{2\varpi} \right)\\
    &= 1 + 2^{2\varpi-1}(\Vert x^*\Vert^{2\varpi}-1) + 2^{2\varpi-1}\hat{Q}^k V_{2\varpi}(x-x^*)\\
    &\leq 1 + 2^{2\varpi-1}(\Vert x^*\Vert^{2\varpi}-1) + 2^{2\varpi-1} O(\check{C}) V_{2\varpi}(x-x^*)\\
    &= O(\check{C} (1+\Vert x^*\Vert^{2\varpi})) V_{2\varpi}(x).
\end{align*}
Now with the uniform bound of $\Vert x^*\Vert$, we obtain
\begin{align*}
    Q^k V_{2\varpi}(x) \leq \begin{cases}
        O\left(\check{C} \left(\max\{R,\sqrt{2}/m^+\}(G_1+G_2+\Vert \nabla f(0)\Vert) \right)^{2\varpi}\right) V_{2\varpi}(x) \quad \text{if Assumption \ref{assum:G2}(i)}\\
        O\left(\check{C}\left(\max\{R,\sqrt{2}/m^+\} (G_1 + \Vert \nabla f(0)\Vert+M^{1/(1-\kappa)} + \Vert \nabla r_2(0)\Vert\right)^{2\varpi} \right) V_{2\varpi}(x) \quad \text{if Assumption \ref{assum:G2}(ii)}
    \end{cases}
\end{align*}
and $M_{2\varpi}$ are then bounded accordingly: $O\left(\check{C} \left(\max\{R,\sqrt{2}/m^+\}(G_1+G_2+\Vert \nabla f(0)\Vert) \right)^{2\varpi}\right)$ if Assumption \ref{assum:G2}(i), and $O\left(\check{C}\left(\max\{R,\sqrt{2}/m^+\} (G_1 + \Vert \nabla f(0)\Vert+M^{1/(1-\kappa)} + \Vert \nabla r_2(0)\Vert\right)^{2\varpi} \right)$ if Assumption \ref{assum:G2}(ii). We observe that these constants depend polynomially on the parameters, i.e., $\poly(\Theta)$ where $\Theta=(R,1/m^+,L_f,G_1,G_2,\Vert \nabla f(0)\Vert)$ if Assumption \ref{assum:G2}(i) and $\Theta=(R,1/m^+,L_f,G_1,M,\Vert \nabla f(0)\Vert,\Vert \nabla r_2(0)\Vert)$ if Assumption \ref{assum:G2}(ii).


Now back to the expression of $C_2(\lambda)$ in \eqref{c2lambdajj}, we have:
\begin{align*}
    &\dfrac{1}{1-\rho^{1-\bar{\gamma}}} \sim \dfrac{1}{(1-\bar{\gamma})\log(1/\rho)} = e^{\tilde{O}(LR^2)}\\
    &\hat{D} = O(\hat{E}\sqrt{M_2}) = \poly(\Theta)e^{\tilde{O}(LR^2)}
\end{align*}
implying $C_2(\lambda) = \poly(\Theta) e^{\tilde{O}(LR^2)}$. Here we do not absorb the poly term into $\exp^{\tilde{O}}$ because the poly term also has some extra parameters like $\Vert \nabla f(0)\Vert, \Vert \nabla r_2(0)\Vert$. If we keep these extra parameters fixed, we can then absorb.

Similarly, $C_3(\lambda) = \poly(\Theta) e^{\tilde{O}(LR^2)}$. As a consequence, $C_1(\lambda) = O(D(\lambda)(C_2(\lambda)+C_3(\lambda))) = \poly(\Theta) e^{\tilde{O}(LR^2)}$. Therefore, $A_{\lambda} = \poly(\Theta) e^{\tilde{O}(LR^2)}$, $B_{\lambda}=\poly(\Theta)e^{\tilde{O}(LR^2)}$ (Theorem \ref{thm:main}) and also $B'_{\lambda}=\poly(\Theta) e^{\tilde{O}(LR^2)}$ (Theorem \ref{thm:2}). Hence, the key parameters $\lambda,L_f,R_0,\mu, G_1, G_2, M$ enter the bound through $L,R$ and $\Theta$.

Finally, tracing from the proof in Section \ref{proof:plambdapi}, we obtain the dependence of $C$ in Theorem \ref{thm:2} as $C = O(G_1^2 + G_2^2)$ if Assumption \ref{assum:G2}(i) and $O(G_1^2+M^2+\Vert \nabla r_2(0)\Vert^2)$ if Assumption \ref{assum:G2}(ii).


\subsection{Analysis of DC-LA-S}
\label{proof:psglareg2}
We recall DC-LA-S 
\begin{align*}
    Y_{k+1} &= X_k -\gamma\nabla f(X_k)+\gamma \nabla r_{2}(X_k) + \sqrt{2\gamma}Z_{k+1}\\
    X_{k+1} &= \prox_{\gamma r_{1}^{\lambda}}(Y_{k+1}).
\end{align*}
Similarly to Section \ref{subsec:general}, $Y_{k+1}$ is formulated as

\begin{align}
    Y_{k+1} = Y_k - \gamma b^{\gamma}_{\lambda}(Y_k) + \sqrt{2\gamma}Z_{k+1}
\end{align}
where the drift is
\begin{align}
\label{maindrift}
b^{\gamma}_{\lambda}(y):=\nabla r_1^{\lambda}(\prox_{\gamma r_1^{\lambda}}(y)) +\nabla f(\prox_{\gamma r_{1}^{\lambda}}(y))-\nabla r_{2}(\prox_{\gamma r_{1}^{\lambda}}(y)).
\end{align}
$b_{\lambda}^{\gamma}$ is $(L_f+\frac{1}{\lambda}+L_{r_2})$-smooth. Similarly to the evaluations in \ref{proof:dissdrift}, for $\gamma \leq \gamma_0$, we have
\begin{align*}
    \langle b_{\lambda}^{\gamma}(x) - b_{\lambda}^{\gamma}(y),x-y\rangle \geq \mu \Vert x-y\Vert^2 - 2G_1(2+L_f \gamma_0 + L_{r_2}) \Vert x-y\Vert
\end{align*}
for $\Vert x-y\Vert \geq R_0$. Therefore, $b_{\lambda
}^{\gamma}$ is $(\frac{\mu}{2},R)$-distant dissipative, where
\begin{align*}
    R:=\max\left\{R_0,\dfrac{4G_1(2+L_f \gamma_0 + L_{r_2})}{\mu} \right\}.
\end{align*}

We next define another drift
\begin{align}
\label{sidedrift}
\Bar{b}^{\gamma}_{\lambda}(y) = \nabla r_1^{\lambda}(\prox_{\gamma r_1^{\lambda}}(y)) +\nabla f(y)-\nabla r_{2}(y),
\end{align}
and the corresponding general ULA:
\begin{align}
    \label{ssideprocess}
    \Bar{Y}_{k+1} = \Bar{Y}_k - \gamma \Bar{b}_{\lambda}^{\gamma}(\Bar{Y_k}) + \sqrt{2\gamma}\Bar{Z}_{k+1}.
\end{align}
{We also denote 
\begin{align*}
    \pi_{\lambda,\gamma}(x) \propto \exp(-f(x)-(r_1^{\lambda})^{\gamma}(x) + r_2(x)),
\end{align*}
}
it follows that $-\nabla \log \pi_{\lambda,\gamma} = \Bar{b}_{\lambda}^{\gamma}.$

Similarly, we can show that: for all $\gamma>0$, $\Bar{b}_{\lambda}^{\gamma}$ is $(\frac{1}{\lambda}+L_f+L_{r_2})$-smooth and is $(\frac{\mu}{2},\Bar{R})$-distant dissipative, where
\begin{align*}
    \Bar{R} = \max\left\{R_0,\dfrac{8G_1}{\mu} \right\}.
\end{align*}

Let $Q,\Bar{Q}$ be the transition kernels of $\{Y_k\}_k$ and $\{\Bar{Y}_k\}_k$, respectively. According to Lemma \ref{lem:inPq}, $Q,\Bar{Q}$ admit invariant distributions called $p_{\lambda,\gamma},\Bar{p}_{\lambda,\gamma} \in \mathcal{P}_1(\mathbb{R}^d)$. Furthermore, if $q \geq 2$, Lemma \ref{lem:inPq} guaranties $p_{\lambda,\gamma},\Bar{p}_{\lambda,\gamma} \in \mathcal{P}_{2q}(\mathbb{R}^d)$.

Similarly to the arguments in Section \ref{proofmaintheorem}, there exist $C_{1,q}(\lambda), C_{2,q}(\lambda), C_{3,q}(\lambda)>0$ and $\rho_{q,\lambda} \in (0,1)$ such that
\begin{align*}
W_q(p_{Y_{k+1}},\pi_{\lambda,\gamma}) \leq C_{1,q}(\lambda) \rho_{q,\lambda}^{k \gamma/q} + C_{2,q}(\lambda) \left(L_f + L_{r_2} \right)^{\frac{1}{q}} G_1^{\frac{1}{q}} \gamma^{\frac{1}{q}} + C_{3,q}(\lambda)\gamma^{\frac{1}{2q}}.
\end{align*}

Finally, we can apply \citep[Proposition 1]{renaud2025stability} two times to get the following
\begin{align*}
&W_q(\pi_{\lambda,\gamma},\pi_{\lambda}) = O(\gamma^{1/q})\\
&W_q(\pi_{\lambda},\pi) = O(\lambda^{1/q}).
\end{align*}

\section{Additional experimental results and details}
\label{app:exp}
\subsection{Standard proximal operators}
It is well-known that the proximal operators of $\ell_1$ and $\ell_2$ are given by
\begin{align*}
    \prox_{\gamma \ell_1}(x) &= \sign(x) \odot \max(\vert x \vert - \gamma,0),\\
    \prox_{\gamma \ell_2}(x) &= \max\left( 1-\dfrac{\gamma}{\Vert x\Vert_2},0\right) x.
\end{align*}
Recently, \citet{lou2018fast} showed that the proximal operator of $\ell_1 -\ell_2$, or more generally, $\ell_1 - \epsilon \ell_2$ with $\epsilon>0$, is given in closed form as
\[
\prox_{\gamma(\ell_1 - \epsilon \ell_2)}(x)
\ni
\begin{cases}
{0}, 
& \text{if } \|x\|_{\infty} \le (1 - \epsilon)\gamma, \\[1.2em]
\displaystyle
\left(1 + \frac{\epsilon \gamma}{\| \prox_{\gamma\ell_1}(x) \|_2}\right)
\prox_{\gamma\ell_1}(x),
& \text{if } \|x\|_{\infty} > \gamma, \\[1.2em]
\operatorname{sign}(x_{i^\star})\big(\|x\|_{\infty} + (\epsilon - 1)\gamma\big) e_{i^\star},
& \text{if } (1 - \epsilon)\gamma < \|x\|_{\infty} \le \gamma, \text{ where } i^\star = \arg\max_i |x_i|.
\end{cases}
\]
Note that, in the above formulation, whenever $\operatorname{prox}_{\gamma(\ell_{1}-\epsilon\ell_{2})}(x)$ is set\mbox{-}valued, we use an arbitrary but fixed single\mbox{-}valued selection (any element of the proximal set), which suffices for implementation.

\subsection{$\ell_1 - \ell_2$ prior}

\label{app:l1l2}
\paragraph{Histograms of samples from multiple chains} With the same setups as in Subsection \ref{subsec:his}, we do the experiments with other covariance matrices, including the identity $\begin{bsmallmatrix}
1 & 0 \\
0 & 1
\end{bsmallmatrix}$ and $\begin{bsmallmatrix}
1 & -0.8 \\
-0.8 & 2
\end{bsmallmatrix}$. Figures \ref{fig:allSigma0} and \ref{fig:allSigma-0.8} show the histograms of DC-LA, PSGLA, Moreau ULA and ULA. Figures \ref{fig:allSigma0bin} and \ref{fig:allSigma-0.8bin} show the binned KL divergence between the histograms of samples generated by each sampling algorithm and the binned target distribution. In these cases, DC-LA achieves the lowest binned KL divergence.

\begin{figure}
    \centering

    \begin{subfigure}[b]{0.8\textwidth}
        \centering
        \includegraphics[width=\textwidth]{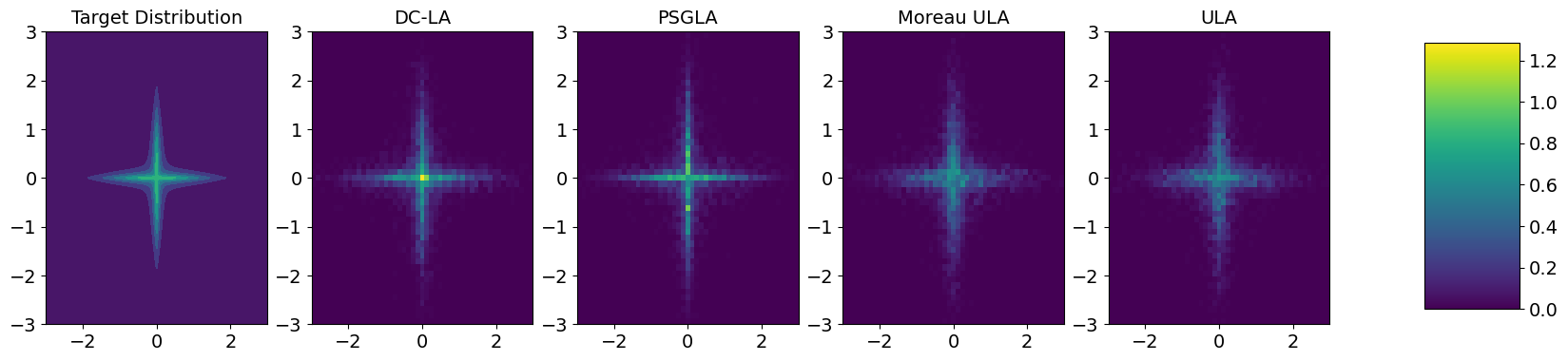}
        \caption{$\mu = [0,0]^{\top}$}
        \label{fig:1}
    \end{subfigure}
    \hfill
    \begin{subfigure}[b]{0.8\textwidth}
        \centering
        \includegraphics[width=\textwidth]{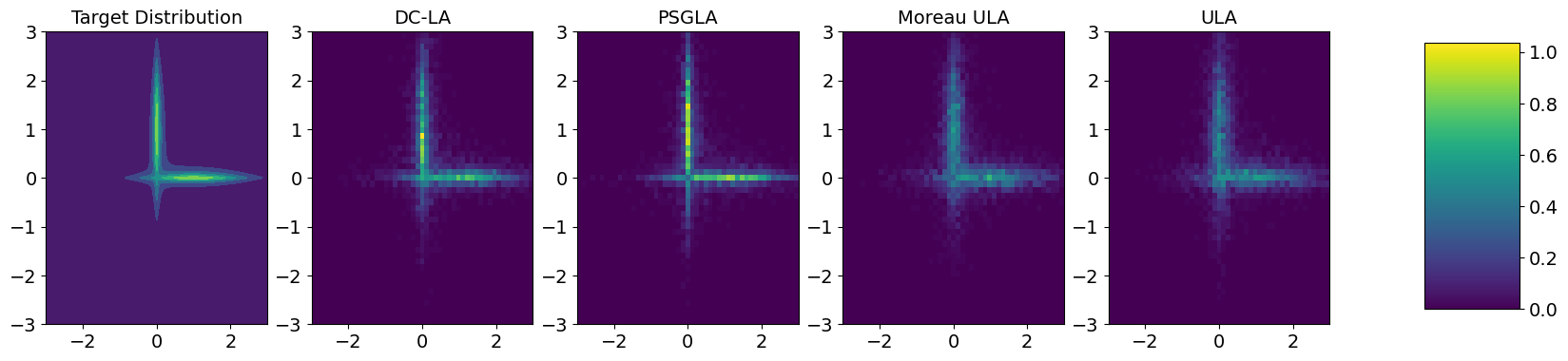}
        \caption{$\mu = [1,1]^{\top}$}
        \label{fig:2}
    \end{subfigure}

    \vspace{0.5em} 

    \begin{subfigure}[b]{0.8\textwidth}
        \centering
        \includegraphics[width=\textwidth]{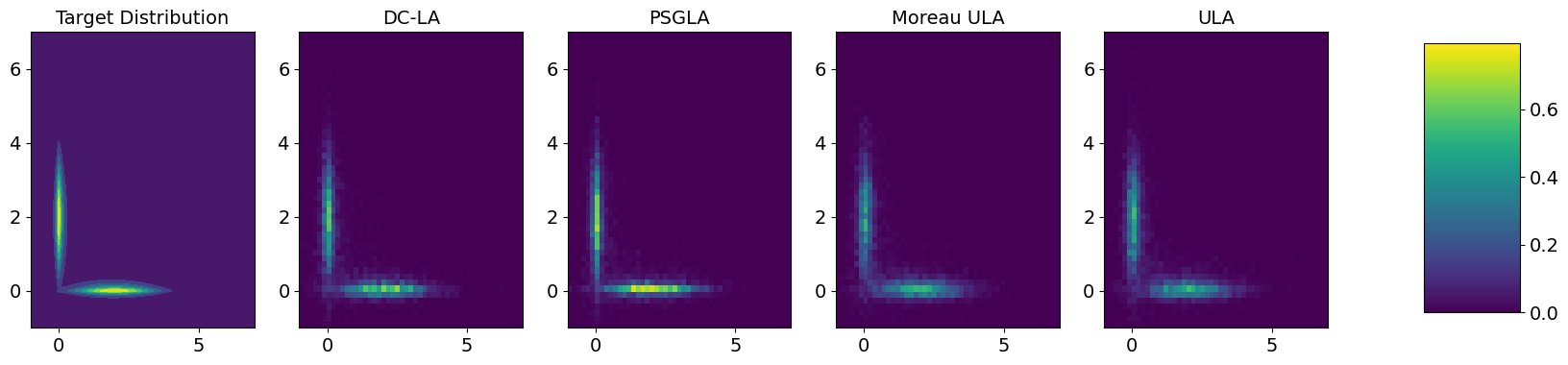}
        \caption{$\mu=[2,2]^{\top}$}
        \label{fig:3}
    \end{subfigure}
    \hfill
    \begin{subfigure}[b]{0.8\textwidth}
        \centering
        \includegraphics[width=\textwidth]{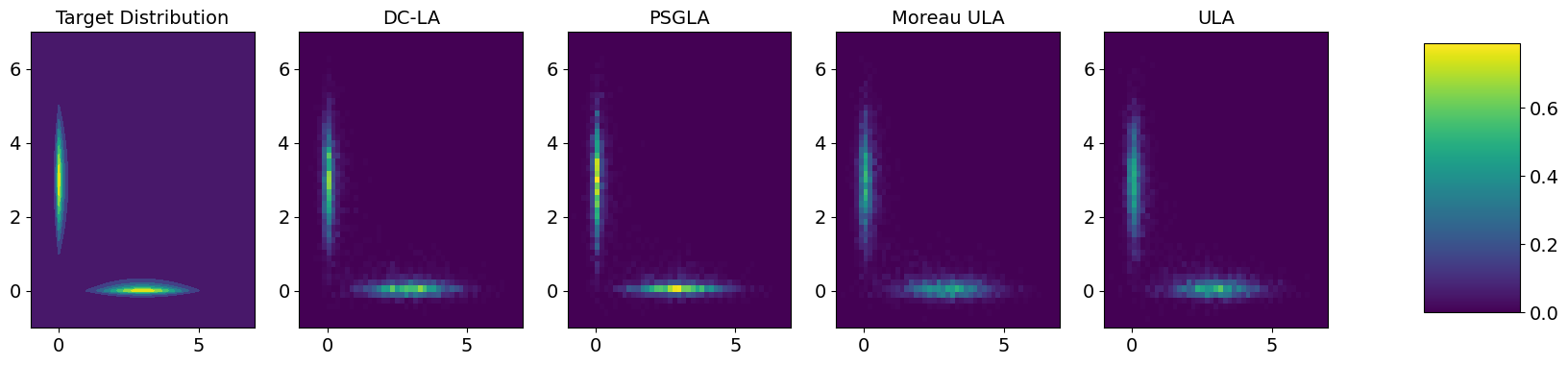}
        \caption{$\mu=[3,3]^{\top}$}
        \label{fig:4}
    \end{subfigure}

    \caption{$\Sigma=\begin{bmatrix}
1 & 0 \\
0 & 1
\end{bmatrix}$, multiple chains}
    \label{fig:allSigma0}
\end{figure}

\begin{figure}
    \centering

    \begin{subfigure}[b]{0.8\textwidth}
        \centering
        \includegraphics[width=\textwidth]{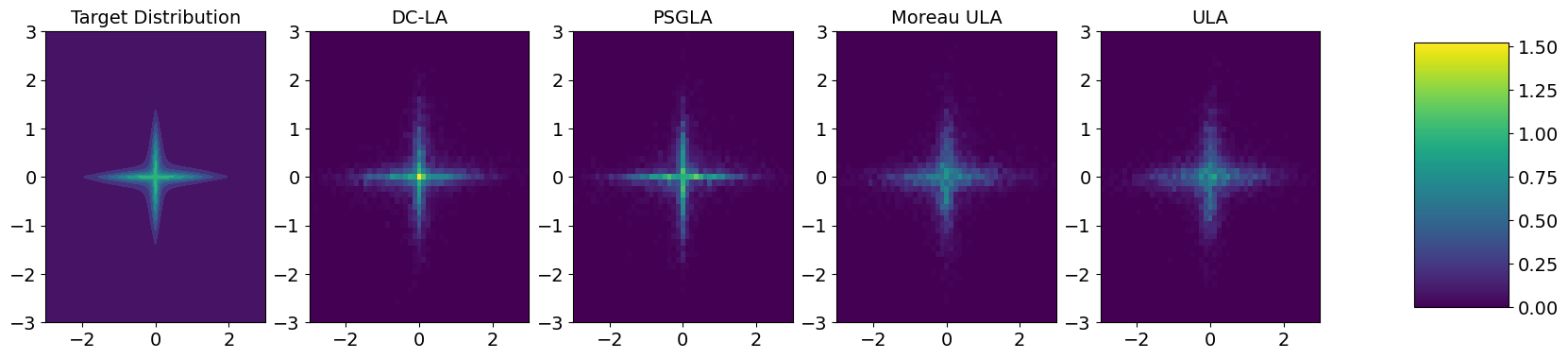}
        \caption{$\mu = [0,0]^{\top}$}
        \label{fig:1}
    \end{subfigure}
    \hfill
    \begin{subfigure}[b]{0.8\textwidth}
        \centering
        \includegraphics[width=\textwidth]{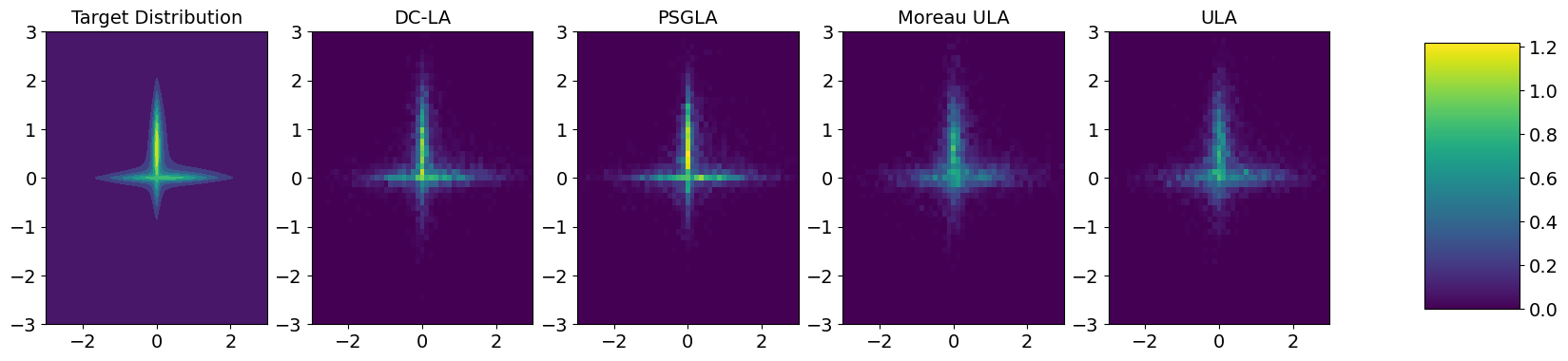}
        \caption{$\mu = [1,1]^{\top}$}
        \label{fig:2}
    \end{subfigure}

    \vspace{0.5em} 

    \begin{subfigure}[b]{0.8\textwidth}
        \centering
        \includegraphics[width=\textwidth]{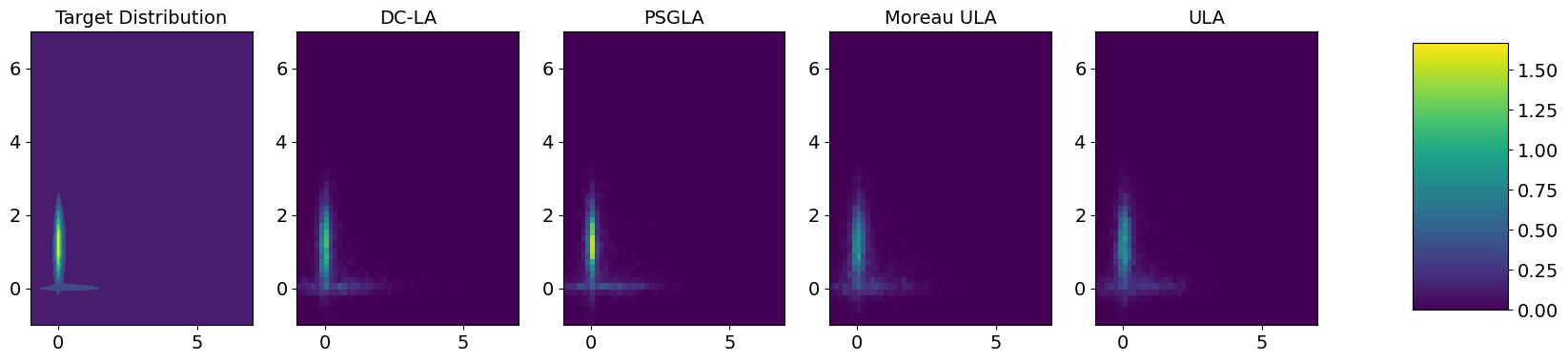}
        \caption{$\mu=[2,2]^{\top}$}
        \label{fig:3}
    \end{subfigure}
    \hfill
    \begin{subfigure}[b]{0.8\textwidth}
        \centering
        \includegraphics[width=\textwidth]{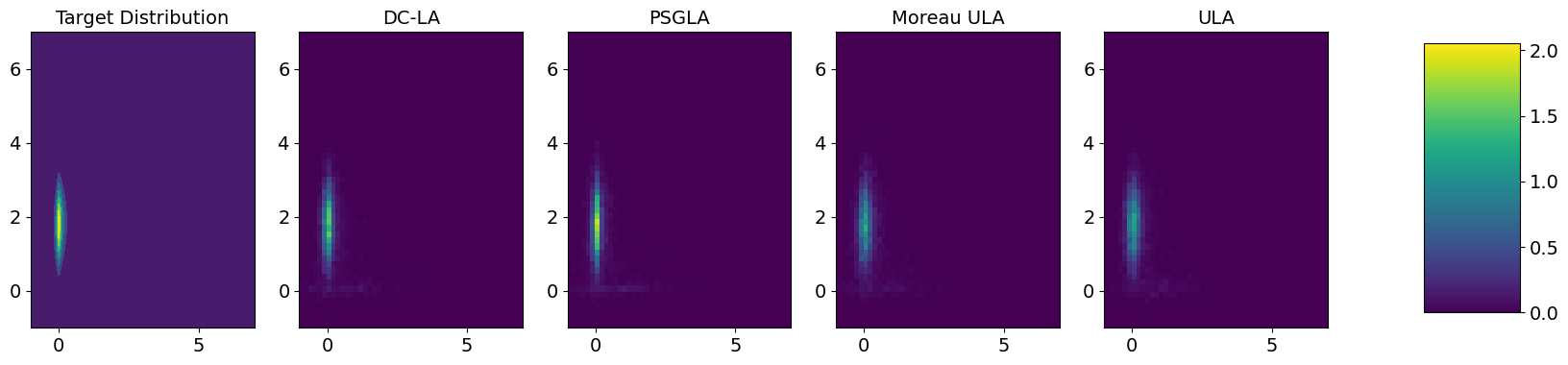}
        \caption{$\mu=[3,3]^{\top}$}
        \label{fig:4}
    \end{subfigure}

    \caption{$\Sigma=\begin{bmatrix}
1 & -0.8 \\
-0.8 & 2
\end{bmatrix}$, multiple chains}
    \label{fig:allSigma-0.8}
\end{figure}

\begin{figure}
    \centering

    \begin{subfigure}[b]{0.3\textwidth}
        \centering
        \includegraphics[width=\textwidth]{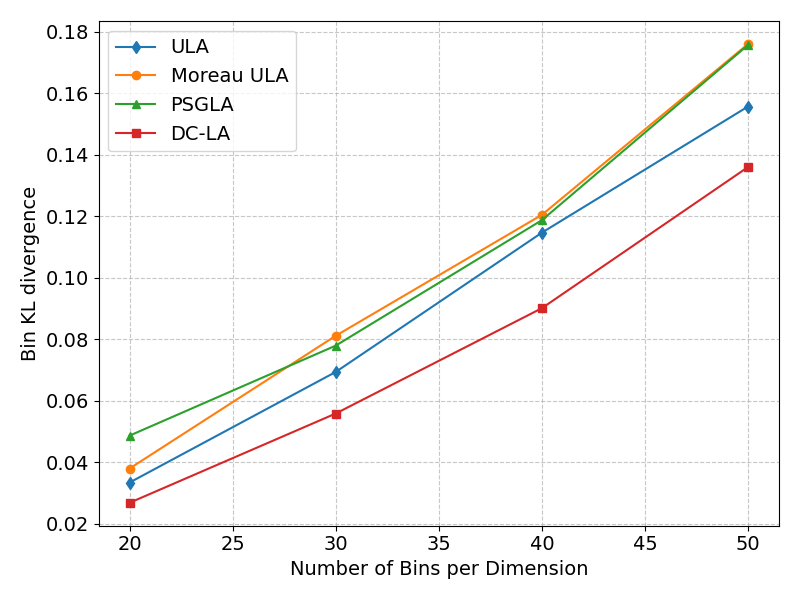}
        \caption{$\mu = [0,0]^{\top}$}
        \label{fig:1}
    \end{subfigure}
    \hspace{0.02\textwidth}
    \begin{subfigure}[b]{0.3\textwidth}
        \centering
        \includegraphics[width=\textwidth]{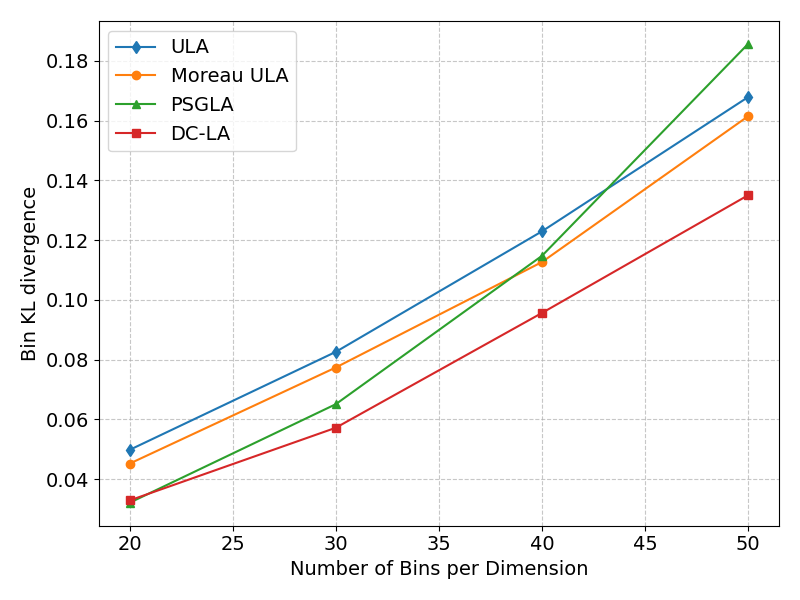}
        \caption{$\mu = [1,1]^{\top}$}
        \label{fig:2}
    \end{subfigure}

    \vspace{0.5em} 

    \begin{subfigure}[b]{0.3\textwidth}
        \centering
        \includegraphics[width=\textwidth]{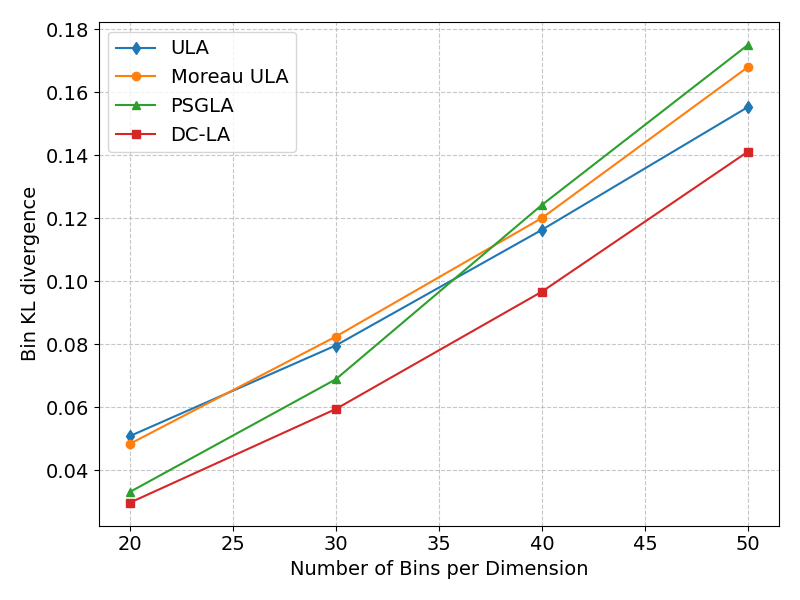}
        \caption{$\mu=[2,2]^{\top}$}
        \label{fig:3}
    \end{subfigure}
    \hspace{0.02\textwidth}
    \begin{subfigure}[b]{0.3\textwidth}
        \centering
        \includegraphics[width=\textwidth]{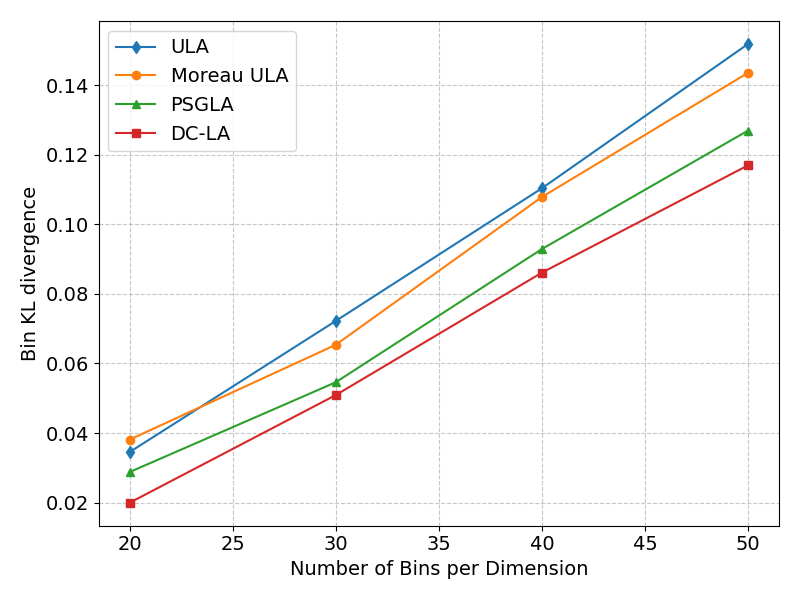}
        \caption{$\mu=[3,3]^{\top}$}
        \label{fig:4}
    \end{subfigure}

    \caption{$\Sigma=\begin{bmatrix}
1 & 0 \\
0 & 1
\end{bmatrix}$, multiple chains}
    \label{fig:allSigma0bin}
\end{figure}

\begin{figure}
    \centering

    \begin{subfigure}[b]{0.3\textwidth}
        \centering
        \includegraphics[width=\textwidth]{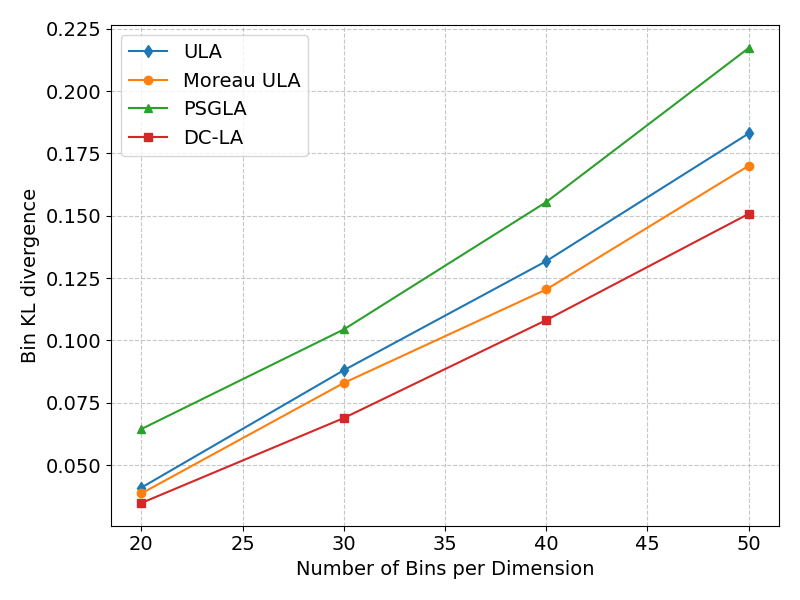}
        \caption{$\mu = [0,0]^{\top}$}
        \label{fig:1}
    \end{subfigure}
    \hspace{0.02\textwidth}
    \begin{subfigure}[b]{0.3\textwidth}
        \centering
        \includegraphics[width=\textwidth]{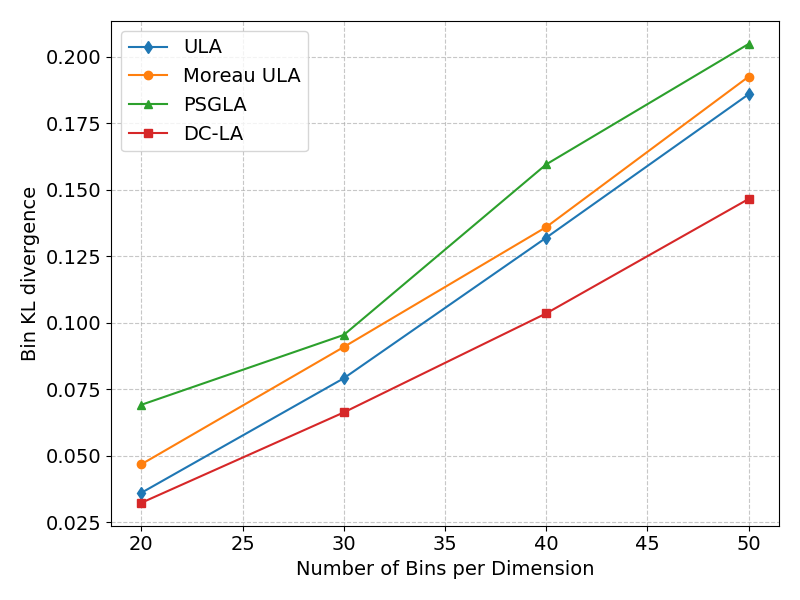}
        \caption{$\mu = [1,1]^{\top}$}
        \label{fig:2}
    \end{subfigure}

    \vspace{0.5em} 

    \begin{subfigure}[b]{0.3\textwidth}
        \centering
        \includegraphics[width=\textwidth]{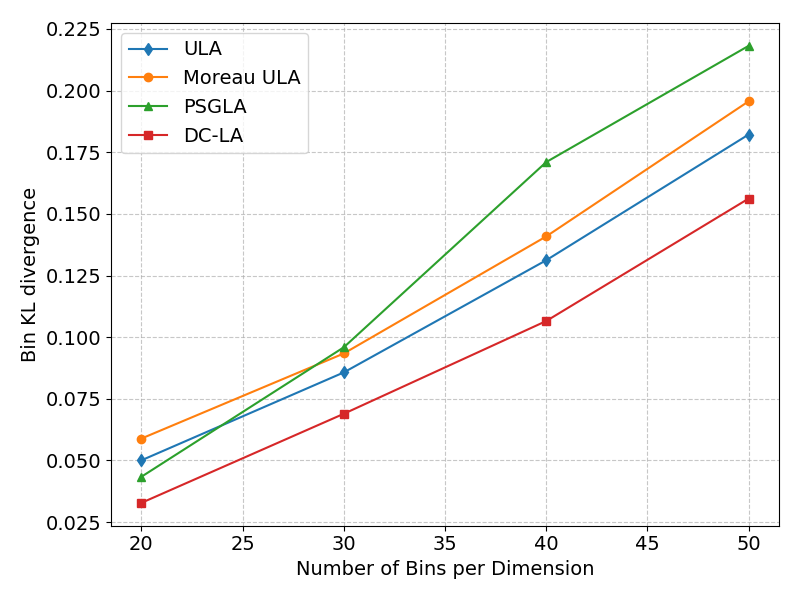}
        \caption{$\mu=[2,2]^{\top}$}
        \label{fig:3}
    \end{subfigure}
    \hspace{0.02\textwidth}
    \begin{subfigure}[b]{0.3\textwidth}
        \centering
        \includegraphics[width=\textwidth]{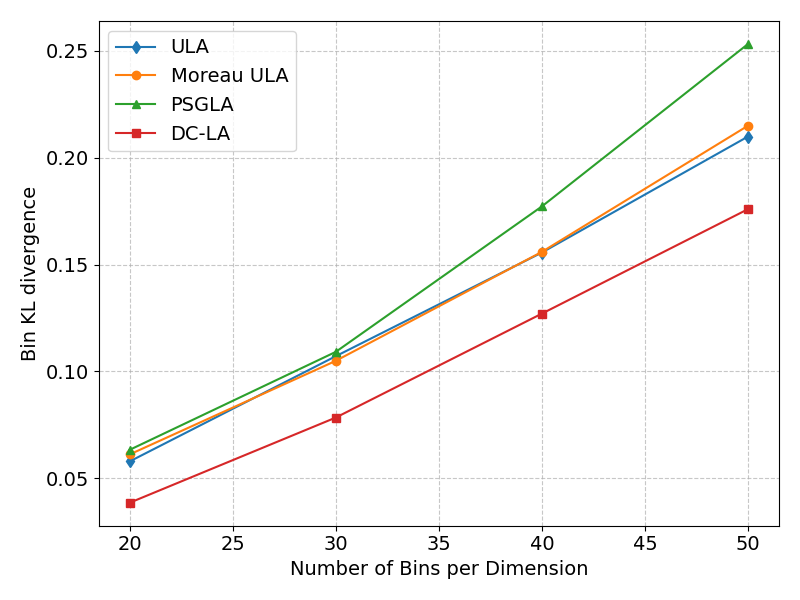}
        \caption{$\mu=[3,3]^{\top}$}
        \label{fig:4}
    \end{subfigure}

    \caption{$\Sigma=\begin{bmatrix}
1 & -0.8 \\
-0.8 & 2
\end{bmatrix}$, multiple chains}
    \label{fig:allSigma-0.8bin}
\end{figure}
\paragraph{Histograms of samples from a single chain} We also report histograms based on samples obtained from a single Markov chain of length $10{,}000$, discarding the first $500$ samples as burn-in. Although our theoretical results apply to multiple chains using the final sample from each, it is common practice to run a single long chain and obtain samples directly from it. Figures \ref{fig:allSigma0single} and \ref{fig:allSigma-0.8single} show the histograms of single chains of four samplers. As expected, the fidelity is lower than when running multiple chains. Still, DC-LA achieves a good compromise between ULA/Moreau ULA and PSGLA, although all algorithms struggle to efficiently explore every mode.

\begin{figure}
    \centering

    \begin{subfigure}[b]{0.8\textwidth}
        \centering
        \includegraphics[width=\textwidth]{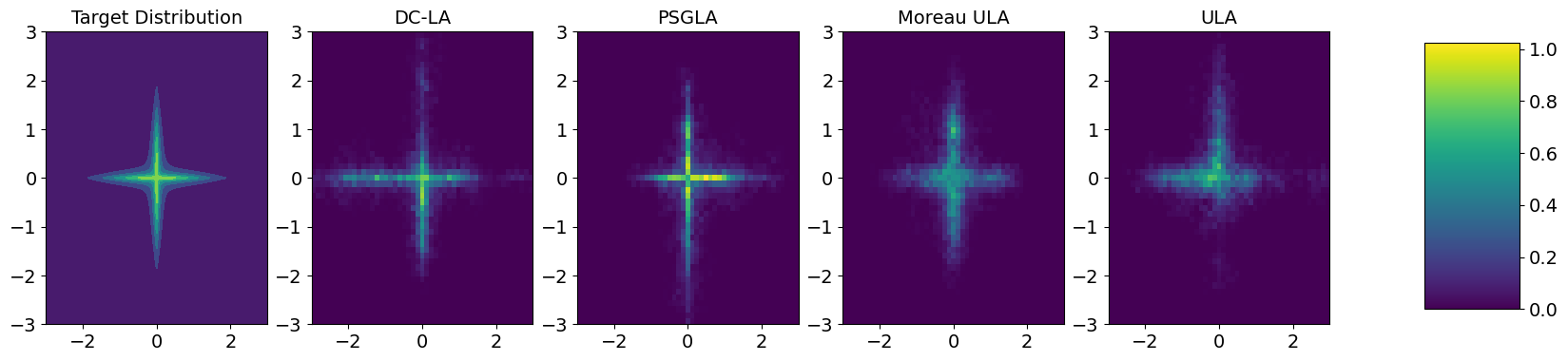}
        \caption{$\mu = [0,0]^{\top}$}
        \label{fig:1}
    \end{subfigure}
    \hfill
    \begin{subfigure}[b]{0.8\textwidth}
        \centering
        \includegraphics[width=\textwidth]{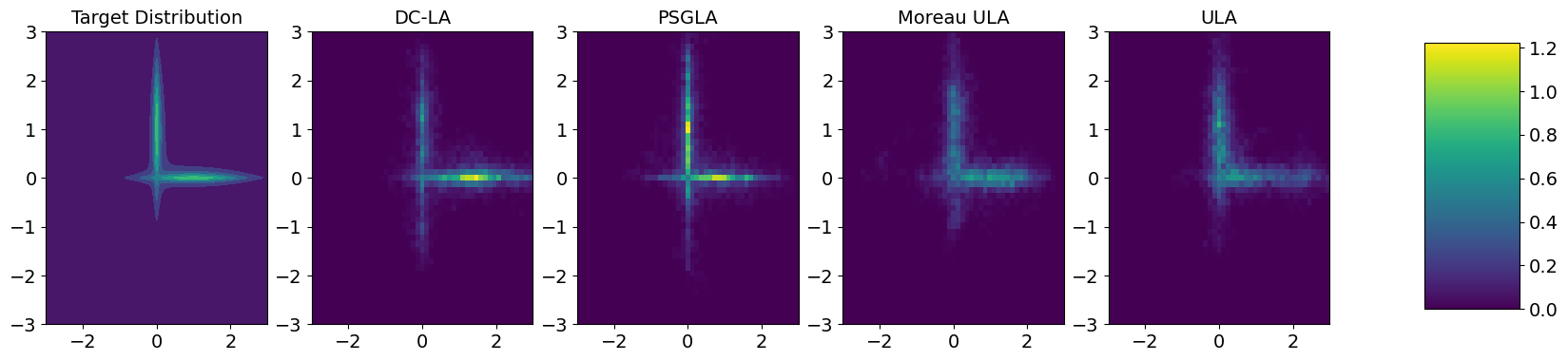}
        \caption{$\mu = [1,1]^{\top}$}
        \label{fig:2}
    \end{subfigure}

    \vspace{0.5em} 

    \begin{subfigure}[b]{0.8\textwidth}
        \centering
        \includegraphics[width=\textwidth]{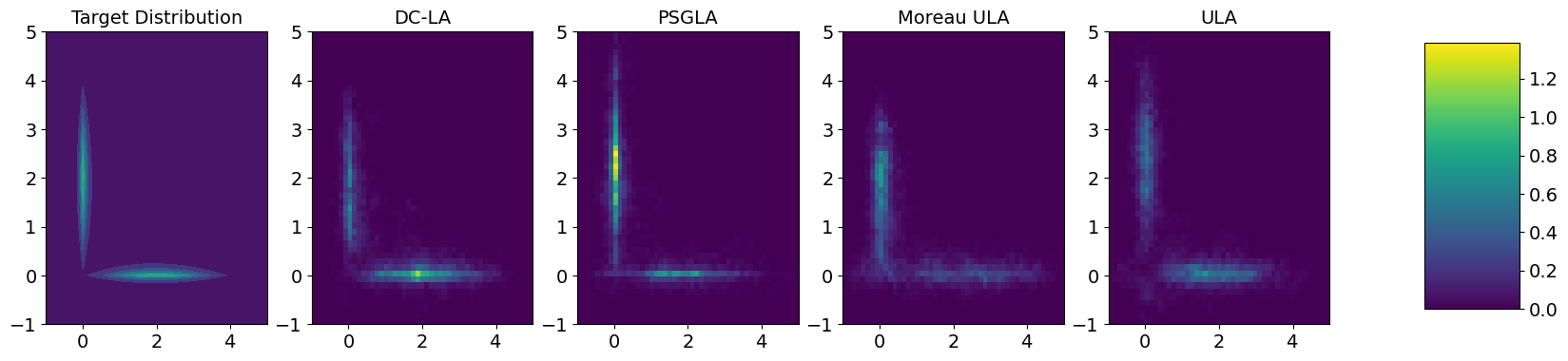}
        \caption{$\mu=[2,2]^{\top}$}
        \label{fig:3}
    \end{subfigure}
    \hfill
    \begin{subfigure}[b]{0.8\textwidth}
        \centering
        \includegraphics[width=\textwidth]{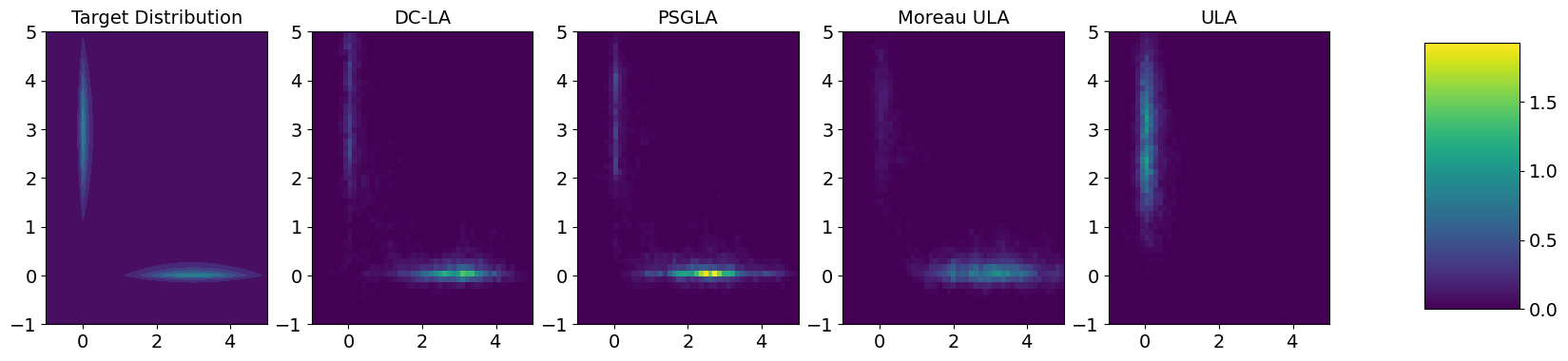}
        \caption{$\mu=[3,3]^{\top}$}
        \label{fig:4}
    \end{subfigure}

    \caption{$\Sigma=\begin{bmatrix}
1 & 0 \\
0 & 1
\end{bmatrix}$, single chain}
    \label{fig:allSigma0single}
\end{figure}

\begin{figure}
    \centering

    \begin{subfigure}[b]{0.8\textwidth}
        \centering
        \includegraphics[width=\textwidth]{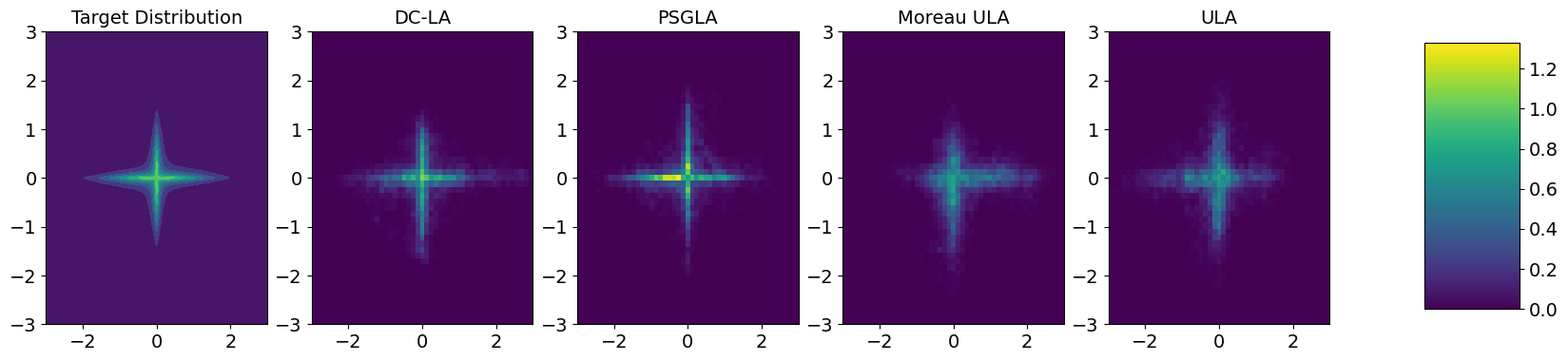}
        \caption{$\mu = [0,0]^{\top}$}
        \label{fig:1}
    \end{subfigure}
    \hfill
    \begin{subfigure}[b]{0.8\textwidth}
        \centering
        \includegraphics[width=\textwidth]{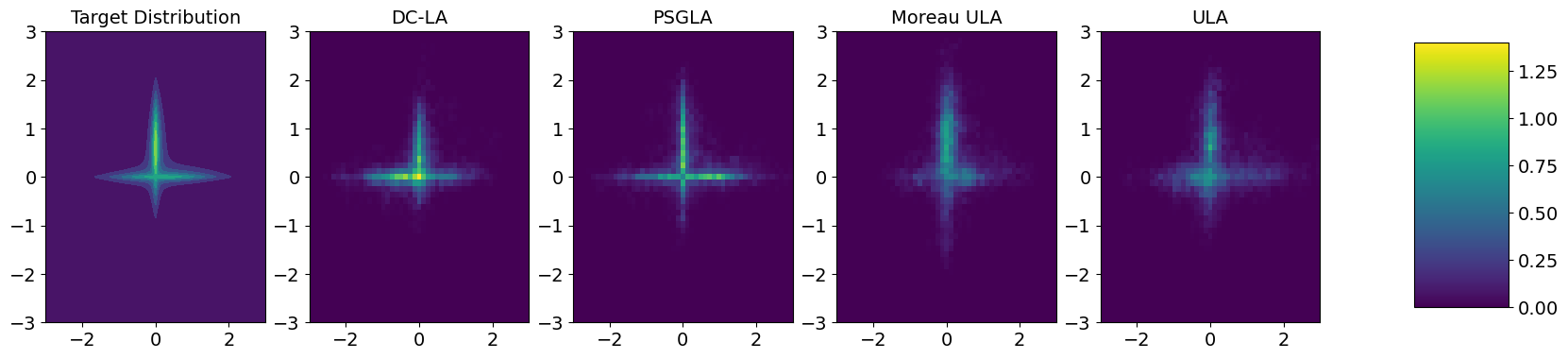}
        \caption{$\mu = [1,1]^{\top}$}
        \label{fig:2}
    \end{subfigure}

    \vspace{0.5em} 

    \begin{subfigure}[b]{0.8\textwidth}
        \centering
        \includegraphics[width=\textwidth]{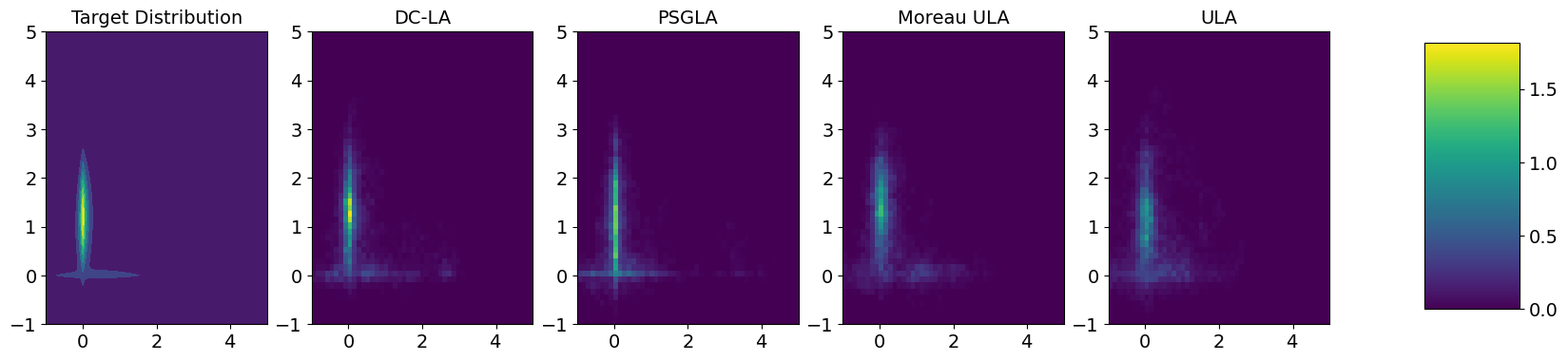}
        \caption{$\mu=[2,2]^{\top}$}
        \label{fig:3}
    \end{subfigure}
    \hfill
    \begin{subfigure}[b]{0.8\textwidth}
        \centering
        \includegraphics[width=\textwidth]{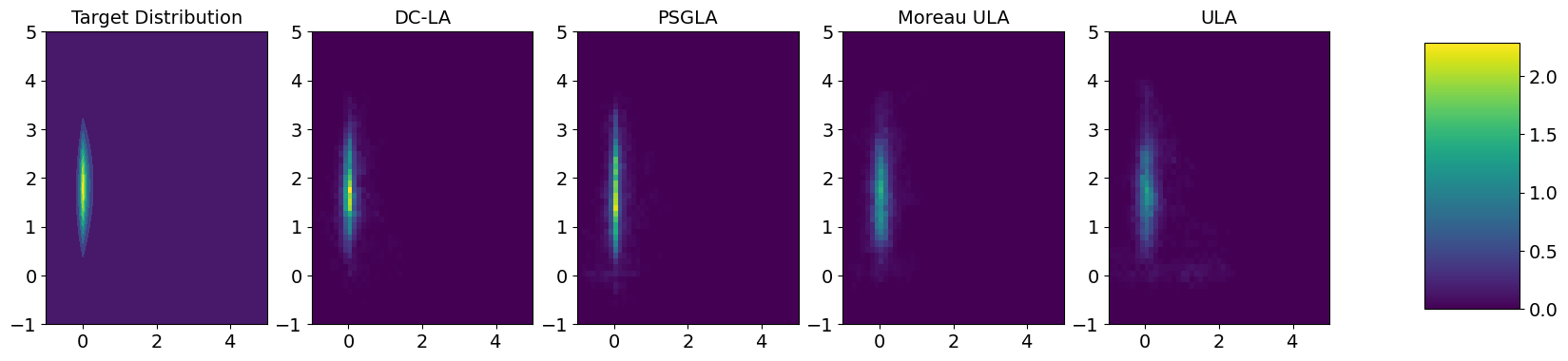}
        \caption{$\mu=[3,3]^{\top}$}
        \label{fig:4}
    \end{subfigure}

    \caption{$\Sigma=\begin{bmatrix}
1 & -0.8 \\
-0.8 & 2
\end{bmatrix}$, single chain}
    \label{fig:allSigma-0.8single}
\end{figure}

\paragraph{Ablation experiment on $(\lambda,\gamma)$} We study the sensivity of the performance of DC-LA with respect to the smoothing parameter $\lambda$ and the step size $\gamma$. To this end, we sweep $\lambda \in \{10^{-4}, 3 \times 10^{-4}, 10^{-3}, 3 \times 10^{-3}, 10^{-2}, 3 \times 10^{-2}\}$ and $\gamma \in \{10^{-4}, 3 \times 10^{-4}, 10^{-3}, 3 \times 10^{-3}, 10^{-2}\}$. For each grid point $(\lambda,\gamma)$, we run $3000$ DC-LA chains of length $1000$ and keep the last sample of each chain. Figure \ref{fig:3x4} shows the performance of DC-LA when $(\lambda,\gamma)$ varies. We observe a broad region of stable performance for intermediate values of both parameters. The consistency across all 12 subplots suggests that while the absolute error scale changes (the color bar ranges), the optimal "sweet spot" for hyperparameters remains relatively stable, indicating that DC-LA is robust and does not require drastic retuning when the underlying data distribution (mean or correlation) shifts. In this experiment, we see that setting $\gamma$ 
 on the order of $\lambda$
 (but a factor smaller) provides a favorable balance, yielding stable behavior while still allowing the chain to mix quickly.

\begin{figure}
    \centering

    \begin{subfigure}[b]{0.244\textwidth}
        \centering
        \includegraphics[width=\textwidth]{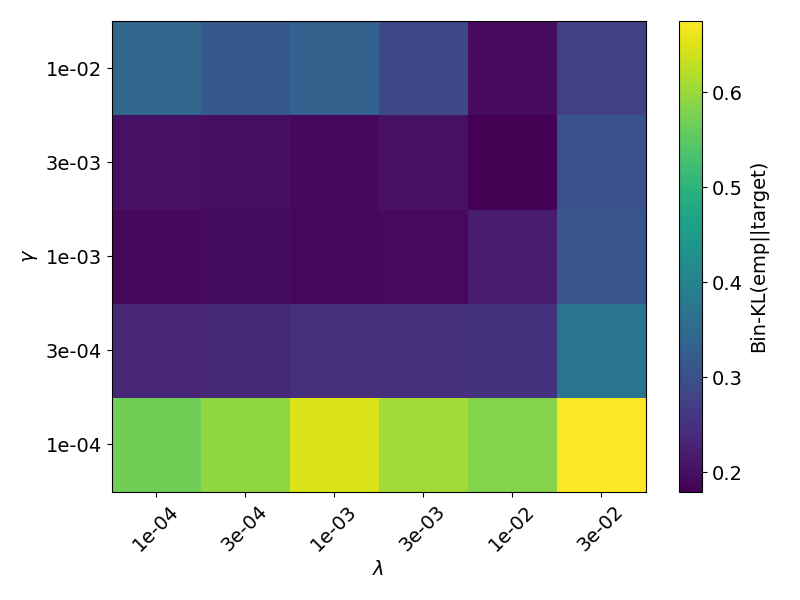}
        \caption{$\mu=[0,0]^{\top}, \Sigma=\begin{bsmallmatrix}
1 & 0 \\
0 & 1
\end{bsmallmatrix}$}
        \label{fig:1}
    \end{subfigure}
    \hfill
    \begin{subfigure}[b]{0.244\textwidth}
        \centering
        \includegraphics[width=\textwidth]{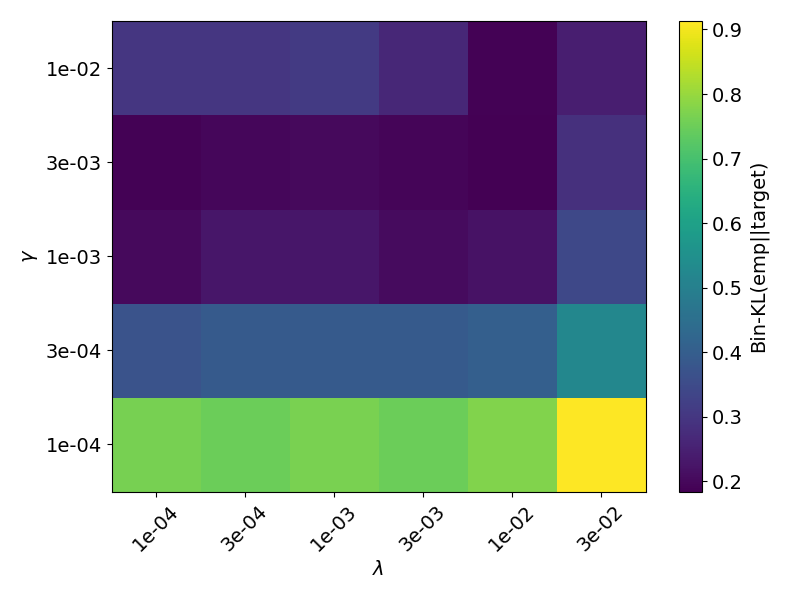}
        \caption{$\mu=[1,1]^{\top}, \Sigma=\begin{bsmallmatrix}
1 & 0 \\
0 & 1
\end{bsmallmatrix}$}
        \label{fig:2}
    \end{subfigure}
    \hfill
    \begin{subfigure}[b]{0.244\textwidth}
        \centering
        \includegraphics[width=\textwidth]{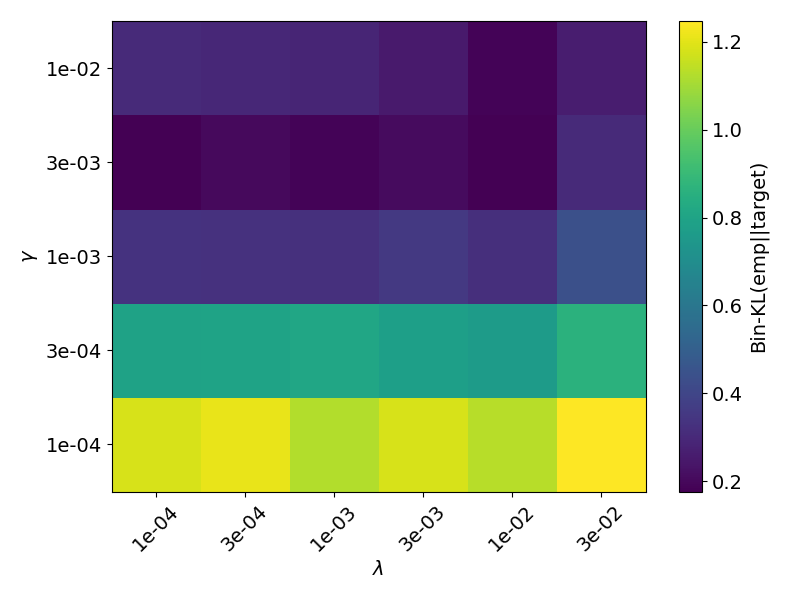}
        \caption{$\mu=[2,2]^{\top}, \Sigma=\begin{bsmallmatrix}
1 & 0 \\
0 & 1
\end{bsmallmatrix}$}
        \label{fig:3}
    \end{subfigure}
    \hfill
    \begin{subfigure}[b]{0.244\textwidth}
        \centering
        \includegraphics[width=\textwidth]{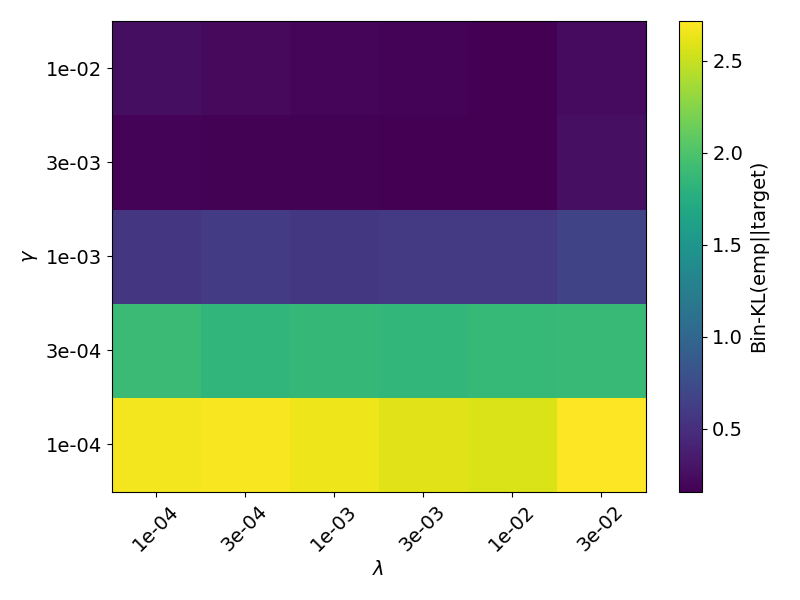}
        \caption{$\mu=[3,3]^{\top}, \Sigma=\begin{bsmallmatrix}
1 & 0 \\
0 & 1
\end{bsmallmatrix}$}
        \label{fig:4}
    \end{subfigure}

    \vspace{0.5em}

    \begin{subfigure}[b]{0.244\textwidth}
        \centering
        \includegraphics[width=\textwidth]{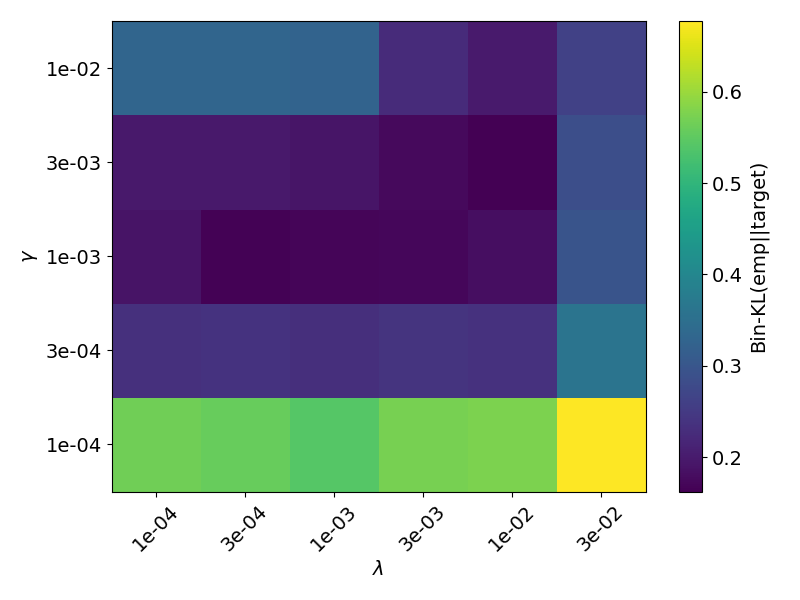}
        \caption{$\mu=[0,0]^{\top}, \Sigma=\begin{bsmallmatrix}
1 & 0.8 \\
0.8 & 1
\end{bsmallmatrix}$}
        \label{fig:5}
    \end{subfigure}
    \hfill
    \begin{subfigure}[b]{0.244\textwidth}
        \centering
        \includegraphics[width=\textwidth]{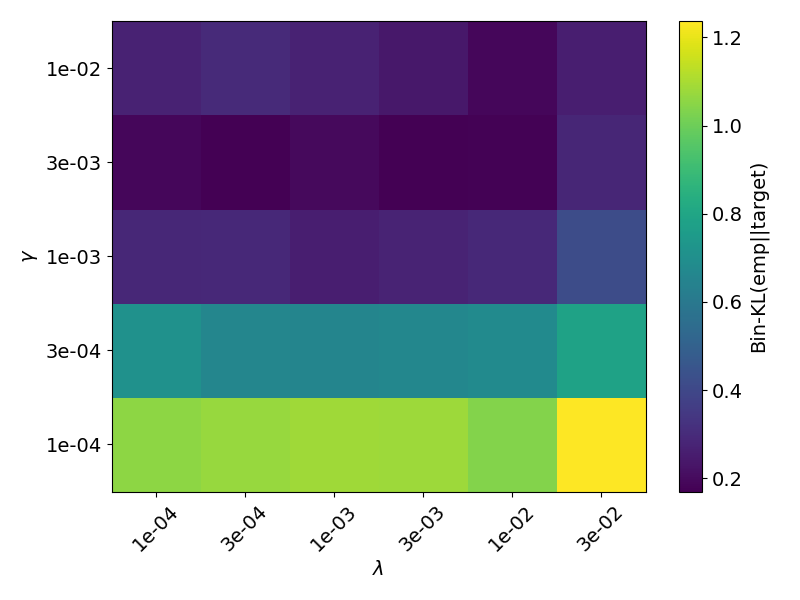}
        \caption{$\mu=[1,1]^{\top}, \Sigma=\begin{bsmallmatrix}
1 & 0.8 \\
0.8 & 1
\end{bsmallmatrix}$}
        \label{fig:6}
    \end{subfigure}
    \hfill
    \begin{subfigure}[b]{0.244\textwidth}
        \centering
        \includegraphics[width=\textwidth]{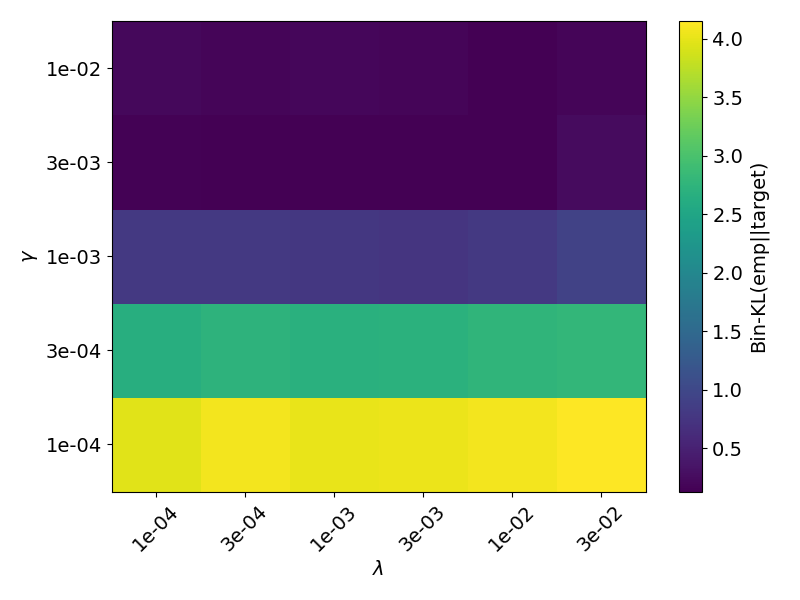}
        \caption{$\mu=[2,2]^{\top}, \Sigma=\begin{bsmallmatrix}
1 & 0.8 \\
0.8 & 1
\end{bsmallmatrix}$}
        \label{fig:7}
    \end{subfigure}
    \hfill
    \begin{subfigure}[b]{0.244\textwidth}
        \centering
        \includegraphics[width=\textwidth]{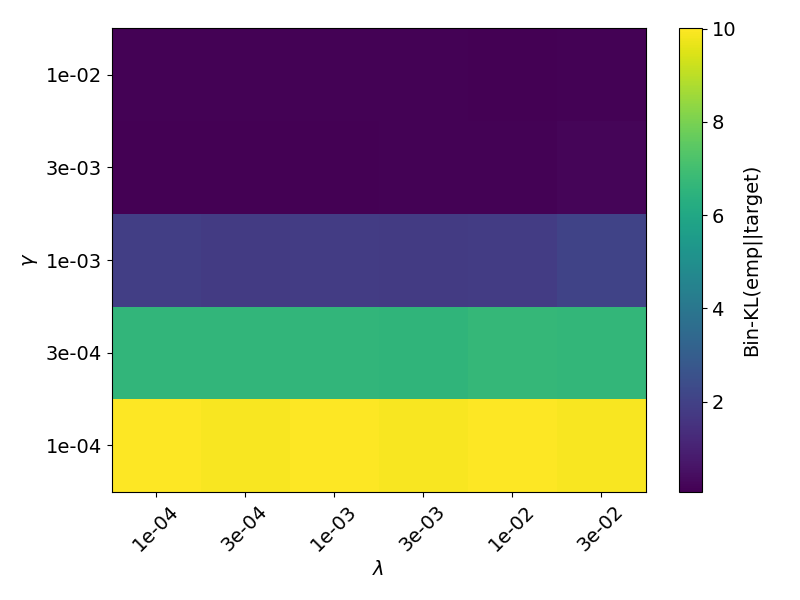}
        \caption{$\mu=[3,3]^{\top}, \Sigma=\begin{bsmallmatrix}
1 & 0.8 \\
0.8 & 1
\end{bsmallmatrix}$}
        \label{fig:8}
    \end{subfigure}

    \vspace{0.5em}

    \begin{subfigure}[b]{0.244\textwidth}
        \centering
        \includegraphics[width=\textwidth]{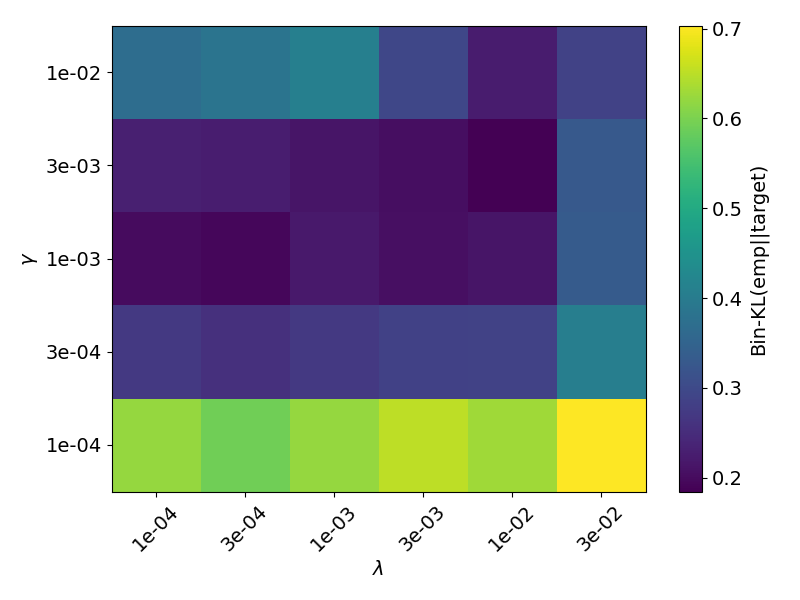}
        \caption{$\mu=[0,0]^{\top}, \Sigma=\begin{bsmallmatrix}
1 & -0.8 \\
-0.8 & 2
\end{bsmallmatrix}$}
        \label{fig:9}
    \end{subfigure}
    \hfill
    \begin{subfigure}[b]{0.244\textwidth}
        \centering
        \includegraphics[width=\textwidth]{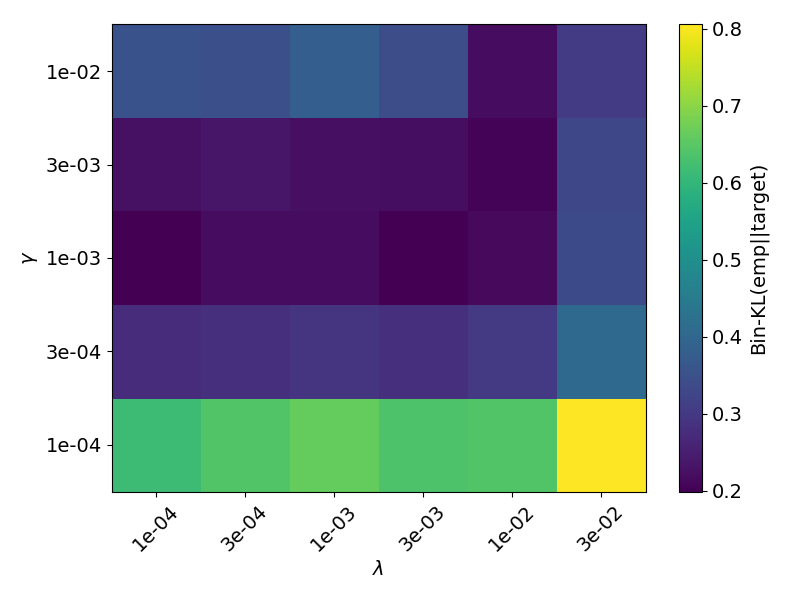}
        \caption{$\mu=[1,1]^{\top}, \Sigma=\begin{bsmallmatrix}
1 & -0.8 \\
-0.8 & 2
\end{bsmallmatrix}$}
        \label{fig:10}
    \end{subfigure}
    \hfill
    \begin{subfigure}[b]{0.244\textwidth}
        \centering
        \includegraphics[width=\textwidth]{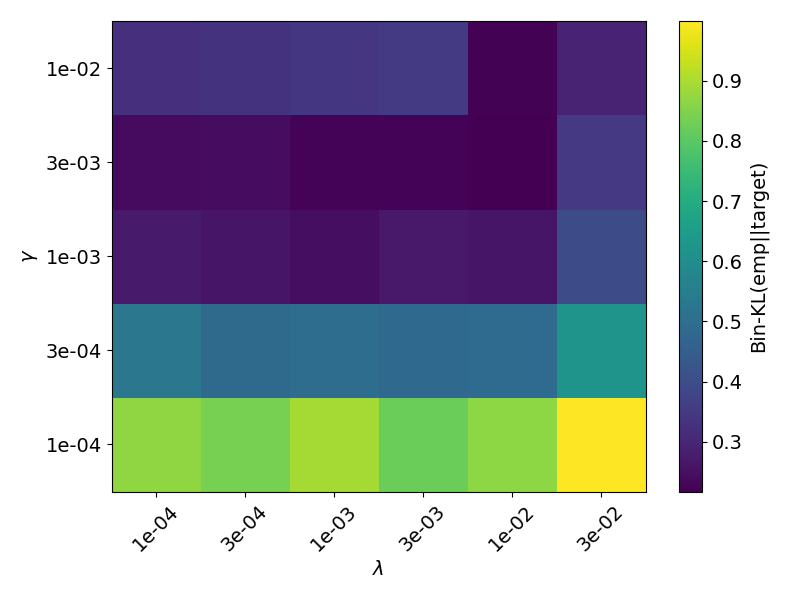}
        \caption{$\mu=[2,2]^{\top}, \Sigma=\begin{bsmallmatrix}
1 & -0.8 \\
-0.8 & 2
\end{bsmallmatrix}$}
        \label{fig:11}
    \end{subfigure}
    \hfill
    \begin{subfigure}[b]{0.244\textwidth}
        \centering
        \includegraphics[width=\textwidth]{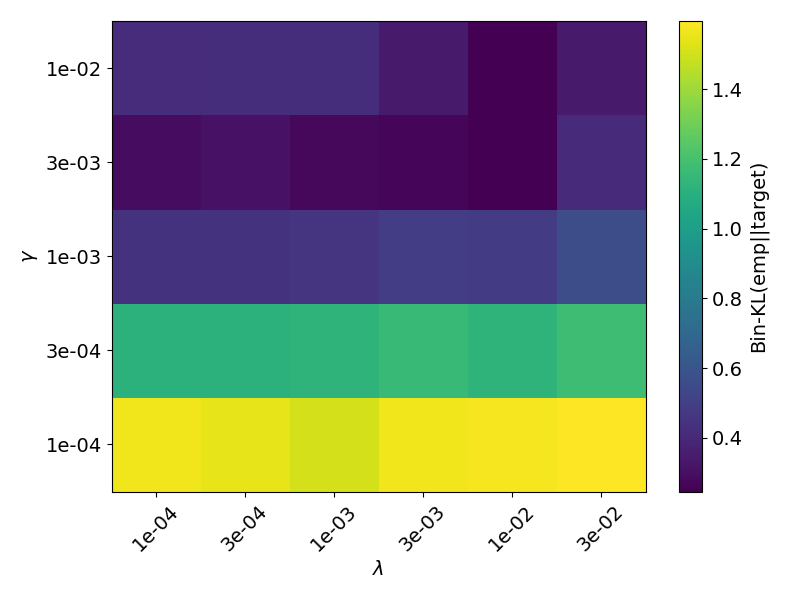}
        \caption{$\mu=[3,3]^{\top}, \Sigma=\begin{bsmallmatrix}
1 & -0.8 \\
-0.8 & 2
\end{bsmallmatrix}$}
        \label{fig:12}
    \end{subfigure}

    \caption{Ablation experiment analyzing the sensitivity of DC-LA performance to the hyperparameters $(\lambda, \gamma)$}
    \label{fig:3x4}
\end{figure}

\subsection{Computed Tomography with DCINNs prior}

\label{app:tomo}

We show in Figures \ref{figCT1}, \ref{figCT2}, \ref{figCT3}, and \ref{figCT4} some other results with the same settings as in Section \ref{sec:tomo}.

\begin{figure}[htbp]
    \centering
    \begin{subfigure}[t]{\appCTwidth}
        \centering
        \includegraphics[width=\linewidth]{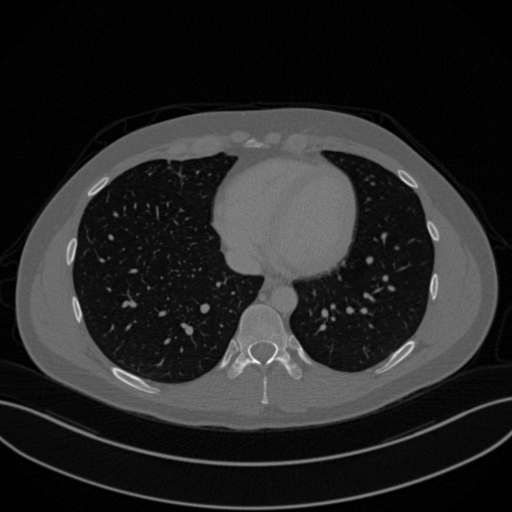}
        \caption{Ground truth}
    \end{subfigure}\hspace{0.02\textwidth}
    \begin{subfigure}[t]{\appCTwidth}
        \centering
        \includegraphics[width=\linewidth]{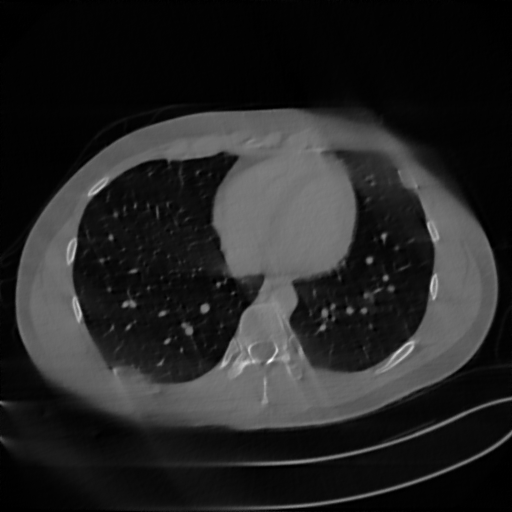}
        \caption{Estimated mean by DC-LA (SSIM: $0.8551$, PSNR: $26.9296$)}
    \end{subfigure}\hspace{0.02\textwidth}
    \begin{subfigure}[t]{\appCTwidth}
        \centering
        \includegraphics[width=\linewidth]{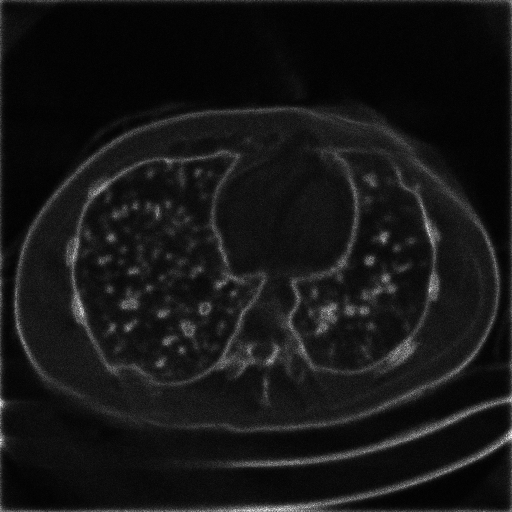}
        \caption{Estimated pixel-wise variance by DC-LA}
    \end{subfigure}

        \begin{subfigure}[t]{\appCTwidth}
        \centering
        \includegraphics[width=\linewidth]{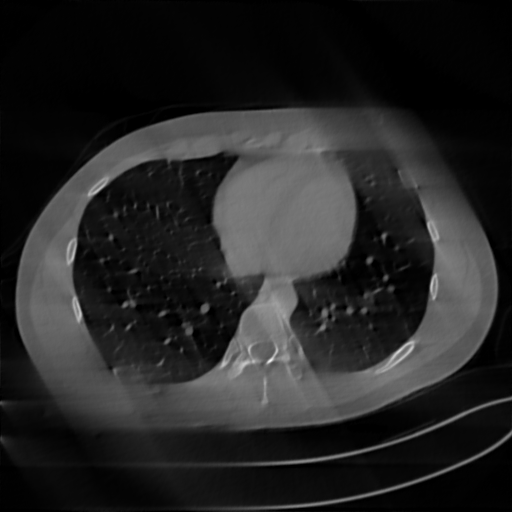}
        \caption{Estimated MAP by PSM at iteration $2000$ (SSIM: $0.7709$, PSNR: $24.7982$)}
    \end{subfigure}\hspace{0.02\textwidth}
    \begin{subfigure}[t]{\appCTwidth}
        \centering
        \includegraphics[width=\linewidth]{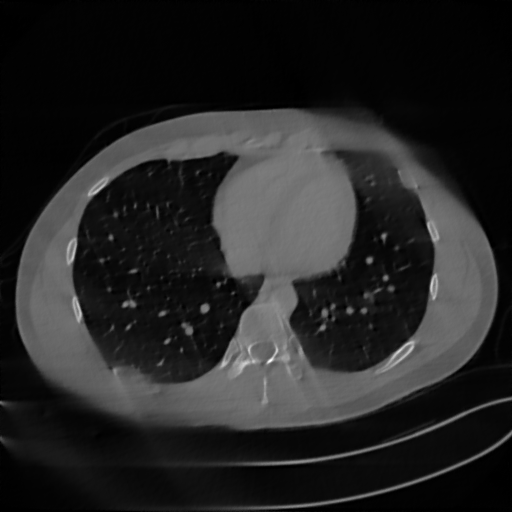}
        \caption{Estimated MAP by PSM at iteration 10000 (SSIM: $0.8548$, PSNR: $26.9454$)}
    \end{subfigure}\hspace{0.02\textwidth}
    \begin{subfigure}[t]{\appCTwidth}
        \centering
        \includegraphics[width=\linewidth]{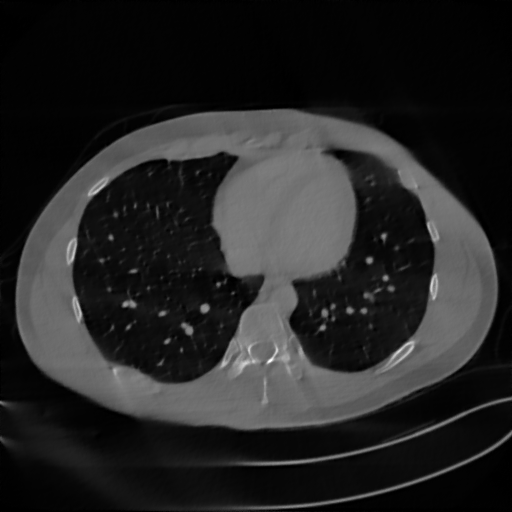}
        \caption{Estimated MAP by PSM at iteration 20000 (SSIM: $0.8547$, PSNR: $27.0340$)}
    \end{subfigure}

    \caption{Scan: L333\_FD\_1\_1.CT.0002.0010.2015.12.22.20.18.21.515343.358517203}
    \label{figCT1}
\end{figure}

\begin{figure}[htbp]
    \centering
    \begin{subfigure}[t]{\appCTwidth}
        \centering
        \includegraphics[width=\linewidth]{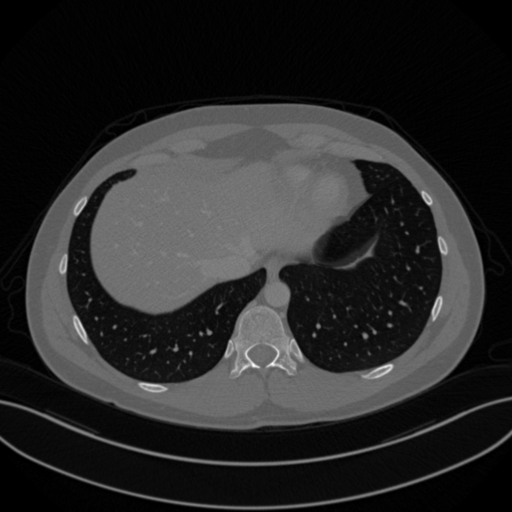}
        \caption{Ground truth}
    \end{subfigure}\hspace{0.02\textwidth}
    \begin{subfigure}[t]{\appCTwidth}
        \centering
        \includegraphics[width=\linewidth]{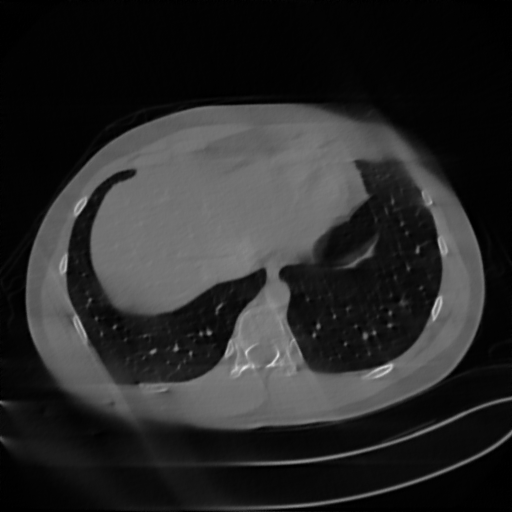}
        \caption{Estimated mean by DC-LA (SSIM: $0.8702$, PSNR: $27.4151$)}
    \end{subfigure}\hspace{0.02\textwidth}
    \begin{subfigure}[t]{\appCTwidth}
        \centering
        \includegraphics[width=\linewidth]{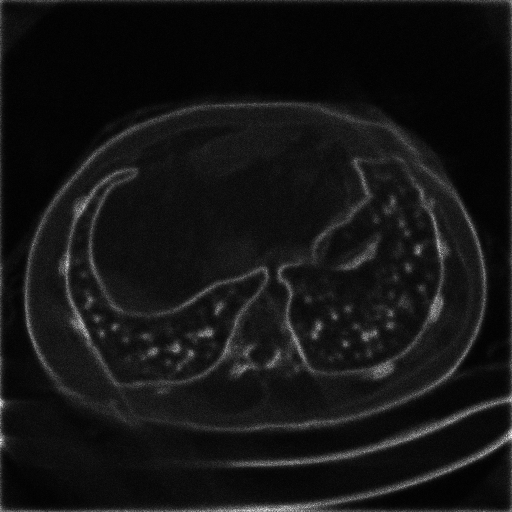}
        \caption{Estimated pixel-wise variance by DC-LA}
    \end{subfigure}

        \begin{subfigure}[t]{\appCTwidth}
        \centering
        \includegraphics[width=\linewidth]{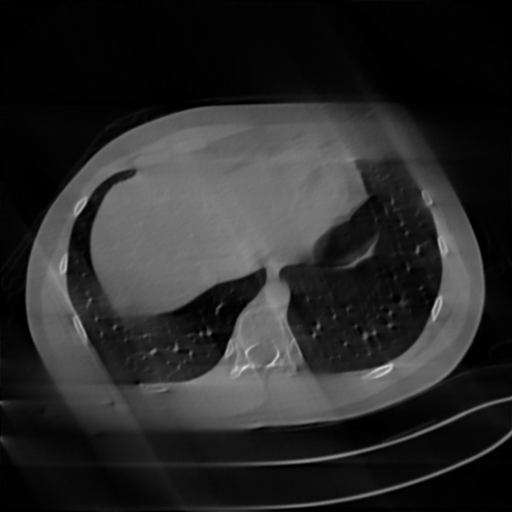}
        \caption{Estimated MAP by PSM at iteration $2000$ (SSIM: $0.7972$, PSNR: $25.1140$)}
    \end{subfigure}\hspace{0.02\textwidth}
    \begin{subfigure}[t]{\appCTwidth}
        \centering
        \includegraphics[width=\linewidth]{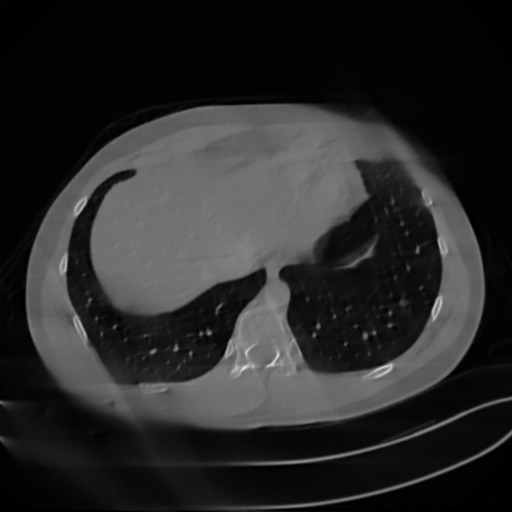}
        \caption{Estimated MAP by PSM at iteration 10000 (SSIM: $0.8702$, PSNR: $27.4418$)}
    \end{subfigure}\hspace{0.02\textwidth}
    \begin{subfigure}[t]{\appCTwidth}
        \centering
        \includegraphics[width=\linewidth]{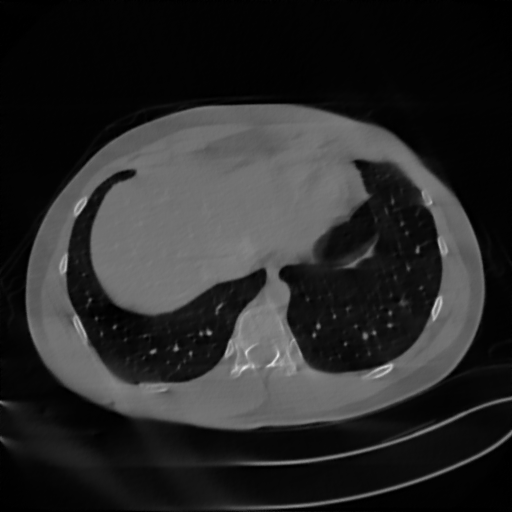}
        \caption{Estimated MAP by PSM at iteration 20000 (SSIM: $0.8712$, PSNR: $27.7555$)}
    \end{subfigure}
    \caption{Scan: L333\_FD\_1\_1.CT.0002.0050.2015.12.22.20.18.21.515343.358518163}
    \label{figCT2}
\end{figure}

\begin{figure}[htbp]
    \centering
    \begin{subfigure}[t]{\appCTwidth}
        \centering
        \includegraphics[width=\linewidth]{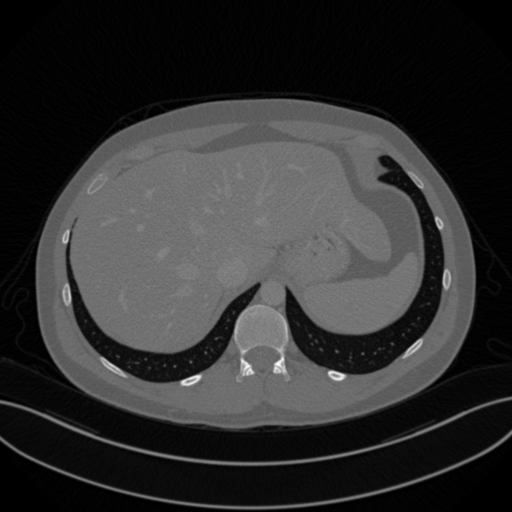}
        \caption{Ground truth}
    \end{subfigure}\hspace{0.02\textwidth}
    \begin{subfigure}[t]{\appCTwidth}
        \centering
        \includegraphics[width=\linewidth]{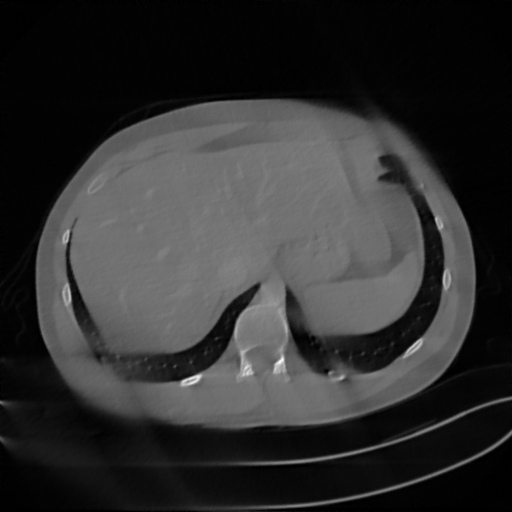}
        \caption{Estimated mean by DC-LA (SSIM: $0.8666$ , PSNR: $26.9199$)}
    \end{subfigure}\hspace{0.02\textwidth}
    \begin{subfigure}[t]{\appCTwidth}
        \centering
        \includegraphics[width=\linewidth]{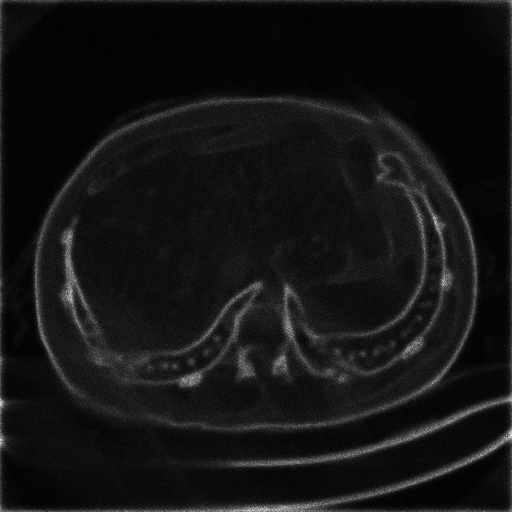}
        \caption{Estimated pixel-wise variance by DC-LA}
    \end{subfigure}

        \begin{subfigure}[t]{\appCTwidth}
        \centering
        \includegraphics[width=\linewidth]{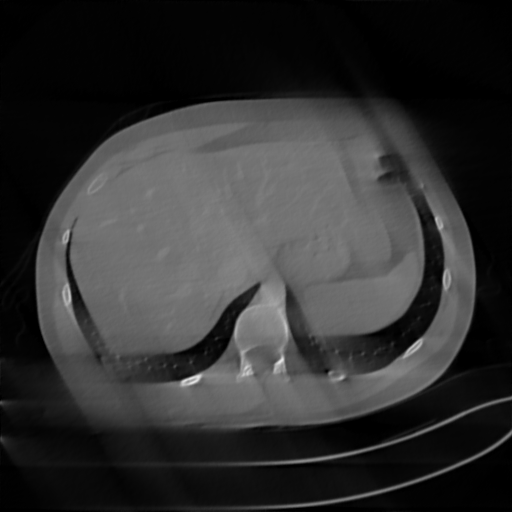}
        \caption{Estimated MAP by PSM at iteration $2000$ (SSIM: $0.7997$, PSNR: $25.1084$)}
    \end{subfigure}\hspace{0.02\textwidth}
    \begin{subfigure}[t]{\appCTwidth}
        \centering
        \includegraphics[width=\linewidth]{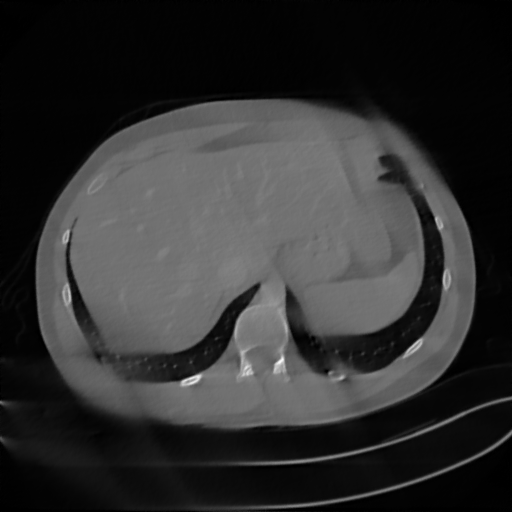}
        \caption{Estimated MAP by PSM at iteration 10000 (SSIM: $0.8664$, PSNR: $26.9521$)}
    \end{subfigure}\hspace{0.02\textwidth}
    \begin{subfigure}[t]{\appCTwidth}
        \centering
        \includegraphics[width=\linewidth]{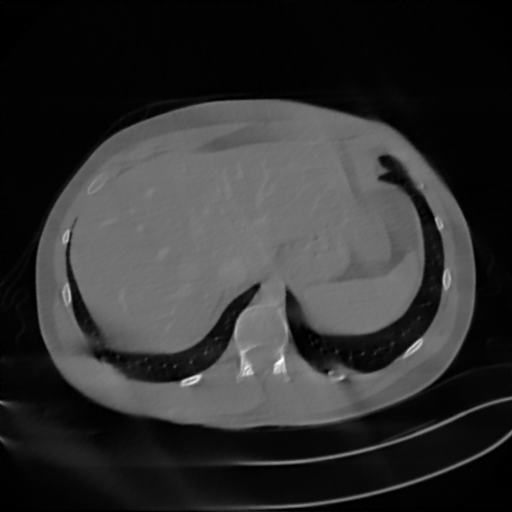}
        \caption{Estimated MAP by PSM at iteration 20000 (SSIM: $0.8641$, PSNR: $27.2395$)}
    \end{subfigure}
    \caption{Scan: L333\_FD\_1\_1.CT.0002.0080.2015.12.22.20.18.21.515343.358518883}
    \label{figCT3}
\end{figure}

\begin{figure}[htbp]
    \centering
    \begin{subfigure}[t]{\appCTwidth}
        \centering
        \includegraphics[width=\linewidth]{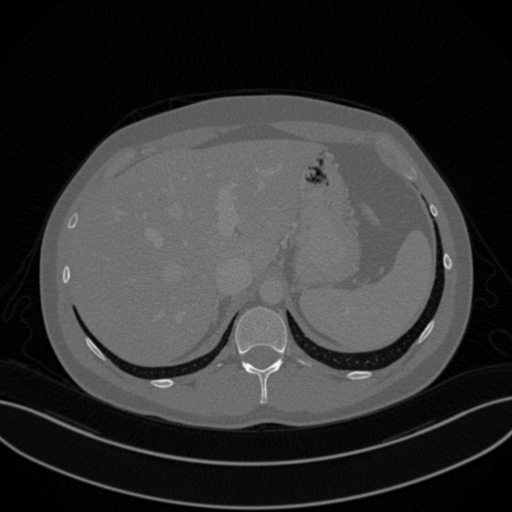}
        \caption{Ground truth}
    \end{subfigure}\hspace{0.02\textwidth}
    \begin{subfigure}[t]{\appCTwidth}
        \centering
        \includegraphics[width=\linewidth]{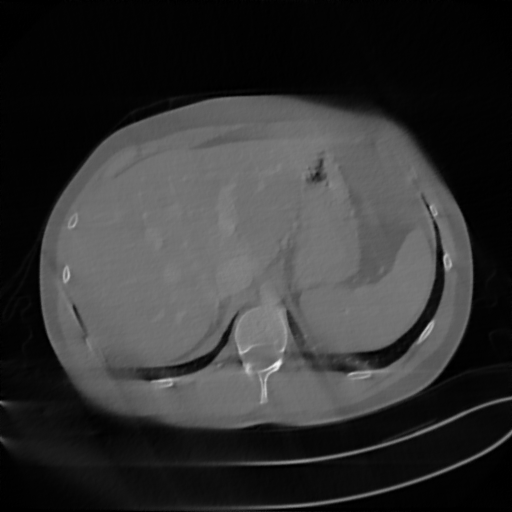}
        \caption{Estimated mean by DC-LA (SSIM: $0.8680$, PSNR: $27.6531$)}
    \end{subfigure}\hspace{0.02\textwidth}
    \begin{subfigure}[t]{\appCTwidth}
        \centering
        \includegraphics[width=\linewidth]{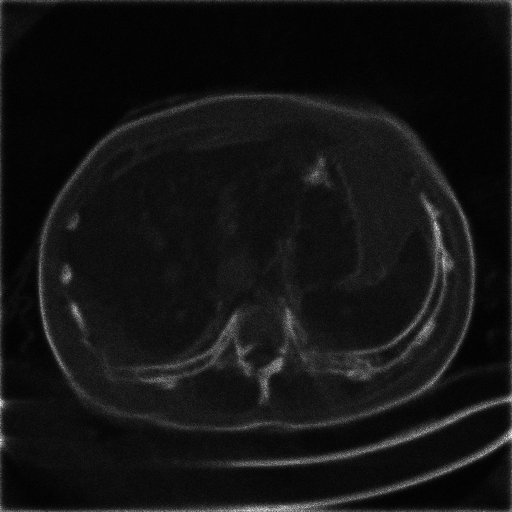}
        \caption{Estimated pixel-wise variance by DC-LA}
    \end{subfigure}

        \begin{subfigure}[t]{\appCTwidth}
        \centering
        \includegraphics[width=\linewidth]{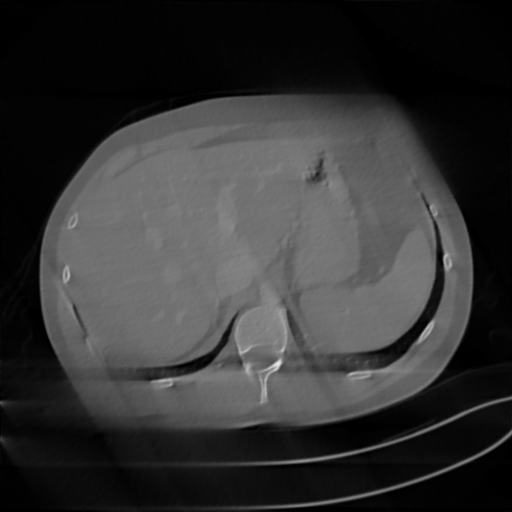}
        \caption{Estimated MAP by PSM at iteration $2000$ (SSIM: $0.8092$, PSNR: $26.1647$)}
    \end{subfigure}\hspace{0.02\textwidth}
    \begin{subfigure}[t]{\appCTwidth}
        \centering
        \includegraphics[width=\linewidth]{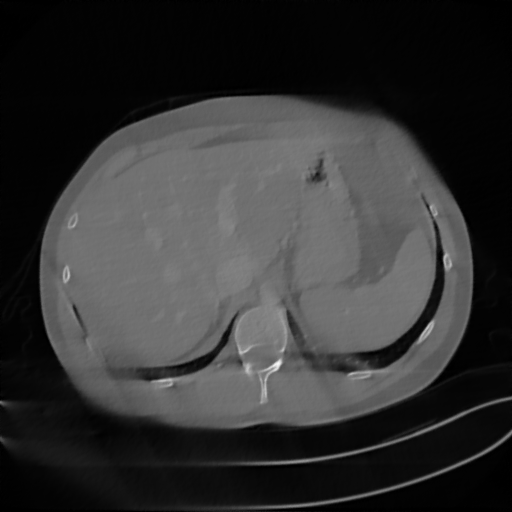}
        \caption{Estimated MAP by PSM at iteration 10000 (SSIM: $0.8672$, PSNR: $27.6856$)}
    \end{subfigure}\hspace{0.02\textwidth}
    \begin{subfigure}[t]{\appCTwidth}
        \centering
        \includegraphics[width=\linewidth]{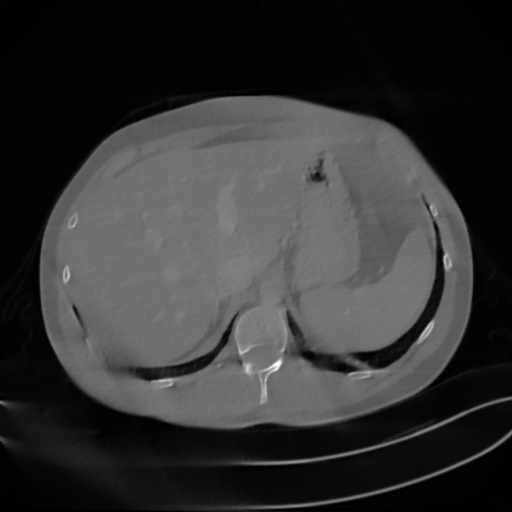}
        \caption{Estimated MAP by PSM at iteration 20000 (SSIM: $0.8633$, PSNR: $27.4784$)}
    \end{subfigure}
\caption{Scan: L333\_FD\_1\_1.CT.0002.0100.2015.12.22.20.18.21.515343.358519363}
\label{figCT4}
\end{figure}

\end{document}